\tikzset{cross/.style={cross out, very thick, draw=black, minimum size=2*(#1-\pgflinewidth), inner sep=0pt, outer sep=0pt},
cross/.default={5pt}}
\newcommand\rev[1]{\textcolor{black}{#1}}
\newcommand{\red}[1]{\rev{#1}}
\setlist[enumerate]{leftmargin=.5in}
\setlist[itemize]{leftmargin=.5in}
\newtheorem{theorem}{Theorem}
\newtheorem{proposition}[theorem]{Proposition}
\newtheorem{lemma}[theorem]{Lemma}
\newtheorem{corollary}[theorem]{Corollary}
\newtheorem{remark}{Remark}
\newcommand{\bx}{\bm{x}}
\newcommand{\by}{\bm{y}}
\newcommand{\bz}{\bm{z}}
\newcommand{\bw}{\bm{w}}
\newcommand{\bX}{\bm{X}}
\newcommand{\bY}{\bm{Y}}
\newcommand{\bZ}{\bm{Z}}
\newcommand{\bW}{\bm{W}}
\newcommand{\sx}{X}
\newcommand{\bsx}{\mathbf{\sx}}
\newcommand{\R}{\mathbb{R}}
\renewcommand{\d}{\mathrm{d}}
\newcommand{\balpha}{\boldsymbol{\alpha}}
\newcommand{\bmu}{\boldsymbol{\mu}}
\newcommand{\bnu}{\boldsymbol{\nu}}
\newcommand{\bchi}{\boldsymbol{\chi}}
\newcommand{\Rectifier}{\mathcal{R}}
\newcommand{\KR}{\textrm{KR}}
\newcommand{\ALG}{\texttt{ATM}\xspace}
\newcommand{\CKDE}{\texttt{CKDE}\xspace}
\newcommand{\NKDE}{\texttt{NKDE}\xspace}
\newcommand{\MDN}{\texttt{MDN}\xspace}
\newcommand{\KMN}{\texttt{KMN}\xspace}
\newcommand{\NF}{\texttt{NF}\xspace}
\DeclareMathOperator*{\argmin}{arg\,min}
\DeclareMathOperator*{\argmax}{arg\,max}
\DeclareMathOperator*{\esssup}{ess\,sup}
\DeclareMathOperator*{\essinf}{ess\,inf}
\begin{document}

\begin{frontmatter}

\title{On the representation and learning of monotone triangular transport maps}

\runtitle{Learning monotone triangular maps}

\begin{aug}
\author[a1]{\fnms{Ricardo} \snm{Baptista}%
\ead[label=e1]{rsb@caltech.edu}},
\author[a2]{\fnms{Youssef} \snm{Marzouk}%
\ead[label=e2]{ymarz@mit.edu}},
\and
\author[a3]{\fnms{Olivier} \snm{Zahm}%
\ead[label=e3]{olivier.zahm@inria.fr}}

\runauthor{Baptista, Marzouk, Zahm}

\address[a1]{California Institute of Technology\\
Pasadena, MA 91125 USA\\
\printead*{e1},
}
\address[a2]{Massachusetts Institute of Technology\\
Cambridge, MA 02139 USA\\
\printead*{e2},
}
\address[a3]{Universit\'e Grenoble Alpes, Inria, CNRS, Grenoble INP, LJK\\
38000, Grenoble, France\\
\printead*{e3}
}
\end{aug}

\begin{abstract}
Transportation of measure provides a versatile approach for modeling complex probability distributions, with applications in density estimation, Bayesian inference, generative modeling, and beyond. Monotone triangular transport maps---approximations of the Knothe--Rosenblatt (KR) rearrangement---are a canonical choice for these tasks.
Yet the representation and parameterization of such maps have a significant impact on their generality and expressiveness, and on properties of the optimization problem that arises in learning a map from data (e.g., via maximum likelihood estimation). 
We present a general framework for representing monotone triangular maps via invertible transformations of smooth functions. 
We establish conditions on the transformation such that the associated infinite-dimensional minimization problem has no spurious local minima, i.e., all local minima are global minima; and we show for target distributions satisfying certain tail conditions that the unique global minimizer corresponds to the KR map. Given a sample from the target, we then propose an adaptive algorithm that estimates a sparse semi-parametric approximation of the underlying KR map. We demonstrate how this framework can be applied to joint and conditional density estimation, likelihood-free inference, and structure learning of directed graphical models, with stable generalization performance across a range of sample sizes.
\end{abstract}

\begin{keyword}
\kwd{Knothe--Rosenblatt rearrangement}
\kwd{normalizing flows}
\kwd{monotone functions}
\kwd{infinite-dimensional optimization}
\kwd{adaptive approximation}
\kwd{multivariate polynomials}
\kwd{wavelets}
\kwd{density estimation}
\end{keyword}

\begin{keyword}[class=MSC]
\kwd{65C20} %
\kwd[, ]{49Q22} %
\kwd[, ]{62G07} %
\kwd[, ]{41A10} %
\end{keyword}

\end{frontmatter}

\setcounter{tocdepth}{1}
\tableofcontents

\section{Introduction} \label{sec:introduction}

Many sampling, estimation, and inference algorithms seek to characterize a somehow intractable or complex %
probability distribution $\bmu$ on $\R^d$. Transportation of measure provides a useful and versatile approach to this problem. The underlying idea is to construct a coupling of $\bmu$ with a tractable ``reference'' distribution $\bnu$ on $\R^d$---for instance, a standard normal. Formally, one jointly constructs a pair of random variables $(\bX, \bZ)$ such that $\bX \sim \bmu$ and $\bZ \sim \bnu$. %
A special class of couplings is given by \emph{deterministic} transformations $S\colon \R^{d} \rightarrow \R^{d}$ such that $S(\bX) = \bZ$ in distribution, and a transformation $S$ that satisfies this property is called a \emph{transport map}~\cite{villani2008}. As a result, if the transport map is invertible, one can generate realizations of $\bX$ by first simulating $\bZ$. If $\bmu$ and $\bnu$ have densities $\pi$ and $\eta$
with respect to a common base measure, 
one can also explicitly represent the target density $\pi$ as a transformation of the reference density $\eta$. 
The construction of such transport maps has found numerous applications: density estimation \cite{tabak2013family,anderes2012general,dinh2017,huang2020convex}; variational Bayesian inference \cite{el2012bayesian,rezende2015,bigoni2019greedy}; generative modeling of images, video, and other structured objects \cite{oord2017parallel, kingma2018}; likelihood-free inference \cite{papamakarios2016fast,radev2020bayesflow,lueckmann2021benchmarking}; and beyond.

In general, there exist infinitely many transport maps between two absolutely continuous distributions on $\R^d$. In this paper, we focus on a specific, canonical choice:  \emph{triangular} transport maps \cite{bogachev2005triangular} of the form
\begin{equation}\label{eq:increasingMaps}
 S(\bm{x})=
 \begin{bmatrix*}[l]
  S_{1}(x_{1}) \\ S_{2}(x_{1},x_{2}) \\ ~~\vdots \\S_{d}(x_{1},\hdots,x_{d})
 \end{bmatrix*},
\end{equation}
where each component function $S_k$ depends only on the $k$ variables $\bm{x}_{\leq k} \coloneqq (x_1,\hdots,x_{k})$ and is monotone increasing with respect to the last input $x_k$. In particular, each function $S_k$ encodes an increasing rearrangement \cite{villani2008} between ordered marginal conditionals of $\bmu$ and $\bnu$, i.e., $\bmu(\d x_k \vert \bx_{<k})$ and $\bnu(\d z_k \vert \bz_{<k})$. Later we will discuss properties of such triangular maps---known also as Knothe--Rosenblatt (KR) rearrangements \cite{villani2008,rosenblatt1952,santambrogio2015optimal}---more precisely, but we comment here on their utility. First of all, triangular structure facilitates computational tractability: $S$ is easy to invert and the determinant of its Jacobian (a lower triangular matrix) is easy to evaluate (see Section~\ref{sec:background}). Triangular maps have thus been used extensively in the density estimation, inference, and generative modeling applications noted above. For example, triangular maps are the building blocks of many \emph{normalizing flows}, popularized by the machine learning community \cite{kobyzev2020normalizing,papamakarios2021normalizing}; specifically, autoregressive normalizing flows define triangular maps \cite{jaini2019} via particular structural choices and parameterizations, and compose these maps to produce more expressive transformations. More fundamentally, because triangular maps expose certain conditionals of $\bmu$, they are particularly well suited to \emph{conditional density estimation} and \emph{conditional sampling}; we will describe this link in Section~\ref{sec:background}. Triangular maps also inherit \emph{sparsity} from the conditional independence properties of $\bmu$ and $\bnu$, as detailed in \cite{spantini2018inference}. %

It is worth noting, of course, that other canonical choices of transport have different attractive features. For instance, optimal transport maps are invariant under relabeling of the coordinates or more general isometries on $\mathbb{R}^d$ (unlike triangular maps), and have deep links to partial differential equations \cite{villani2008}. But optimal transport maps are in general more challenging to represent, evaluate, and estimate from data in the continuous setting; moreover, they do not enjoy such a direct link to conditioning or to graphical models. %

Many representations and finite-dimensional parameterizations of monotone triangular maps have been proposed in recent years. These include representations based on polynomials~\cite{marzouk2016sampling,jaini2019}, radial basis functions~\cite{tabak2013family}, neural networks of varying capacity~\cite{dinh2017,kingma2018}, and tensor decompositions~\cite{cui2021deep,cui2021conditional}. A core challenge in this setting is to satisfy the monotonicity constraint $\partial_{x_k} S_k > 0$. For instance, one might enforce the monotonicity constraint at a finite collection of points in the support of $\pi$~\cite{parno2018transport}, but this approach 
cannot in general guarantee that $S$ is monotone over the entire support of $\pi$. %
Other efforts have sought to enforce monotonicity by construction---via the parameterization of $S$ itself. For example,~\cite{papamakarios2017masked} employs map components with affine dependence on the last variable, i.e., $S_k(\bx_{\leq k}) = \alpha(\bx_{< k}) + \exp(\beta(\bx_{< k}))x_{k}$, where $\alpha$ and $\beta$ are neural networks. While $S$ is then guaranteed to be monotone, it can only represent a restricted class of distributions $\bmu$. (If $\bnu$ is Gaussian, then $\bmu$ can only be a product of Gaussian marginal conditionals.) Such representations therefore cannot consistently approximate the \emph{true} KR rearrangement for general $\bmu$. 
Recent work~\cite{teshima2020coupling} has shown that a \textit{composition} of such affine maps, interleaved with rotations and permutations, can approximate a general class of distributions, though approximation rates remain unknown. The required rotations or permutations break the overall triangular structure of the transformation, however.
Alternatively, to increase the ``expressiveness'' of a given triangular function, \cite{wehenkel2019, huang2018neural, durkan2019neural} have introduced more general parametric representations of the monotone component functions $S_{k}$.
For distributions with analytic densities on \emph{bounded domains}, a complete approximation theory for the KR map was recently developed in \cite{zech2021sparse,zech2022sparse}, using polynomial or ReLU neural network representations of range-constrained monotone triangular functions.

Despite these myriad proposals, relatively little attention has been paid to the structure and tractability of the \emph{optimization problem} involved in learning triangular maps (e.g., in estimating a map given an i.i.d.\ sample $\{\bX^i\}$ from $\bmu$). Properties of this optimization problem are intimately tied to the representation and parameterization of the associated map. It is desirable to have a flexible and general representation---one that can consistently recover the KR rearrangement for a broad class of distributions $(\bmu, \bnu)$---that at the same time makes optimization tractable. It is also desirable to have adaptivity: a parameterization whose size or complexity can be adapted to properties of the target distribution and the available sample size, for good empirical statistical performance.

This paper directly addresses these desiderata. We do so by developing and analyzing a \emph{functional framework} for representing and learning triangular maps. Our main contributions are as follows. We propose a general representation of monotone triangular functions, based on a \emph{rectification} operator $\mathcal{R}_k$ that transforms sufficiently smooth non-monotone functions $f_k: \R^k \to \R$ into monotone component functions $S_k$ of a triangular map.
This operator takes the form
\begin{equation}\label{eq:rectifierintro}
 \mathcal{R}_k(f_k)(\bx_{\leq k}) = f_k(\bx_{<k},0) + \int_0^{x_k} g\big(\partial_{k} f_k(\bx_{<k},t)\big)\mathrm{d}t ,
\end{equation}
where $g \colon \R \to \R_{>0}$ is a positive function.
We then analyze the infinite-dimensional optimization problem associated with learning maps from data, recast as optimization over functions $\{f_k\}_{k=1}^d$. Specifically, we establish 
conditions on the rectification operator and on the target distribution 
such that the resulting optimization problem is well-posed, smooth, and has \emph{no spurious local minima}. Under further conditions on the target distribution (essentially that it has Gaussian tails), we show that the optimization problem has a \emph{unique global minimizer} corresponding to the KR map. %

These theoretical results guarantee, in practice, fast and reliable learning of monotone triangular maps given an appropriate function space $V_k$ for each $f_k$. The second main contribution of our paper is algorithmic: given a hierarchical basis (e.g., polynomials or wavelets) for each $V_k$, we propose a greedy adaptive procedure to learn parametric representations of $f_k$. The procedure naturally produces map representations that are \emph{sparse} and interpretable---in particular, it exploits and implicitly discovers {conditional independence}. We use these learned maps for density estimation, given an 
i.i.d.\ sample $\{\bX^i\}$ from $\pi$. Maintaining a strict triangular structure also exposes marginal conditionals of the target density, thus enabling \emph{conditional density estimation}. Our numerical experiments show that the algorithm provides robust performance at small-to-moderate sample sizes, and constitutes a \emph{semi-parametric} approach that naturally links map complexity to the size of the data.

The remainder of the paper is organized as follows. Section~\ref{sec:background} recalls properties of triangular transport maps and introduces some estimation problems of interest. Our main theoretical contributions are in Section~\ref{sec:monotone_maps}, which introduces a framework for representing monotone triangular maps and analyzes the resulting optimization problem. Section~\ref{sec:ATM} then introduces our greedy adaptive procedure for learning maps, and Section~\ref{sec:experiments} contains numerical experiments. Proofs of certain theoretical results are deferred to the appendix.

\section{Triangular transport for density estimation and simulation} \label{sec:background}
Consider the unsupervised learning problem of approximating a target probability density function $\pi$ defined on $\mathbb{R}^d$, given an i.i.d.\ sample from $\pi$.
Our goal is to construct a sufficiently smooth and invertible map $S\colon\mathbb{R}^d\rightarrow \mathbb{R}^d$ such that the pullback density 
\begin{equation} \label{eq:pullback_density}
 S^{\sharp}\eta (\bx)= \eta  \left ( S(\bm{x})  \right ) |\det \nabla S(\bx)|,
\end{equation}
is a good approximation to $\pi$, where $\eta$ is a simple/tractable probability density function on $\mathbb{R}^d$.
The choice of $\eta$ is a degree of freedom of the method, and here we take $\eta(\bx)\propto\exp(-\|\bx\|^2/2)$; 
i.e., $\eta$ is the density of the standard normal distribution on $\mathbb{R}^d$, where $\|\cdot\|$ is the canonical norm of $\R^d$. %

To ensure invertibility of $S$, a common practice is to constrain $S$ to be an increasing lower triangular map of the form \eqref{eq:increasingMaps},
where each component $S_{k}\colon\mathbb{R}^k\rightarrow\mathbb{R}$ is such that $x_k\mapsto S_{k}(\bx_{<k},x_k)$ is increasing for all $\bm{x}_{<k}=(x_1,\hdots,x_{k-1})\in\mathbb{R}^{k-1}$.
Such a map is easy to invert\footnote{For any $\bm{z}\in\mathbb{R}^d$, $\bm{x}=S^{-1}(\bz)$ can be computed recursively as $x_{k}=T^{k}(\bx_{< k},z_k)$ for $k=1,\dots,d$, where the function $T^{k}(\bx_{< k},\cdot)$ is the inverse of $x_k\mapsto S_{k}(\bx_{< k},x_k)$. \red{In practice, evaluating $T^{k}$ requires solving a root-finding problem which is guaranteed to have a unique (real) root, and for which the bisection method converges geometrically fast. Therefore, $S^{-1}(\bz)$ can be evaluated to machine precision in negligible computational time.
}} %
and has a lower triangular Jacobian $\nabla S(\bx)$ so that $|\det\nabla S(\bx)| = \prod_{k=1}^{d} \partial_{k} S_{k}(\bx_{\leq k})$ is readily computable.
\red{This is a major benefit of working with monotone triangular maps.}
This structure in fact corresponds to the \emph{Knothe--Rosenblatt} (KR) rearrangement 
$S_{\text{KR}}$: the increasing lower triangular map satisfying
$$
 \pi(\bx)=S_{\text{KR}}^{\sharp}\eta(\bx).
$$
For a measure $\bmu$ on $\R^d$ that is absolutely continuous with respect to a Gaussian measure $\bnu$ (and hence has a density $\pi$ with respect to the Lebesgue measure), %
the KR rearrangement $S_{\KR}$ exists and is the \emph{unique} map of the form~\eqref{eq:increasingMaps} that \emph{pulls back} $\eta$ to $\pi$ (or equivalently \emph{pushes forward} $\pi$ to $\eta$), up to sets of measure zero~\cite{bogachev2005triangular}.

A useful way to measure the difference between $\pi$ and its approximation $S^{\sharp}\eta$ is the Kullback--Leibler (KL) divergence $\mathcal{D}_\text{KL}(\pi || S^{\sharp}\eta) = \int\log(\pi/S^{\sharp}\eta)\d\pi$. As explained below, this choice has direct links to maximum likelihood estimation of $S$. 
Furthermore, the following inequality shows that convergence in the KL sense $\mathcal{D}_\text{KL}(\pi || S^{\sharp}\eta) \rightarrow0$ implies convergence of $S$ towards $S_{\mathrm{KR}}$ in the $L^2_\pi$ sense.
This result is a direct consequence of Corollary 3.10 in 
\cite{bogachev2005triangular}; see~\Cref{proof:KLboundsL2norm} for a proof. 
\begin{proposition} \label{prop:KLboundsL2norm}
Let $S_{\KR}$ be the KR rearrangement pushing forward a distribution with density $\pi$ on $\R^d$ to the standard normal distribution on $\R^d$, with density $\eta$. For any map $S:\R^d\rightarrow\R^d$ as in \eqref{eq:increasingMaps}, we have
\begin{equation} \label{eq:KLboundsL2norm}
  \int\|S_{\mathrm{KR}}(\bx)-S(\bx)\|^2\d\pi(\bx) \leq 2 \mathcal{D}_\text{KL}(\pi || S^{\sharp}\eta).
\end{equation}
\end{proposition} %

Since the standard normal density $\eta$ is a product of its marginal densities, 
the KL divergence decomposes as
\begin{align} \label{eq:KL_decomposition}
 \mathcal{D}_\text{KL}(\pi || S^{\sharp}\eta)
 =\sum_{k=1}^d \mathcal{J}_{k}(S_{k}) -\mathcal{J}_{k}( S_{\text{KR},k}),
\end{align}
where the functionals $\mathcal{J}_{1},\hdots,\mathcal{J}_{d}$ are given by
\begin{equation}\label{eq:opt_map_component}
 \mathcal{J}_{k}(s) = 
\int \left( \frac{1}{2}s(\bx_{\leq k})^2-\log\left|\partial_k s(\bx_{\leq k})\right | \right)\pi(\bx)\d\bx.
\end{equation}
Minimizing the KL divergence~\eqref{eq:KL_decomposition} over triangular maps $S$ of the form \eqref{eq:increasingMaps} is therefore equivalent to \emph{independently} minimizing each objective functional $\mathcal{J}_k$ to find the associated map component $S_k$~\cite{marzouk2016sampling}. Solution of these optimization problems is thus embarrassingly parallel. This parallel structure was also exploited for Cholesky factorization via KL minimization in~\cite{schafer2021sparse}. %
In addition, minimizing each objective $s\mapsto \mathcal{J}_k(s)$ over functions $s\colon \mathbb{R}^k\rightarrow\mathbb{R}$ that are strictly increasing in the last variable is a \emph{strictly convex} optimization problem; see~\Cref{lem:convexity}.

Given an i.i.d.\ sample $\{\bsx^i\}_{i=1}^{n}$ from $\pi$, we can replace the expectation in~\eqref{eq:opt_map_component} by the sample average, which yields the objective %
\begin{equation}\label{eq:empirical_objective}
\widehat{\mathcal{J}}_k(s) = \frac{1}{n} \sum_{i=1}^{n} \left(\frac{1}{2}s(\bsx^i_{\leq k})^2-\log\left|\partial_{k} s(\bsx^i_{\leq k})\right | \right).
\end{equation}
Minimizing~\eqref{eq:empirical_objective} under the constraint $\partial_k s(\bx_{\leq k}) > 0$ produces an estimator $\widehat{S}_k$ of $S_{\text{KR},k}$. The collection of all such component functions defines an estimator $\widehat{S} = (\widehat{S}_1, \ldots, \widehat{S}_d)$ of the KR rearrangement, along with an estimate of the density $\pi$ as $\widehat{\pi}(\bx) \coloneqq \widehat{S}^{\sharp}\eta(\bx)$. This $\widehat{S}$ is also the maximum likelihood estimator of $S_{\text{KR}}$, i.e.,  
$$
\widehat{S} = \argmax_{\{ S \in \text{\eqref{eq:increasingMaps}}, \ \partial_k S_k > 0, \ k=1, \ldots, d \} }
\sum_{i=1}^n \log S^\sharp \eta(\bX^i),
$$
with the optimization being over the space of monotone increasing and triangular maps of the form~\eqref{eq:increasingMaps}. 
This connection between maximum likelihood estimation and minimization of an empirical forward KL divergence is standard. 
Furthermore, convexity of the optimization objective is preserved when replacing the expectation in~\eqref{eq:opt_map_component} by the sample average in \eqref{eq:empirical_objective}.

A core question when estimating maps and densities by minimizing \eqref{eq:empirical_objective} is how to parameterize sufficiently expressive monotone map components $S_k$---i.e., maps capable of representing a wide class of distributions $\pi$---while ensuring that the optimization problem can be solved efficiently. As explained in the introduction, this question is intimately tied to the monotonicity constraint $\partial_k S_k(\bx_{\leq k}) > 0$.
For example, choosing a linear parameterization for $S_k$ that admits only affine dependence on $x_k$ (to easily enforce monotonicity) allows the map component be identified efficiently through the solution of a least-squares problem (see \cite[Appendix~A]{spantini2019coupling}), but such maps can only capture distributions that factor into a sequence of Gaussian marginal conditionals. 
On the other hand, a more complex ansatz for $S_k$ will often yield a much more difficult optimization problem. Note that with any nonlinear parameterization of $S_k$, the convexity of \eqref{eq:opt_map_component} and \eqref{eq:empirical_objective} with respect to $S_k$ does not in general yield convexity in the parameters. 
As we shall demonstrate later, many nonlinear parameterizations that enforce monotonicity yield optimization problems that may not even be smooth, and that have many local minima. We will address these issues in \Cref{sec:monotone_maps}.

\medskip

\paragraph{Conditional density estimation and sampling} Another important feature of the triangular structure~\eqref{eq:increasingMaps} is that each component of the map represents one marginal conditional density of $\pi$. More precisely, in the present setting where $\eta$ is a product density, $S_{\text{KR},k}$ pushes forward the marginal conditional $\pi_k(x_k|\bx_{<k})$ to the $k$-th marginal of the reference $\eta_k(z_k)$ \cite[Section 2.3]{santambrogio2015optimal}.
Now partition $\bx = (\by, \bw)$, where $\by \in \mathbb{R}^{m}$ and $\bw \in \mathbb{R}^{p}$, with $d=m+p$. This property of the KR map lets us estimate the conditional probability density function $\pi(\bw|\by)$, for any value of $\by$, given a sample $\{(\bY^i,\bW^i)\}_{i=1}^{n}$ from the joint density $\pi(\by,\bw)$. Observe that the KR map immediately has the block structure:
\begin{equation}\label{eq:ConditionalMaps}
    S(\by,\bw) = \begin{bmatrix*}[l] S^{\mathcal{Y}}(\by) \\ S^{\mathcal{W}}(\by,\bw) \end{bmatrix*},
\end{equation} 
where $\by \mapsto S^{\mathcal{Y}}(\by)\colon\mathbb{R}^m\rightarrow\mathbb{R}^m$ and $\bw \mapsto S^{\mathcal{W}}(\by^\ast,\bw)\colon\mathbb{R}^p\rightarrow\mathbb{R}^p$ are increasing lower triangular maps, the latter for any $\by^
{\ast}\in\mathbb{R}^m$. Recall that the reference density $\eta$ is the standard normal on $\R^{d}$, and thus is a product of two standard normals, $\eta_{1}(\bz')\eta_{2}(\bz)$, where $\bz' \in \R^m$ and $\bz \in \R^p$. Using a KR map of the form \eqref{eq:ConditionalMaps}, we can write the marginal density of $\bY$ as $\pi(\by)=(S_{\text{KR}}^{\mathcal{Y}})^{\sharp}\eta_1(\by)$ and, more interestingly, the conditional density of $\bW$ as $ \pi(\bw|\by) = S^{\mathcal{W}}_{\text{KR}}(\by,\cdot)^{\sharp}\eta_2(\bw)$.

Each of the last $p$ components of the KR rearrangement, $S^{\mathcal{W},k}_{\text{KR}}(\by,\bw_{\leq k})$, $1\leq k\leq p$, can be estimated from a sample $\{(\bY^i,\bW^i)\}_{i=1}^{n}$ by minimizing 
\begin{equation}\label{eq:empirical_objective_conditional}
 \widehat{\mathcal{J}}_k(s) = \frac{1}{n}\sum_{i=1}^n\left( \frac{1}{2}s(\bY^i,\bW^i_{\leq k})^2-\log\left|\partial_{m+k} s(\bY^i,\bW^i_{\leq k})\right | \right),
\end{equation}
under the constraints $\partial_{m+k} s(\by,\bw_{\leq k})>0$. This produces an estimator $\widehat{S}^{\mathcal{W}}$ of $S_{\text{KR}}^{\mathcal{W}}$, which in turn yields an estimator of the conditional density $\pi(\bw|\by)$ as $\widehat{\pi}(\bw|\by) \coloneqq \widehat{S}^{\mathcal{W}}(\by,\cdot)^{\sharp}\eta_{2}(\bw)$.
This property has been used to perform conditional density estimation (CDE) in~\cite{marzouk2016sampling,papamakarios2016fast, kovachki2020conditional,cui2021conditional}. 

One important application of CDE is likelihood-free inference, where $\bW$ represents a parameter to be inferred and $\bY$ are data whose conditional density $\pi(\by \vert \bw)$ is computationally intractable or unavailable in closed form. This setting arises in many applications: inference in stochastic models, or inference in the presence of high-dimensional nuisance parameters or latent variables (including parameter inference for state-space models).
Given a joint sample  $\{(\bY^i,\bW^i)\}_{i=1}^{n}$, we can estimate the map component $S^{\mathcal{W}}$ and use it to simulate from the estimated conditional density $\widehat{\pi}(\bw|\by^*)$ given any realization of the data $\by^*$, simply by sampling $\bz^i \sim \eta_{2}$ and solving the triangular system $\widehat{S}^{\mathcal{W}}(\by^*,\bw^i) = \bz^i$ for each $\bw^i$. Note that we can simulate from or evaluate the estimated conditional densities for multiple realizations of the data $\by^*$, including values that are not present in the dataset $\{\bY^i\}_{i=1}^{n}$. Thus, learning a single map $S^{\mathcal{W}}$ parameterized by $\by$ is said to \emph{amortize} the cost of conditional sampling over the data.

\section{Representing and learning continuous monotone functions}
\label{sec:monotone_maps}
In this section, we define a general representation for components of monotone triangular maps and present our main theoretical results. The essential idea, as mentioned in the introduction, is to express each  component $S_k$ as a nonlinear transformation $S_{k}=\mathcal{R}_k(f)$ of a smooth function $f$, where $\Rectifier_k$ \eqref{eq:rectifierintro} is an operator that enforces the monotonicity constraint by construction. We then identify the map component by solving the re-parameterized optimization problem
\begin{equation}\label{eq:minJoverV}
 \min_{f\in V_k} \mathcal{L}_k(f) , \quad\text{where }  \mathcal{L}_k(f) = \mathcal{J}_k( \mathcal{R}_k(f) ),
\end{equation}
where $V_k$ is a linear space of functions in which we seek $f$. 
With this nonlinear transformation, we lose the convexity of the constrained problem $\min_{\{s\colon \partial_k s > 0\}} \mathcal{J}_k(s)$, but obtain an unconstrained minimization problem instead. It turns out that, with appropriate conditions on $\Rectifier_k$, the transformed optimization problem retains many desirable properties.

In Section~\ref{subsec:choice_g}, we discuss the construction of $\Rectifier_k$ and motivate certain critical choices therein. Then, in Section~\ref{subsec:choice_Vk}, we analyze the regularity of the \emph{composed} objective functional $\mathcal{L}_k$ for a certain choice of the function space $V_k$. In Section~\ref{subsec:opt_properties} we discuss the existence and uniqueness of solutions to \eqref{eq:minJoverV}, and describe conditions under which solving~\eqref{eq:minJoverV} exactly recovers the KR rearrangement.

\subsection{The rectification operator}
\label{subsec:choice_g}

Recalling \eqref{eq:rectifierintro}, for any sufficiently smooth $f\colon\R^k\rightarrow\R$ we let $\mathcal{R}_k(f)\colon\R^k\rightarrow \R$ be the function defined by
\begin{equation}\label{eq:bijective_transformation}
 \mathcal{R}_k(f)(\bx_{\leq k}) =  f(\bx_{<k},0) + \int_0^{x_k} g\big(\partial_k f(\bx_{<k},t)\big)\mathrm{d}t ,
\end{equation}
where $g\colon\R\rightarrow\R_{>0}$ is a positive function.
We call the operator $\mathcal{R}_k\colon f\mapsto\mathcal{R}_k(f)$ a \emph{rectifier} because it transforms any $f$ %
into a function which is increasing in the $k$-th variable, i.e., $\partial_k\mathcal{R}_k(f)(\bx_{\leq k})=g(\partial_k f(\bx_{\leq k}))>0$. 
As a simple example, functions of the form $f(\bx_{\leq k})=\alpha(\bx_{<k})+\beta(\bx_{<k}) x_k$ are transformed into $\mathcal{R}_k(f)(\bx_{\leq k})=\alpha(\bx_{<k})+g(\beta(\bx_{<k})) x_k $. Figure~\ref{fig:tmapprox-fandS} illustrates the application of the rectifier to a nonlinear univariate function $f$.

\begin{figure}[!htb]
    \centering
    \begin{minipage}[b]{0.4\textwidth}
        \centering
        \includegraphics[width=\linewidth]{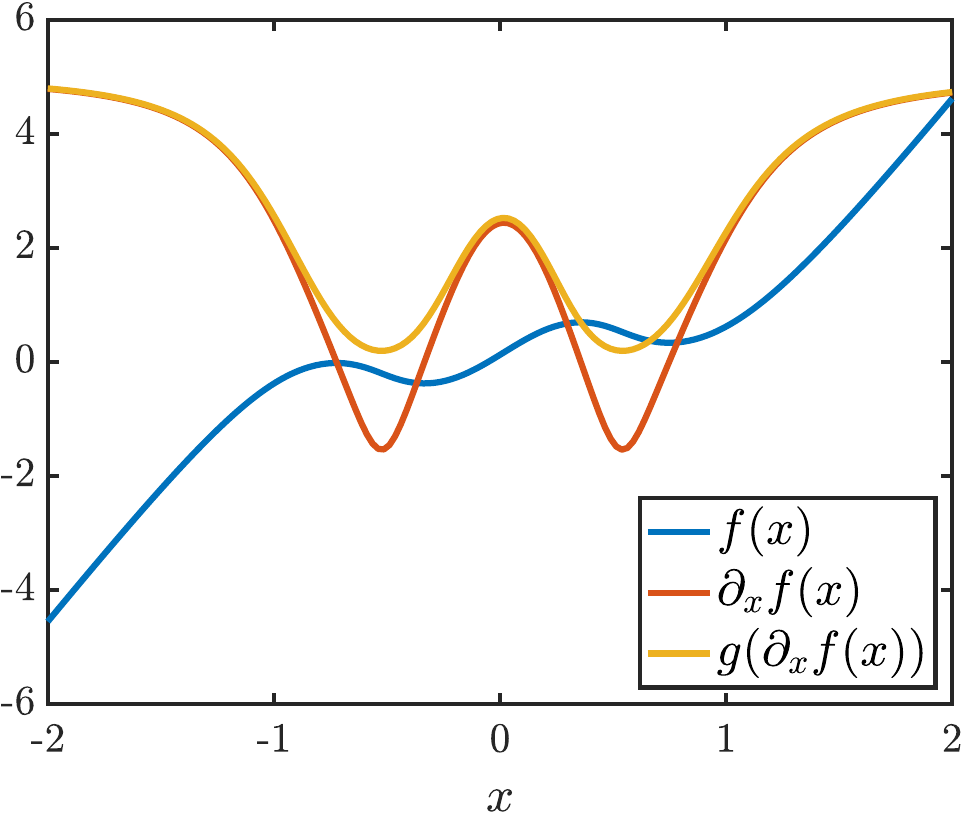}
    \end{minipage}%
    \hspace{1cm}
    \begin{minipage}[b]{0.4\textwidth}
        \centering
        \includegraphics[width=\linewidth]{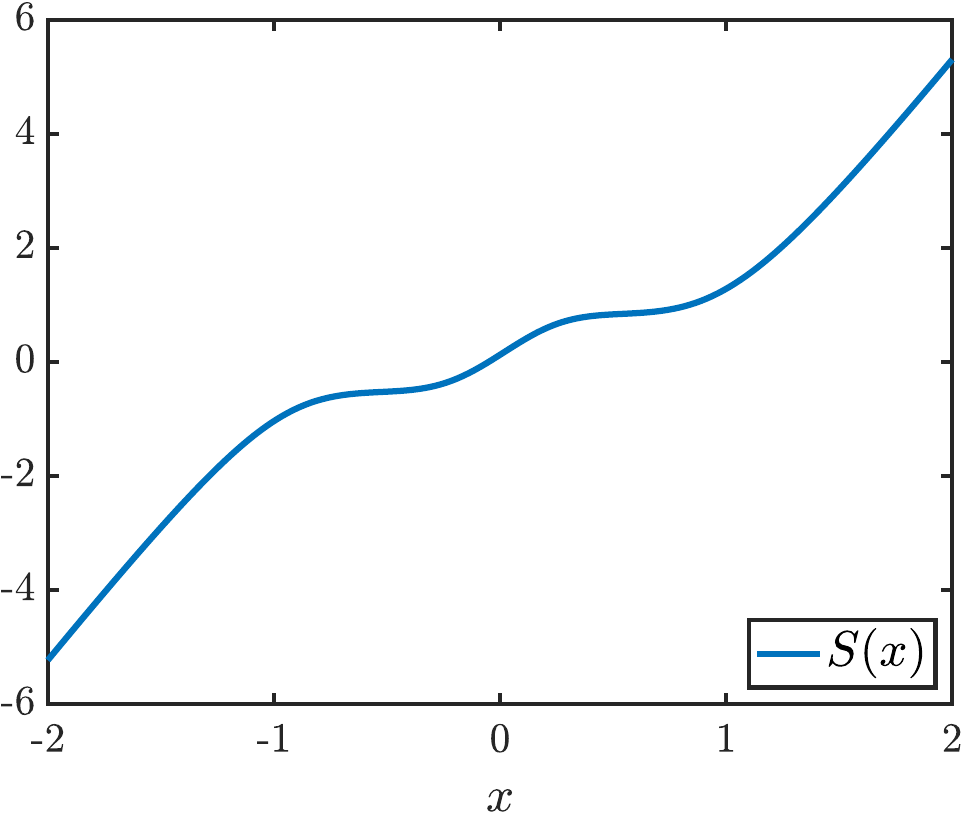}
    \end{minipage}
    \caption{The rectifier \eqref{eq:rectifierintro} transforms the non-monotone function $f$ into the monotone function $S = \mathcal{R}(f)$. Here we choose $g(\cdot) = \log (1 + \exp(\cdot) )$. $S$ is an increasing transport that pushes forward a one-dimensional mixture of Gaussians $\pi(x) = 0.5\mathcal{N}(x;-1,1) + 0.5\mathcal{N}(x; 1,1)$ to the standard Gaussian reference density $\eta$. \label{fig:tmapprox-fandS}}
\end{figure}

The choice of the function $g$ in \eqref{eq:rectifierintro} has a crucial impact on properties of the optimization problem \eqref{eq:minJoverV}.
One possible choice, proposed in \cite{jaini2019, bigoni2019greedy}, is the square function $g(\xi) = \xi^2$. While this choice permits closed-form computation of the integral in \eqref{eq:bijective_transformation} when $f$ is polynomial, it yields an optimization problem \eqref{eq:minJoverV} which possesses many spurious local minima and is in fact non-smooth; see Figure \ref{fig:compare_obj} for an illustration. 
This can be explained in part %
by the fact that this $g$ is not bijective. 

Instead, one can choose $g$ to be a bijective function from $\R$ to $\R_{>0}$, such as the soft-plus function,
\begin{equation}\label{eq:def_g}
    g(\xi) = \log(1 + \exp(\xi)),
\end{equation}
whose inverse is $g^{-1}(\xi) = \log(\exp(\xi)-1)$. Another example of a bijective function, considered in~\cite{wehenkel2019}, is the shifted exponential linear unit (ELU),
\begin{equation} \label{eq:elu}
    g(\xi) = \left\{\begin{array}{ll} \exp(\xi) & \xi < 0 \\
    \xi + 1 & \xi \geq 0 \end{array} \right. ,
\end{equation}
whose inverse is $g^{-1}(\xi)=\xi-1$ if $\xi\geq1$ and  $g^{-1}(\xi)=\log(\xi)$ otherwise. 

As a consequence of $g$ being bijective, the inverse of the rectifier $\mathcal{R}_k^{-1}(s)$ exists for any sufficiently smooth $s\colon\R^k\rightarrow\R$ with $\partial_k s(\bx_{\leq k})>0$ and can be written as
\begin{equation}\label{eq:bijective_transformation_inverse}
 \mathcal{R}_k^{-1}(s)(\bx_{\leq k}) =  s(\bx_{<k},0) + \int_0^{x_k} g^{-1}\big(\partial_k s(\bx_{<k},t)\big)\mathrm{d}t.
\end{equation}
More importantly, the fact that $g$ is invertible yields an objective function $\mathcal{L}_k$ that is far better behaved than with $g(\xi)=\xi^2$; see the numerical illustration in Figure \ref{fig:compare_obj}, with both soft-plus and shifted ELU $g$.  
With these choices of $g$, we observe that $\mathcal{L}_k = \mathcal{J}_k \circ \mathcal{R}_k$, though non-convex in general, has no local minima and no saddle points.  In the next sections, we will analyze these properties of the optimization problem \eqref{eq:minJoverV} (e.g., smoothness, existence and uniqueness of solutions) and elucidate their precise dependence on $g$, $V_k$, and $\pi$.

\begin{remark}
In the finite-dimensional setting, it can easily be shown that composing a smooth and (strictly) convex objective function with a $\mathcal{C}^1$-diffeomorphic map preserves the (unique) global minima of the objective. Such diffeomorphic maps have been explored for accelerating optimization on finite-dimensional manifolds; see, e.g., \cite{lezcano2019trivializations}. Establishing that the \emph{operator} $\Rectifier_k$  in~\eqref{eq:bijective_transformation} is $\mathcal{C}^1$-diffeomorphic is, however, non-trivial. We will show in Section~\ref{subsec:opt_properties} that, under additional assumptions, our reparameterization yields an infinite-dimensional optimization problem where local minima are global minima. 
\end{remark}

\begin{figure}[!ht]
\begin{minipage}{.32\linewidth}
	\centering
	\includegraphics[width=\textwidth]{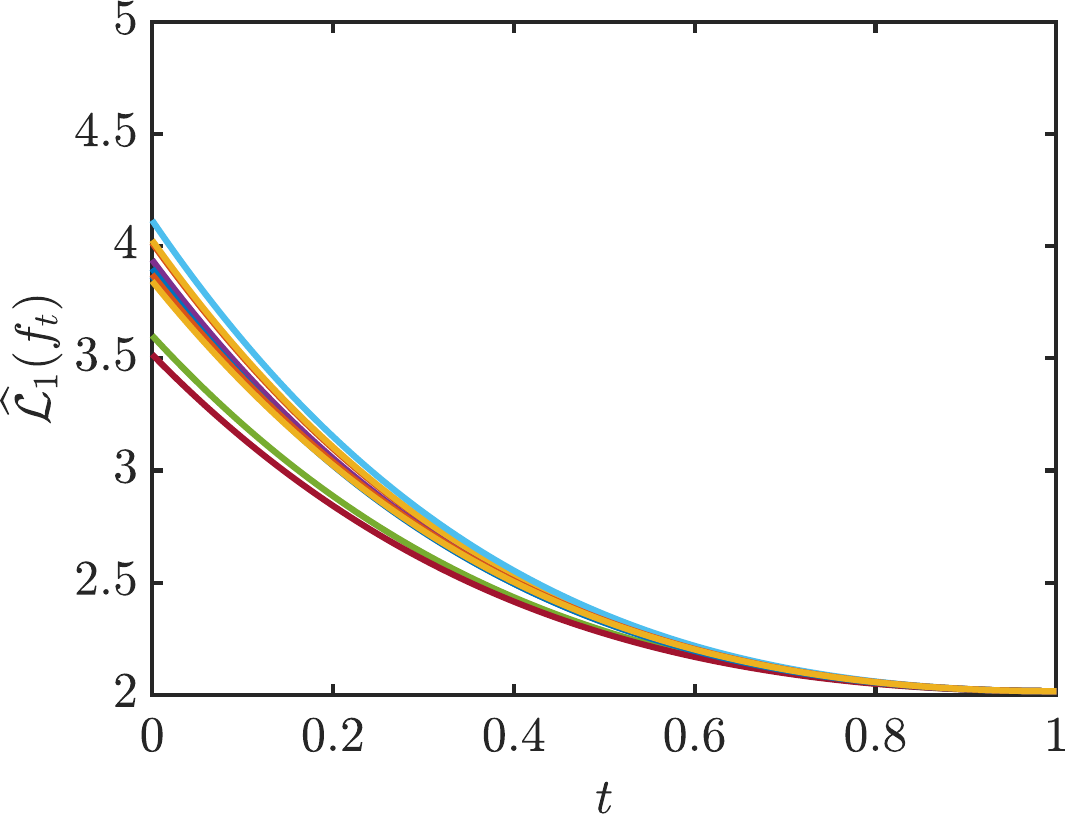}
\end{minipage}
\hspace{1pt}
\begin{minipage}{.32\linewidth}
	\centering
	\includegraphics[width=\textwidth]{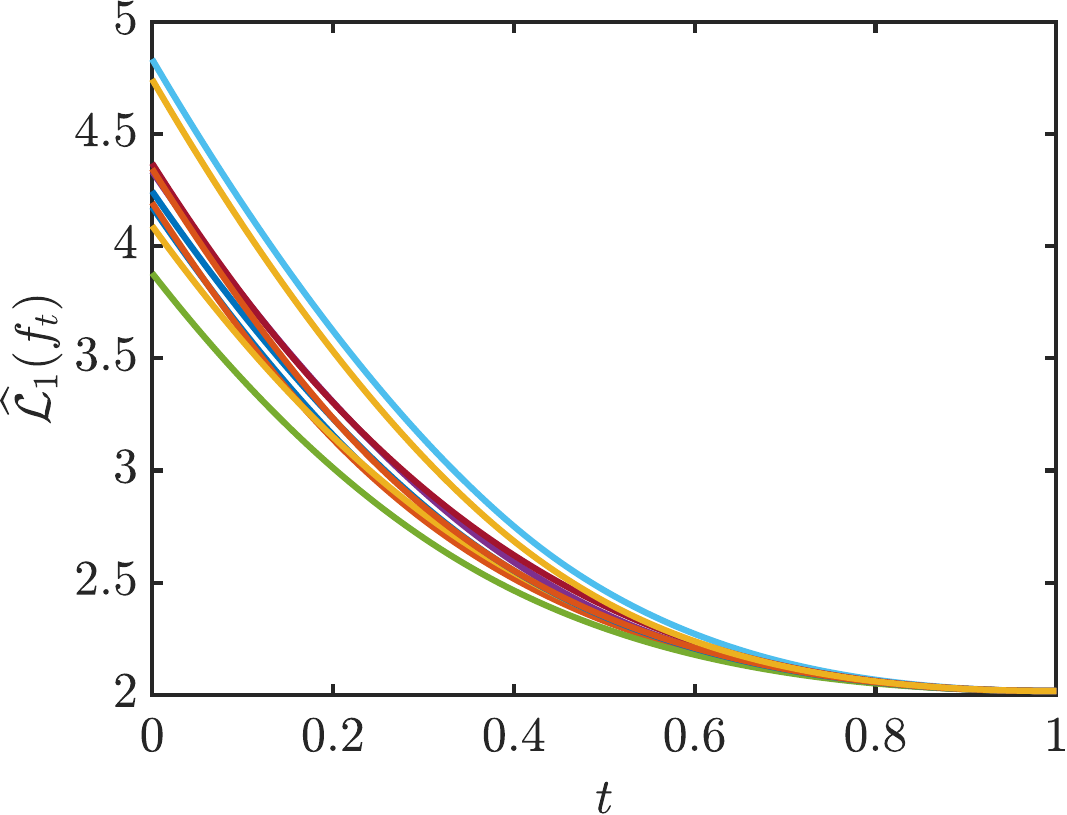}
\end{minipage}
\hspace{1pt}
\begin{minipage}{.32\linewidth}
   \centering
	\includegraphics[width=\textwidth]{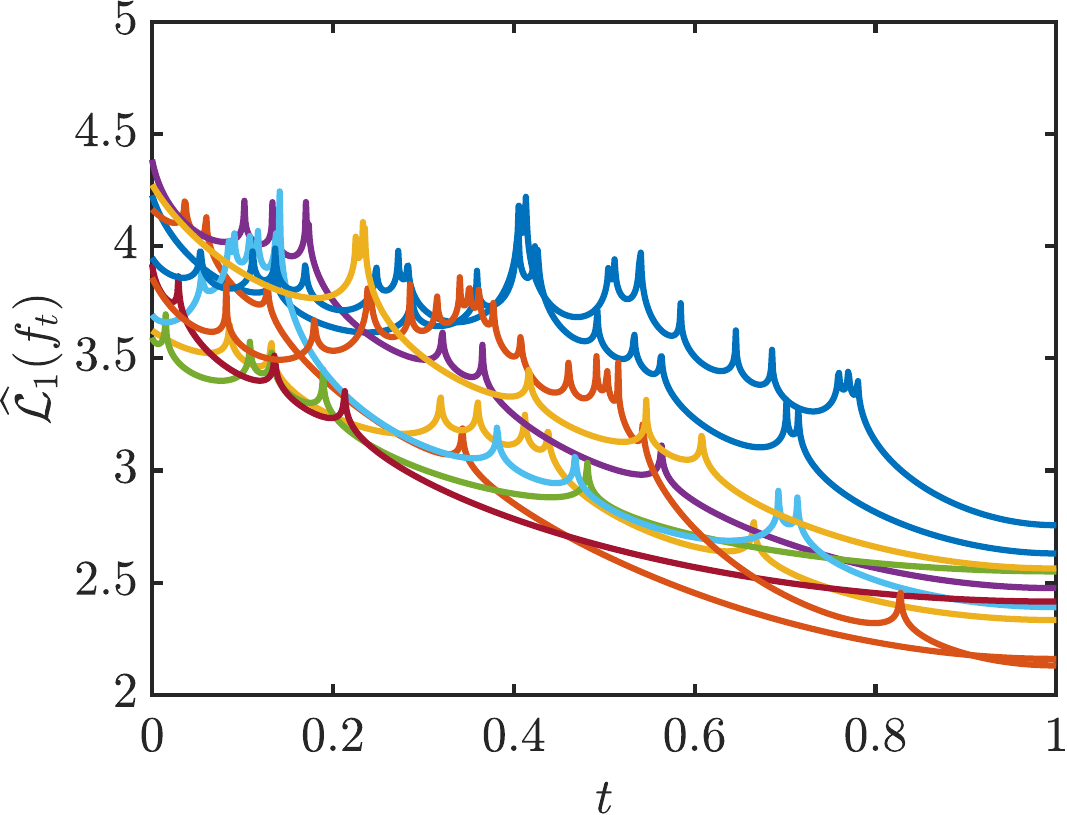}
\end{minipage}

\caption{Objective function $\widehat{\mathcal{L}}_{1} = \widehat{\mathcal{J}}_{1} \circ \mathcal{R}_{1}$ where the rectifier $\mathcal{R}_1$ is defined using the soft-plus $g$ \eqref{eq:def_g} (\emph{left}), the shifted ELU $g$ \eqref{eq:elu} (\emph{middle}), or the square function $g(\xi)=\xi^2$ (\emph{right}). Here, $\pi(x) = 1/2\mathcal{N}(x; -2,0.5) +1/2 \mathcal{N}(x; 2,2)$ is a univariate Gaussian mixture, and we use $n = 50$ to define $\mathcal{J}_1$ with $f_1$ represented using a linear combination of Hermite functions up to degree $10$. 
The objective is evaluated along line segments that interpolate between random initial maps ($t=0$) 
and critical points resulting from a gradient-based optimization method ($t=1$).
Observe that with bijective $g$ (\emph{left} and \emph{middle}) the algorithm always arrives at the same optimal value, whereas with the square function $g$ (\emph{right}) the algorithm gets stuck in local minima
and rarely attains the optimal value.
\label{fig:compare_obj}}
\end{figure}

\subsection{Smoothness of the optimization problem}
\label{subsec:choice_Vk}
In this section we give sufficient conditions on the function $g$ and the target density $\pi$ to guarantee that the objective function $\mathcal{L}_k$ is smooth over an appropriate function space for $f$. We introduce this function space, $V_k$, and show continuity properties of the rectifier that hold for functions $f \in V_k$, before stating our main result in Theorem~\ref{prop:finite_objective}. %

We begin by defining the weighted Sobolev space
\begin{align} \label{eq:hilbert_space}
 V_k  &= \left\{ f\colon\R^k\rightarrow\R \text{ such that } \|f\|_{V_k}^2 \coloneqq \int \Big(|f(\bx)|^2 + |\partial_k f(\bx)|^2 \Big) \eta_{\leq k}(\bx)\d\bx <+\infty \right\} ,
\end{align}
where $\eta_{\leq k}$ is the standard normal density on $\R^k$. By \cite[Theorem 1.11]{kufner1984define}, this space is complete and thus is a Hilbert space. The space $V_k$ has sufficient regularity for $\partial_k f$ to exist, but also to permit the pointwise evaluation $f(\bx_{<k},0)$, as required in the definition  \eqref{eq:bijective_transformation} of the rectifier. This property is formalized by the following proposition, a  trace theorem, which shows that for any $f\in V_k$, the function $\bx_{<k}\mapsto f(\bx_{<k},0)$ is a function in $L^2_{\eta_{<k}}$, the $\eta_{<k}$-weighted space of square integrable functions.

\begin{proposition} \label{thm:trace} There exists a constant $C_T<\infty$ such that for any $f \in V_k$ 
\begin{equation} \label{eq:trace_theorem}
   \int f(\bx_{<k},0)^2 \,\eta_{<k}(\bx)\d\bx \leq C_T \|f\|_{V_k}^2.
\end{equation}
\end{proposition}
\begin{proof}
 See Appendix \ref{proof:trace}.
\end{proof}

\begin{remark}
Notice that $H^1_{\eta_{\leq k}}  \subset V_k \subset L^2_{\eta_{\leq k}}$, where $L^2_{\eta_{\leq k}}=\{f\colon\R^k\rightarrow\R:\|f\|_{L^2_{\eta_{\leq k}}}^2\coloneqq \int f^2 \d\eta_{\leq k} <\infty\}$ and $H^1_{\eta_{\leq k}}=\{f\colon\R^k\rightarrow\R:\|f\|_{H^1_{\eta_{\leq k}}}^2\coloneqq \int f^2 + \|\nabla f\|^2 \d \eta_{\leq k}<\infty\}$ are the standard weighted Sobolev spaces. Given that  %
the standard normal density factorizes as $\eta_{\leq k}(\bx)=\prod_{i=1}^k \eta_i(x_i)$, the space $V_k$ admits the following tensor product structure
\begin{equation} \label{eq:tensorproduct_Vk}
 V_k = L^2_{\eta_1} \otimes\cdots\otimes L^2_{\eta_{k-1}} \otimes H^1_{\eta_k},
\end{equation}
and the norm $\|\cdot\|_{V_k}$ is a product norm.\footnote{That is $\|v_1\otimes\cdots\otimes v_k\|_{V_k} = \|v_1\|_{L^2_{\eta_1}}\|v_2\|_{L^2_{\eta_2}}\cdots \|v_{k-1}\|_{L^2_{\eta_{k-1}}} \|v_k\|_{H^1_{\eta_k}}$ for any $v_j\in L^2_{\eta_k}$ and $v_k\in H^1_{\eta_k}$.} This tensor product structure will be used later in Section~\ref{sec:ATM} to construct a numerical scheme for approximating the map components $S_k$.
\end{remark}

The following proposition shows that, under mild assumptions on $g$, the rectifier $\mathcal{R}_k$ is a Lipschitz continuous operator from $V_k$ to itself. The proof relies on the Hardy inequality \cite{muckenhoupt1972hardy} and on Proposition~\ref{thm:trace}.

\begin{proposition}\label{prop:RectifierIsLip}
 Let $g\colon\R\rightarrow\R_{>0}$ be a Lipschitz function, i.e., there exists a constant $L<\infty$ so that
 \begin{align}
  |g(\xi)-g(\xi')| &\leq L |\xi-\xi'|, \label{eq:lipschitz_g_bis}
 \end{align}
 holds for any $\xi,\xi'\in\mathbb{R}$.
 Then $\mathcal{R}_k(f)\in V_k$ for any $f\in V_k$, where $\mathcal{R}_k(f)$ is defined in \eqref{eq:bijective_transformation}.
 Furthermore there exists a constant $C<\infty$ such that
 \begin{equation}\label{eq:RectifierIsLip}
  \| \mathcal{R}_k(f_1)-\mathcal{R}_k(f_2) \|_{V_k} \leq C \| f_1-f_2 \|_{V_k},
 \end{equation}
 holds for any $f_1,f_2\in V_k$.
\end{proposition}
\begin{proof}
See~\Cref{proof:RectifierContinuity}. %
\end{proof} %

The next result relies on Proposition~\ref{prop:RectifierIsLip} to show that the objective functional $\mathcal{L}_{k}$ in \eqref{eq:minJoverV}, seen as a function from $V_k$ to $\R$, is well-defined, continuous, and differentiable.

\begin{theorem} \label{prop:finite_objective}
Let $\pi$ be a probability density function on $\R^d$ %
satisfying 
\begin{equation} \label{eq:AssumptionPiBounded}
    \pi(\bx) \leq C_\pi \eta(\bx),
\end{equation}
for all $\bx \in \mathbb{R}^{d}$ and a constant $C_\pi<\infty$, and let $g\colon\R\rightarrow\R_{\geq0}$ be an increasing function where %
 \begin{align}
  |g(\xi)-g(\xi')| &\leq L |\xi-\xi'|, \label{eq:lipschitz_g}\\
  |\log \circ \, g(\xi)-\log \circ \, g(\xi')| &\leq L |\xi-\xi'|, \label{eq:lipschitz_logg}
 \end{align}
holds for any $\xi,\xi'\in\mathbb{R}$ and a constant $ L <\infty$. %
 Then
$$
 \mathcal{L}_k(f) = \mathcal{J}_k ( \mathcal{R}_k(f) ) <\infty,
$$
for any $f\in V_k$, where $\mathcal{J}_k(s)=\int \left( \frac{1}{2}s(\bx_{\leq k})^2-\log\left|\partial_k s(\bx_{\leq k})\right | \right)\pi(\bx)\d\bx$ as in \eqref{eq:opt_map_component} and where $\mathcal{R}_k(f)$ is defined as in \eqref{eq:bijective_transformation}.
Moreover, the function $\mathcal{L}_k\colon V_k\rightarrow\R$ is continuous and differentiable at any $f\in V_k$ and its gradient $\nabla\mathcal{L}_k(f) \in V_k$ is given by 
\begin{align}
 \langle \nabla\mathcal{L}_k(f) , \varepsilon \rangle_{V_k} 
 &= \int \mathcal{R}_k(f)(\bx)\left(\varepsilon(\bx_{<k},0) + \int_0^{x_k} g'\big(\partial_k f(\bx_{<k},t)\big)\partial_k \varepsilon(\bx_{<k},t)\mathrm{d}t\right) \pi(\bx)\d\bx \nonumber\\
 & - \int(\log\circ\,  g)'(\partial_{k}f(\bx))\partial_{k}\varepsilon(\bx) ~\pi(\bx)\d\bx.
 \label{eq:gradL}
\end{align}
for any $\varepsilon\in V_k$, where $\langle \cdot, \cdot\rangle_{V_k} $ is the scalar product in $V_k$.
\end{theorem}
\begin{proof}
 See \Cref{proof:finite_objective}.
\end{proof}
It is useful to discuss some implications of Theorem \ref{prop:finite_objective}. A smooth objective function permits us to use deterministic first or second-order optimization algorithms. 
Furthermore, in Section~\ref{sec:ATM}, we exploit the gradients of $\mathcal{J}_k$ in \eqref{eq:gradL} to construct an adaptive polynomial (or wavelet) basis for $V_k$.
We also note that many functions $g$ satisfy the conditions~\eqref{eq:lipschitz_g} and~\eqref{eq:lipschitz_logg} in the proposition. For example, these include the soft-plus~\eqref{eq:def_g} and shifted ELU~\eqref{eq:elu} functions considered in Figure~\ref{fig:compare_obj}.

\begin{remark}[Assumption \eqref{eq:AssumptionPiBounded} implies Gaussian tails]\label{rmk:Guassian_tails}
The condition \eqref{eq:AssumptionPiBounded} implies that 
$\mathbb{P}_{\bX\sim\pi}(\|\bX\|_2 > t) \leq C_\pi \mathbb{P}_{\bZ\sim\eta}(\|\bZ\|_2 > t) 
$ holds for any $t\geq0$, 
and hence that $\pi$ has sub-Gaussian tails~\cite{vershynin2018high}. 
The reverse, however, does not hold in general. 
For instance, a compactly supported random variable with density $\pi(x) \propto 1/\sqrt{|x|}\mathbbm{1}_{\{x \in [-1,1]\}}$ is sub-Gaussian, but the density is unbounded at $x = 0$ and thus does not satisfy~\eqref{eq:AssumptionPiBounded}. 
\end{remark}

\rev{
\begin{remark}
We note that~\eqref{eq:AssumptionPiBounded} can be relaxed with the (less interpretable) assumption that there exist diagonal scalings $d_{k} > 0$ and a constant $C_\pi < \infty$ such that $\pi(\bx)\leq C_\pi \prod_{k=1}^{d} \eta_k(d_{k} x_k)$ for all $\bx \in \R^d$.
In practice, we can always %
standardize (i.e., center and rescale) %
the marginals of $\pi$ to match the mean and variance of the standard Gaussian reference. 
This preconditioning step should yield a smaller constant $C_\pi$ in \eqref{eq:AssumptionPiBounded}.
Thus, without loss of generality, we will use the assumption above for the remainder of this article.
\end{remark}
}

\begin{remark} 
Under additional assumptions on $g$, the gradient $f\mapsto \nabla \mathcal{L}_k(f)$ is locally Lipschitz from $\overline{V}_k$ to $V_k$, where $\overline{V}_k = \{f \in V_k, \partial_k f \in L^\infty\}$ is the space endowed with the norm $\| f \|_{\overline{V}_k} = \|f\|_{V_k} + \|\partial_k f \|_{L^\infty}$.
More specifically, if $g'$ and $(\log\circ\,  g)'$ are Lipschitz functions, then
there exists a constant $M < \infty$ such that
    \begin{equation} \label{eq:Lipschitz_gradient}
        \| \nabla \mathcal{L}_k(f_1) - \nabla \mathcal{L}_k(f_2) \|_{V_k} \leq M(1 + \| \rev{\Rectifier_k}(f_2)\|_{V_k}) \|f_1 - f_2 \|_{\overline{V}_k} ,
    \end{equation}
holds for any $f_1,f_2 \in \overline{V}_k$.
See the derivation of this result in Appendix~\ref{proof:Lipschitz_gradient}. Such local Lipschitz regularity is useful to analyze the convergence of backtracking gradient descent procedures, i.e., gradient descent with an inexact line search such as Armijo's rule, to a stationary point; see Theorem 2.1 in~\cite{truong2021backtracking} and Proposition 2.1.1 in~\cite{bertsekas1997nonlinear}. 
We leave the extension of such optimization guarantees for solving $\min_{f \in V_k} \mathcal{L}_k(f)$ to future work.
\end{remark} 

\subsection{Existence and uniqueness of solutions} \label{subsec:opt_properties}

In this section, we show that the optimization problem~\eqref{eq:minJoverV} does not admit any spurious local minima, meaning that local minimizers are in fact global minimizers. We also show that problem~\eqref{eq:minJoverV} admits a unique global minimizer which permits us to recover the KR rearrangement. To prove these results, we will need the following proposition, which provides conditions ensuring that the \emph{inverse} rectifier $\Rectifier_k^{-1}$ is continuous.

\begin{proposition}\label{prop:continuity_Rinv}
Let $g$ be a bijective function from $\R$ to $\R_{>0}$ such that for any $c>0$ there exists a constant $L_c <\infty$ so that
\begin{equation}\label{eq:ginvLip}
    |g^{-1}(\xi) - g^{-1}(\xi')| \leq L_c |\xi - \xi'|,
\end{equation}
holds for any $\xi,\xi' \geq c$. 
Then, for any $s \in V_k$ 
such that $\essinf \partial_k s > 0$, 
we have $\Rectifier_k^{-1}(s) \in V_k$ and $\essinf \partial_k \Rectifier_k^{-1}(s) > -\infty$.
Furthermore for any $c>0$, there exists a constant $C_{c} < \infty$ such that 
\begin{equation} \label{eq:Lipschitz_Rinv}
    \| \Rectifier_k^{-1}(s_1) - \Rectifier_k^{-1}(s_2) \|_{V_k} \leq C_{c} \| s_1 - s_2 \|_{V_k},
\end{equation}
holds for any $s_{1},s_{2} \in V_{k}$ such that $\essinf\partial_k s_i \geq c$.
\end{proposition}
\begin{proof}
See \Cref{proof:continuity_Rinv}.
\end{proof}

Note that the softplus function~\eqref{eq:def_g} and the shifted exponential linear unit~\eqref{eq:elu} satisfy~\eqref{eq:ginvLip} with $L_c = (1 - e^{-c})^{-1}$ and $L_c = \max\{1/c,1\}$, respectively.

\subsubsection{Local minima are global minima}
The following proposition shows that, under certain conditions on the function $g$, %
the image set $\mathcal{R}_k(V_k)=\{\mathcal{R}_k(f) \, : \, f\in V_k\}$ is convex.
\begin{proposition} \label{prop:Convexity_ImR}
Let $g\colon\R\rightarrow\R_{\geq0}$ be an increasing function such that $\xi\mapsto(g^{-1}(\xi))^2$ is a convex function.
Let $\Rectifier_k(f)$ be defined as in~\eqref{eq:bijective_transformation} and $V_k$ as in~\eqref{eq:hilbert_space}.
Then $\mathcal{R}_k(V_k)=\{\mathcal{R}_k(f) \, : \, f\in V_k\}$ is convex.
\end{proposition}
\begin{proof}
See \Cref{proof:ConvexityImageR}.
\end{proof}

When $g$ is the softplus function~\eqref{eq:def_g} or the exponential linear unit~\eqref{eq:elu}, the mapping $\xi\mapsto(g^{-1}(\xi))^2$ is indeed convex.
Interestingly, however, the exponential function $g(\xi) = \exp(\xi)$---which has been used to parameterize monotone maps in~\cite{marzouk2016sampling,spantini2018inference} and for monotone regression in \cite{ramsay1998estimating, shin2022joint}---does not satisfy this property, and there is no guarantee for $\Rectifier_k(V_k)$ to be convex in this case. %

\rev{An important consequence of Proposition~\ref{prop:Convexity_ImR} is that the constrained problem $\min_{s \in \Rectifier_k(V_k)} \mathcal{J}_k(s)$ remains convex.
Hence, the strict convexity of $\mathcal{J}_k$ (see Appendix \ref{proof:ConvexityOfJ}) yields the \emph{uniqueness} of any solution %
to $\min_{s \in \Rectifier_k(V_k)} \mathcal{J}_k(s)$.
The following theorem now establishes the {existence} of a unique global minimizer of the unconstrained problem~\eqref{eq:minJoverV}, under the assumption that a \emph{local} minimizer exists.}

\begin{theorem} \label{prop:globalMinima}
 Under the assumptions of Propositions \ref{prop:continuity_Rinv} and \ref{prop:Convexity_ImR}, let $f^*\in V_k$ be a local minimizer 
 of $f\mapsto\mathcal{J}_k(\Rectifier_k(f))$, meaning that there exists $\rho>0$ such that 
 \begin{equation}\label{eq:LocalMinima}
     \mathcal{J}_k(\Rectifier_k(f^*)) \leq \mathcal{J}_k(\Rectifier_k(f)),
  \quad \forall f\in V_k \text{ such that } \|f-f^*\|_{V_k}\leq \rho.
 \end{equation}
 If $\essinf \partial_k f^* > -\infty$, then $f^*$ is \red{the unique} global minimizer, meaning that 
 $$
  \red{\mathcal{J}_k(\Rectifier_k(f^*)) < \mathcal{J}_k(\Rectifier_k(f)),
  \quad \forall f\in V_k \setminus \{f^*\}.}
 $$
\end{theorem}

\begin{proof}
 Let $f\in V_k$ with $f\neq f^*$. For any $0<t<1$ we let $s_t = t \Rectifier_k(f) + (1-t) \Rectifier_k(f^*)$. 
 By Proposition \ref{prop:Convexity_ImR}, $\Rectifier_k(V_k)$ is convex so that $s_t\in\Rectifier_k(V_k)$. By \red{the strict} convexity of $\mathcal{J}_k$ (cf.\ Appendix \ref{proof:ConvexityOfJ}), we have $\mathcal{J}_k(s_t) < t  \mathcal{J}_k(\Rectifier_k(f))+ (1-t)\mathcal{J}_k(\Rectifier_k(f^*)) $,
 or equivalently
 \begin{equation}\label{eq:tmp3298}
     \mathcal{J}_k(s_t) - \mathcal{J}_k(\Rectifier_k(f^*))
  < t \Big( \mathcal{J}_k(\Rectifier_k(f))-\mathcal{J}_k(\Rectifier_k(f^*)) \Big).
 \end{equation}
 Next we show that there exists a sufficiently small $t>0$ such that the above left-hand side is non-negative. As a consequence, the right hand side of \eqref{eq:tmp3298} will be positive, which will conclude the proof.
 
 Let $b=\essinf \partial_k f^* $ and $c=g(b)/2>0$.
 For any $t\leq 1/2$, we have $\partial_k s_t = t g(\partial_k f)+ (1-t) g(\partial_k f^*) \geq 1/2 g(\partial_k f^*) $ so that
 $\essinf \partial_k s_t \geq c$. In addition, we have $\essinf \partial_k \Rectifier_k(f^*) = \essinf g(\partial_k f^*) \geq c$. Thus, by Proposition \ref{prop:continuity_Rinv}, there exists a constant $C_c<\infty$ such that 
 \begin{align*}
     \| \Rectifier^{-1}_k( s_t ) -  f^* \|_{V_k} 
     &=\| \Rectifier^{-1}_k( s_t ) -  \Rectifier^{-1}_k( \Rectifier_k(f^*) ) \|_{V_k} \\
     &\leq C_c \| s_t-\Rectifier_k(f^*) \|_{V_k}\\
     &=  t C_c  \| \Rectifier_k(f)-\Rectifier_k(f^*) \|_{V_k} .
 \end{align*}
 By letting $t = \rho / (C_c  \| \Rectifier_k(f)-\Rectifier_k(f^*) \|_{V_k})$, we have $\| \Rectifier^{-1}_k( s_t ) -  f^* \|_{V_k} \leq \rho $. 
 Therefore, setting $f = \Rectifier^{-1}_k( s_t )$ in~\eqref{eq:LocalMinima} %
 ensures that $0\leq \mathcal{J}_k(s_t) - \mathcal{J}_k(\Rectifier_k(f^*)) $.
\end{proof}

Let us remark that Theorem~\ref{prop:globalMinima} holds for general target densities $\pi$ without the bound assumed in Theorem~\ref{prop:finite_objective}. 
\rev{It establishes that any local minimizer of $\mathcal{J}_k\circ \Rectifier_k$ on $V_k$ is in fact the unique global minimizer.} %
Applying this result, however, depends on the existence of a local minimizer in the function space $V_k$, which must be established for a given target density. 
The following section studies a relatively broad class of distributions where the $k$-th component of the KR rearrangement is in $\Rectifier_k(V_k)$, thereby providing a candidate solution to apply Theorem~\ref{prop:globalMinima}.

\subsubsection{Recovery of the KR rearrangement} \label{subsubsec:uniqueKR}
As discussed in Section \ref{sec:background}, the Knothe--Rosenblatt rearrangement $S_{\KR}$ is the unique lower triangular and monotone map such that $\mathcal{D}_{\textrm{KL}}(\pi||S_{\KR}^\sharp\eta)=0$; see~\cite{bogachev2005triangular}.
The decomposition \eqref{eq:KL_decomposition} of the KL divergence $\mathcal{D}_{\textrm{KL}}(\pi||S^\sharp\eta)$ thus permits us to write
\begin{equation}\label{eq:Jinf}
    \mathcal{J}_k(S_{\KR,k}) \leq  \mathcal{J}_k( \Rectifier_k(f) ) ,
\end{equation}
for any $f\in V_k$ and for any $1\leq k \leq d$. 
Indeed, by letting $S$ be the map such that $S_k=\Rectifier_k(f)$ and $S_i=S_{\KR,i}$ for $i\neq k$, \eqref{eq:KL_decomposition} yields \eqref{eq:Jinf}.
Thus, if there exists a function $f_{\KR,k}\in V_k$ such that $S_{\KR,k} =\Rectifier_k(f_{\KR,k}) $, then $f_{\KR,k}$ is a global minimizer of $f\mapsto \mathcal{J}_k( \Rectifier_k(f) )$ over $f\in V_k$. 
To show this, we first need the following intermediate result.

\begin{proposition} \label{prop:KRmapProperties}
Let $\pi$ be a probability density function on $\R^d$ such that 
\begin{equation} \label{eq:margConditional_bothTailsBounded}
    c\eta(\bx) \leq \pi(\bx) \leq C\eta(\bx) ,
\end{equation}
for all $\bx \in \R^d$ with some constants $0< c \leq C < \infty$. %
Then, for all $\bx_{< k}\in\R^{k-1}$ and $k = 1,\dots,d$,  $S_{\KR,k}(\bx_{<k},x_k) = \mathcal{O}(x_k)$ and $\partial_k S_{\KR,k} (\bx_{<k},x_k) = \mathcal{O}(1)$ as $|x_k| \rightarrow \infty$. Furthermore, we have $S_{\KR,k} \in V_k$ 
and $\essinf \partial_k S_{\KR,k} > 0$ for all $k = 1,\dots, d$. 
\end{proposition}
\begin{proof}
    See Appendix~\ref{proof:KRrearrangementTail}.
\end{proof}

Let us comment on the assumption~\eqref{eq:margConditional_bothTailsBounded}.
\red{First, we note that it is equivalent to assuming $\pi(x)=\exp(\Psi(x))\eta(x)$ for some integrable function $\Psi\colon\R^d\rightarrow\R$ with bounded oscillation, meaning $\sup_{x\in\R^d}\Psi(x)-\inf_{x\in\R^d}\Psi(x)<\infty$. 
As an example, it is satisfied for any Gaussian mixture of the form $\pi(x)=\sum_{i=1}^k w_i \eta(x-x_i)$ with weights $w_i \geq 0$ that sum to one, centers $x_i\in\R^d$, and $\eta(x)\propto\exp(-\frac{\|x\|^2}{2})$; %
see Example 2.7 in~\cite{zahm2022certified} for more details.}

Second, while the results in Section~\ref{subsec:choice_Vk} assume only an upper bound on the joint density $\pi$, here we have both an upper and a lower bound. These bounds together imply that the $k$-th marginal conditional density, for $1 \leq k \leq d$, satisfies $(c/C) \eta_{k}(x_k) \leq \pi(x_k|\bx_{<k}) \leq (C/c)\eta_k(x_k)$ for all $\bx_{\leq k} \in \R^{k}$. This means that the marginal conditionals of the target density have Gaussian tails, rather than potentially being lighter than Gaussian (as in assumption~\eqref{eq:AssumptionPiBounded}). This condition guarantees that the asymptotic behavior of each map component $S_{\KR,k}$ in its last variable (as $|x_k| \to \infty$) is affine.

\medskip

We now combine our previous results to show that under the assumption~\eqref{eq:margConditional_bothTailsBounded} on the target density $\pi$, we have $f_{\KR,k} \in V_k$ and the existence of a unique solution to $\min_{f\in V_k}\mathcal{J}_k( \Rectifier_k(f) )$.  %

\begin{corollary} \label{cor:KR_recovery}
Under the assumptions on $\pi$ in Proposition~\ref{prop:KRmapProperties}, let $g$ be a Lipschitz bijection from $\R$ to $\R_{\geq0}$ such that $\xi\mapsto (g^{-1}(\xi))^2$ is convex and, for any $c>0$, there exists a constant $L_c<\infty$ such that $|g^{-1}(\xi) - g^{-1}(\xi')| \leq L_c |\xi - \xi'|$ for any $\xi,\xi'\geq c$.
Then, for any $1\leq k \leq d$, there exists a unique function $f_{\KR,k}\in V_{k}$ that satisfies $\essinf \partial_k f_{\KR,k} > -\infty$ such that $\Rectifier_k(f_{\KR,k})$ is the $k$-th component of the KR rearrangement $S_{\KR}$. As a consequence, $\min_{f\in V_{k}} \mathcal{J}_k( \Rectifier_k(f) )$ admits a unique solution and
$$
 \mathcal{J}_k( \Rectifier_k(f_{\KR,k}) ) = \min_{f\in V_{k}} \mathcal{J}_k( \Rectifier_k(f) ).
$$
\end{corollary}

\begin{proof}
 Combining Proposition~\ref{prop:KRmapProperties} and Proposition~\ref{prop:continuity_Rinv}, there exists a function $f_{\KR,k} \in V_k$ that satisfies $\essinf \partial_k f_{\KR,k} > -\infty$ such that  $\Rectifier_k(f_{\KR,k}) = S_{\KR,k}$. Then from Theorem~\ref{prop:globalMinima} and inequality~\eqref{eq:Jinf}, we have that $f_{\KR,k}$ is  \rev{the unique} global minimizer of $f\mapsto \mathcal{J}_k(\Rectifier_k(f))$ over $f\in V_k$.
\end{proof}

\rev{
Corollary \ref{cor:KR_recovery} gives sufficient conditions for the existence of a unique solution to $\argmin_{f\in V_k} \mathcal{J}_k(\mathcal{R}_k(f))$, and shows that this solution (after rectification) is in fact the $k$-th component of the KR rearrangement. 
It is natural to ask whether, outside of these assumptions, there still exists a local minimizer of $\mathcal{J}_k \circ \Rectifier_k$ over $V_k$. If so, then Theorem~\ref{prop:globalMinima} would guarantee that this function is in fact the unique global minimizer of the same objective.
Following the preceding analysis, one route towards answering this question would be to elucidate whether $\Rectifier_k^{-1}(S_{\KR,k}) \in V_k$ holds for tail assumptions on $\pi$ that are different from \eqref{eq:margConditional_bothTailsBounded}. In general, this is not clear. For targets with lighter tails, the KR map to a Gaussian $\eta$ will increase super-linearly as $|x_k| \to \infty$, but we hypothesize that many such maps are \emph{still} recoverable using $V_k$, due to its Gaussian weighting. We comment further on this possibility in Section~\ref{sec:conclusions}. For targets with heavier tails, however, we may have that $\partial_k S_{\KR, k}(x) \to 0$ as $x_k \to \pm \infty$, which implies that $\essinf \partial_k f_{\KR,k} = -\infty$. In this setting, a more promising route may be to modify the reference distribution, i.e., to choose a reference that itself has heavier tails, so that assumption~\eqref{eq:margConditional_bothTailsBounded} is satisfied. See \cite{jaini2020tails} for work in this direction. Doing so would break the strong log-concavity of $\eta$, however, which is an essential element of our analysis; hence it is unclear whether any optimization guarantees could then hold.} %

\rev{Alternatively, one could avoid appealing to the properties of the KR rearrangement for a given target $\pi$ and more directly ask whether a local minimizer of \eqref{eq:minJoverV} exists even if it is \emph{not} $f_{\KR, k}$. Our current theory does not address this question. One possible strategy involves elucidating under what additional assumptions (including for what class of targets $\pi$) the objective function $\mathcal{L}_k$ is coercive on $V_k$; a minimizer of~\eqref{eq:minJoverV} would therefore exist by the direct method in the calculus of variations. The resulting map would, by construction, be the closest approximation to $S_{\KR}$ (in the forward KL sense of~\eqref{eq:KL_decomposition}) within the space of triangular functions $\bigtimes_k \Rectifier_k(V_k)$. Proposition~\ref{prop:KLboundsL2norm}  ensures that such a map is also close to $S_{\KR}$ in the $L_\pi^2$ sense.} %

${}$

\section{Adaptive parameterization of transport maps}
\label{sec:ATM}
Given an i.i.d.\ sample $\{\bsx^i\}_{i=1}^{n} \sim \pi$, we now propose an adaptive algorithm to build $f \in V_k$ that minimizes the empirical objective $\widehat{\mathcal{L}}_{k} \coloneqq \widehat{\mathcal{J}}_k \, \circ \, \Rectifier_k(f)$, where $\widehat{\mathcal{J}}_{k}$ is as in~\eqref{eq:empirical_objective}. Doing so for each $k=1,\ldots,d$ yields a monotone triangular map, as described in Section~\ref{sec:background}.
For each $k$, we represent $f \in V_k$ with an $m$-term expansion
\begin{equation} \label{eq:linear_exp}
f(\bx_{\leq k}) = \sum_{\balpha\in\Lambda} c_{\balpha} \psi_{\balpha}(\bx_{\leq k}), 
\end{equation}
where $\Lambda$ is a set of multi-indices $\balpha = (\alpha_1,\dots,\alpha_k) \in \mathbb{N}_0^k$ with $\#\Lambda=m$, $c_{\balpha} \in \mathbb{R}$ are coefficients, and $\psi_{\balpha}\colon \mathbb{R}^{k} \rightarrow \mathbb{R}$ are basis functions for $V_k$, constructed as products of univariate functions, 
$$
 \psi_{\balpha}(\bx_{\leq k}) = \prod_{j=1}^{k} \psi_{\alpha_{j}}^{j}(x_{j}) .
$$
Here, $\{\psi_{\alpha}^{j}\}_{\alpha \in \mathbb{N}_0}$ is chosen to be a basis of  $L^2_{\eta_j}$ if $j<k$ and of $H^1_{\eta_k}$ if $j=k$.
Because $V_k$ possesses a tensor product structure \eqref{eq:tensorproduct_Vk} and because its norm $\|\cdot\|_{V_k}$ is a product norm, the basis $\{\psi_{\balpha}\}_{\balpha \in \mathbb{N}_0^k}$ is orthonormal if $\{\psi_{\alpha}^{j}\}_{\alpha \in \mathbb{N}_0}$ is an orthonormal basis for all $j\leq k$.
Since $f$ depends linearly on the coefficients $c_{\balpha}$,  %
the smoothness properties of the objective $\mathcal{L}_k$ 
transfer to the objective function parameterized by the coefficients $c_{\balpha}$, $\balpha\in\Lambda$. 

Sections~\ref{subsec:polynomials} and~\ref{subsec:wavelets} describe two different choices of basis: polynomials and wavelets, respectively. Then, in Section~\ref{subsec:choosing_basis}, we present a greedy algorithm for building the multi-index set $\Lambda$, applicable to any such hierarchical basis.
In addition, we propose a cross validation procedure to determine when to stop enriching $\Lambda$ in order to avoid over-fitting to the finite sample.

\subsection{Polynomial basis} \label{subsec:polynomials}

Probabilists' Hermite polynomials $\{\varphi_{\alpha}\}_{\alpha\in\mathbb{N}}$ form an orthogonal  basis for $L^2_\eta$ where %
$\eta$ is the univariate standard Gaussian density. $\varphi_{\alpha}$ is a polynomial of degree $\alpha\geq 0$ defined as
$$
 \varphi_\alpha(x) = (-1)^\alpha \exp(x^2/2) \frac{\d^\alpha}{\d x^\alpha} \exp(-x^2/2).
$$
Furthermore, we have $\langle \varphi_\alpha,\varphi_\beta \rangle_{L^2_{\eta}} = \alpha! \, \delta_{\alpha,\beta}$ so that $\{\varphi_{\alpha}/\sqrt{\alpha!}\}_{\alpha\in\mathbb{N}_0}$ forms an orthonormal basis for $L^2_\eta$. Similarly, $\langle \varphi_\alpha,\varphi_\beta \rangle_{H^1_{\eta}} = (\alpha+1)! \, \delta_{\alpha,\beta}$ so that $\{\varphi_{\alpha}/\sqrt{(\alpha+1)!}\}_{\alpha\in\mathbb{N}_0}$ forms a orthonormal %
basis for $H^1_{\eta}$; see~\cite[Proposition 1.3]{schmuland1992dirichlet}. We can thus tensorize these basis functions to obtain an orthonormal basis for the anisotropic space $V_k$ \eqref{eq:tensorproduct_Vk}.

In practice, we modify the univariate Hermite polynomials to be linear outside of an arbitrary compact domain. Following~\cite{parno2018transport}, we let
\begin{equation} \label{eq:linearixed_poly}
    \psi_{\alpha_j}(x_j) =
    \frac{1}{\sqrt{Z_{\alpha_j}}}\left\{ \begin{array}{ll}
    \varphi_{\alpha_j}(x_j^a) + \varphi_{\alpha_j}'(x_j^a)(x_j - x_j^a) & \text{ if }x_j < x_j^a \\
    \varphi_{\alpha_j}(x_j) &\text{ if } x_j^a \leq x \leq x_j^b \\ 
    \varphi_{\alpha_j}(x_j^b) + \varphi_{\alpha_j}'(x_j^b)(x_j - x_j^b) & \text{ if }x_j > x_j^b \\
    \end{array} \right.
\end{equation} 
where $Z_{\alpha_j} = \alpha_j!$ for $j = 1,\dots,k-1$ and $Z_{\alpha_j} = (\alpha_j+1)!$ for $j=k$. In our numerical experiments, we set $x_j^a$ and $x_j^b$ to be the $0.01$ and $0.99$ (empirical) quantiles, respectively, of the marginal sample $\{X_j^i\}_{i=1}^{n}$. The basis $\{\psi_{\alpha_j}\}_{\alpha_j \in \mathbb{N}_0}$, albeit not exactly orthonormal, is close to being orthonormal for sufficiently small $x_j^a$ and large $x_j^b$. This is important for the numerical stability of the algorithm presented in the following section.%

The basis $\{\psi_{\balpha}\}_{\balpha}$ generated by the piecewise polynomials \eqref{eq:linearixed_poly} has numerical and structural advantages over the %
Hermite polynomials.
First, these functions avoid the fast growth rate of standard polynomials as $\|\bx\| \rightarrow \infty$, providing a more stable extrapolation of the map to the tails of $\pi$, where there are typically no samples available to otherwise constrain its growth. 
Second, a map that reverts to linearity far from the origin %
yields densities %
in the class considered in Subsection~\ref{subsubsec:uniqueKR}; see condition~\eqref{eq:margConditional_bothTailsBounded}.
Indeed, each map component $S_k = \Rectifier_k(f)$ with $f\in\text{span}\{\psi_{\balpha}:\balpha\in\Lambda\}$ behaves linearly as $\vert x_k \vert \rightarrow \infty$, so that the pullback density $S^\sharp\eta$ has Gaussian tails like the reference $\eta$. %

\subsection{Wavelet basis} \label{subsec:wavelets}

Wavelets are a popular approach for approximating functions that are not uniformly smooth~\cite{cohen2003numerical, vidakovic2009statistical}. In particular, wavelet techniques define a multi-resolution approximation to a function that can better capture local features than, for instance, global polynomials. 
Given a compactly supported function $\psi\colon\R\rightarrow\R$  called the mother wavelet, and indices $(l,q)\in \mathbb{Z}^2$, the function
$$
 \psi_{(l,q)} : x\mapsto 2^{l/2}\psi(2^{l} x - q),
$$
is the $q$th wavelet at the level $l$.
Common choices for the mother wavelet include the Haar, the Daubechies, and the Meyer wavelets. 

In our numerical experiments, we use the continuous Mexican hat mother wavelet
\begin{equation} \label{eq:ricker_wavelet} 
    \psi(x) = \frac{2}{\sqrt{3\sigma}\pi^{1/4}} (1-\left(x/\sigma\right)^2)\exp(-x^2/(2\sigma^2)),
\end{equation}
with scale parameter $\sigma = 1$~\cite{mallat1999wavelet}, which is obtained from the second derivative of the univariate Gaussian density. The Mexican hat function is also commonly referred to as the Ricker wavelet in the geophysics community. We treat this function as having essentially compact support on $[-6,6]$, up to numerical precision. %

Tensorizing a univariate wavelet in the $k$th coordinate direction with modified %
Hermite polynomials in the first $k-1$ directions yields an expansion of the form in~\eqref{eq:linear_exp}, where the indices in $\balpha = (\alpha_1,\hdots,\alpha_k)$  %
are tuples of the form %
\begin{equation}\label{eq:ptnegor35}
    \alpha_{1},\hdots,\alpha_{k-1}\in\mathbb{N}_0,
    \quad \alpha_{k} = (l_k,q_k) \in \mathbb{Z}^2.
\end{equation}
In practice, we consider a truncated set of wavelets by
first rescaling the mother wavelet $\psi$ to have the same support as the interval between the 0.01 and 0.99 empirical quantiles of the marginal sample $\{X_j^i\}_{i=1}^n$. %
Then, we constrain $l_j \geq 0$ and consider $l_j = 0$ to be the coarsest level of the approximation. Next, for any $l_j$, the translation parameter $q_j$ is bounded as $0 \leq q_j \leq 2^{l_j} - 1$. %
Thus, we can consider %
$$
 \alpha_{1},\hdots,\alpha_{k-1}\in\mathbb{N}_0,
    \quad \alpha_{k} = (l_k,q_k) \in \mathbb{N}_0^2.
$$
rather than \eqref{eq:ptnegor35}.

Lastly, we note that any function $f$ of the form \eqref{eq:linear_exp} expanded with compactly supported wavelets will decay to $0$ as $|x_k| \rightarrow \infty$. Thus, the map $S_k = \Rectifier_k(f)$ will have asymptotic linear growth as $|x_k| \rightarrow \infty$, similarly to the modified %
Hermite polynomial expansions.

\subsection{Adaptive transport map algorithm}
\label{subsec:choosing_basis}

Now we propose a greedy method for constructing the multi-index set $\Lambda=\Lambda_t$ in \eqref{eq:linear_exp}. 
For simplicity, we consider only the case of single indices $\alpha_j\in\mathbb{N}_0$. The extension to the case $\alpha_j\in\mathbb{N}_0^2$ (as in the wavelet construction above) %
is described in Appendix~\ref{subsec:multi_index_wavelet}. 
At each greedy iteration $t$, we add one multi-index $\balpha_t^\ast$ to $\Lambda_{t}$. Starting with $\Lambda_{0} = \emptyset$ we let
\begin{equation}\label{eq:greedyLambda}
    \Lambda_{t+1} = \Lambda_{t} \cup \{\balpha_t^\ast\} ,
\end{equation}
where $\balpha_t^\ast \notin\Lambda_{t}$ is a multi-index that yields the \emph{best} improvement of $\widehat{\mathcal{L}}_k$ in a sense to be defined below.
Borrowing ideas from \cite{migliorati2015adaptive,migliorati2019adaptive}, we constrain the sets $\{\Lambda_{t}\}_{t\geq0}$ to be \emph{downward closed} \cite{chkifa2015breaking,cohen2018multivariate}, meaning that they satisfy the property
\begin{equation}\label{eq:downwardClosedLambdaK}
 \balpha \in \Lambda_t \text{ and } \balpha'\leq\balpha ~\Rightarrow~ \balpha' \in \Lambda_t,
\end{equation}
where $\balpha'\leq\balpha $ denotes $\alpha_i'\leq\alpha_i$ for all $1\leq i \leq k$. 
Intuitively, \eqref{eq:downwardClosedLambdaK} means that $\Lambda_t$ has a pyramidal shape containing no holes. Downward-closed sets are known to preserve good approximation properties and permit a tractable construction of $\Lambda_t$; see~\cite{cohen2018multivariate}.
Indeed, $\Lambda_{t+1}$ remains downward-closed if and only if the multi-index $\balpha_t^\ast$ is selected from the so-called \emph{reduced margin} of $\Lambda_{t}$, defined by
$$
 \Lambda_{t}^{\text{RM}}
 = \{ \balpha\notin \Lambda_t  \, : \, %
 \balpha-\mathbf{e}_i\in\Lambda_t \text{ for all } 1\leq i \leq k \text{ with } \alpha_i\neq0 \} ,
$$
where $\mathbf{e}_i$ denotes the $i$-th canonical vector of $\mathbb{N}^k$.
The reduced margin is a subset of the margin set $\Lambda_{t}^{\text{M}}$ (i.e., multi-indices $\balpha \not\in \Lambda_{t}$ such that $\exists \, i > 0$ where $\balpha - \mathbf{e}_{i} \in \Lambda_{t}$); see Figure~\ref{fig:multi_indices}. 
The size of the reduced margin typically grows more slowly with respect to the dimension $k$ than the margin itself. For instance, if $\Lambda_{t}$ contains all multi-indices in a hypercube of width $p$ in dimension $k$, the margin has cardinality $(p+1)^{k} - p^{k}$, while the reduced margin has cardinality $k$.

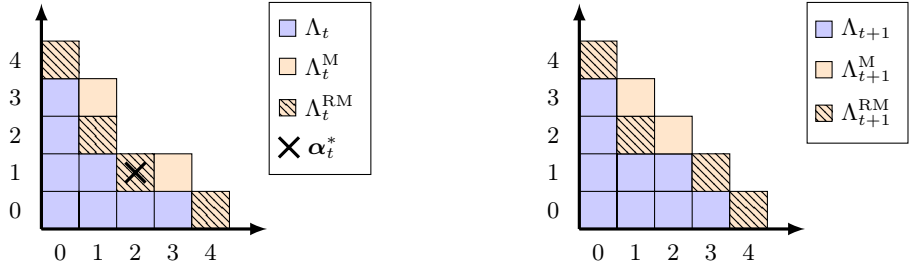
\begin{figure}
  \begin{center}
   \usetikzlibrary{patterns}
\newcommand{\midxscale}{0.5}

\begin{minipage}{0.45\linewidth}
\centering
\begin{tikzpicture}[scale=\midxscale]

\draw[fill=blue!20] (0,0)--(0,4)--(1,4)--(1,2)--(2,2)--(2,1)--(4,1)--(4,0)--cycle;
\draw[color=black] (0,0) grid (1,4) (1,0) grid (4,1);

\draw[preaction={fill, orange!20}, pattern=north west lines] (0,4)--(1,4)--(1,5)--(0,5)--cycle;
\draw[preaction={fill, orange!20}, pattern=north west lines] (1,2)--(1,3)--(2,3)--(2,2)--cycle;
\draw[preaction={fill, orange!20}, pattern=north west lines] (2,1)--(2,2)--(3,2)--(3,1)--cycle;
\draw[preaction={fill, orange!20}, pattern=north west lines] (4,0)--(4,1)--(5,1)--(5,0)--cycle;
\draw[preaction={fill, orange!20}] (1,3)--(1,4)--(2,4)--(2,3)--cycle;
\draw[preaction={fill, orange!20}] (3,1)--(3,2)--(4,2)--(4,1)--cycle;
\draw (2.5,1.5) node[cross] {};

\draw[very thick,-latex] (0,0) -- (6,0);
\draw[very thick,-latex] (0,0) -- (0,6);

\foreach \x in {0,1,...,4} { \node [anchor=north] at (\x+0.5,-0.2) {\x}; }
\foreach \y in {0,1,...,4} { \node [anchor=east] at (-0.3,\y+0.5) {\y}; }

\matrix [draw,below right] at (current bounding box.north east) {
  \node [fill=blue!20,draw,label=right:{\footnotesize $\Lambda_{t}$}] {}; \\
  \node [preaction={fill, orange!20},draw,label=right:{\footnotesize $\Lambda_{t}^{\text{M}}$}] {}; \\
  \node [preaction={fill, orange!20}, pattern=north west lines,draw,label=right:{\footnotesize $\Lambda_{t}^{\text{RM}}$}] {}; \\
  \node [cross,label=right:{\footnotesize $\balpha_{t}^{*}$}] {}; \\
};

\end{tikzpicture}
\end{minipage}
\hspace{-0.5cm}
\begin{minipage}{0.45\linewidth}
\centering
\begin{tikzpicture}[scale=\midxscale]

\draw[fill=blue!20] (0,0)--(0,4)--(1,4)--(1,2)--(2,2)--(3,2)--(3,1)--(4,1)--(4,0)--cycle;
\draw[color=black] (0,0) grid (1,4) (1,1) grid (3,2) (1,0) grid (4,1);

\draw[preaction={fill, orange!20}, pattern=north west lines] (0,4)--(1,4)--(1,5)--(0,5)--cycle;
\draw[preaction={fill, orange!20}, pattern=north west lines] (1,2)--(1,3)--(2,3)--(2,2)--cycle;
\draw[preaction={fill, orange!20}, pattern=north west lines] (4,0)--(4,1)--(5,1)--(5,0)--cycle;
\draw[preaction={fill, orange!20},pattern=north west lines] (3,1)--(3,2)--(4,2)--(4,1)--cycle;
\draw[preaction={fill, orange!20}] (1,3)--(1,4)--(2,4)--(2,3)--cycle;
\draw[preaction={fill, orange!20}] (2,2)--(2,3)--(3,3)--(3,2)--cycle;

\draw[very thick,-latex] (0,0) -- (6,0);
\draw[very thick,-latex] (0,0) -- (0,6);

\foreach \x in {0,1,...,4} { \node [anchor=north] at (\x+0.5,-0.2) {\x}; }
\foreach \y in {0,1,...,4} { \node [anchor=east] at (-0.3,\y+0.5) {\y}; }

\matrix [draw,below right] at (current bounding box.north east) {
  \node [fill=blue!20,draw,label=right:{\footnotesize $\Lambda_{t+1}$}] {}; \\
  \node [preaction={fill, orange!20},draw,label=right:{\footnotesize $\Lambda_{t+1}^{\text{M}}$}] {}; \\
  \node [preaction={fill, orange!20}, pattern=north west lines,draw,label=right:{\footnotesize $\Lambda_{t+1}^{\text{RM}}$}] {}; \\
};

\end{tikzpicture}   
\end{minipage}
  \end{center}
  \caption{A $k=2$ dimensional downward-closed active set of multi-indices $\Lambda_{t}$ with its margin $\Lambda_{t}^{\text{M}}$ and reduced margin $\Lambda_{t}^{\text{RM}}$. The margin and reduced margins are plotted before (\emph{left}) and after (\emph{right}) adding to $\balpha_t^\ast = (2,1)$, that is denoted with a cross, to $\Lambda_{t}$.~\label{fig:multi_indices}}
\end{figure}

Denoting by $f_t$ the minimizer of $f\mapsto \widehat{\mathcal{L}}_{k}(f)$ over $f\in\text{span}\{ \psi_{\balpha},\balpha\in\Lambda_t \}$, we select $\balpha_t^\ast$ in the reduced margin $\Lambda_{t}^{\text{RM}}$ with the following heuristic
\begin{equation} \label{eq:newAlpha}
\balpha_{t}^{*} \in \argmax_{\balpha \in \Lambda_{t}^{\text{RM}}} |\nabla_{ {\balpha}} \widehat{\mathcal{L}}_{k}(f_{t})|.
\end{equation}
Here, the notation $\nabla_{{\balpha}} \widehat{\mathcal{L}}_{k}(f_{t})$ denotes the derivative of $c_{\balpha}\mapsto \widehat{\mathcal{L}}_{k}(f_{t} + c_{\balpha} \psi_{\balpha})$ evaluated at $c_{\balpha}=0$.
In other words, we select the multi-index $\balpha_{t}^{*}$ by choosing the largest functional derivative of $\widehat{\mathcal{L}}_{k}$ at $f_{t}$ with respect to the basis functions not contained in the current expansion.
This greedy procedure for learning each map component is presented in detail in Algorithm~\ref{alg:learn-map}.

\begin{algorithm}[!htb]
   \caption{Estimate map component $S_{k}$}
   \label{alg:learn-map}
\begin{algorithmic}[1]
   \STATE {\bfseries Input}: training sample $\{\bsx_{1:k}^{i}\}_{i=1}^{n}$, maximum cardinality $m$ for $\Lambda_t$
   \STATE Initialize $\Lambda_{0} = \emptyset$, $f_{0} = 0$%
   \FOR{$t=0,\dots,m-1$}
   \STATE Construct the reduced margin: $\Lambda_{t}^{\text{RM}}$
   \STATE Select the new multi-index: $\balpha_{t}^{*} \in \argmax_{\balpha \in \Lambda_{t}^{\text{RM}}} |\nabla_{{\balpha}} \widehat{\mathcal{L}}_{k}(f_{t})|$
   \STATE Update the active set: $\Lambda_{t+1} = \Lambda_{t} \cup \{\balpha_{t}^{*}\}$
   \STATE Update the approximation: $f_{t+1} = \argmin_{f \in \text{span}\{\psi_{\balpha}:\balpha\in \Lambda_{t+1}\}} \widehat{\mathcal{L}}_{k}(f)$
   \ENDFOR
   \STATE {\bfseries Output}: $\widehat{S}_{k} = \mathcal{R}_{k}(f_{m})$
\end{algorithmic}
\end{algorithm}

\rev{We emphasize that this procedure achieves adaptivity by refining the approximation spaces for $(f_k)_{k=1}^d$, and seeking $f_k$ in these \textit{linear} spaces at each iteration of Algorithm~\ref{alg:learn-map}. 
This approach contrasts with normalizing flows that are typically formulated over \textit{nonlinear} spaces of deep neural networks; see~\cite{wehenkel2019} for an example of combining neural networks with a rectifier to represent monotone maps. 
Training the latter can be understood as simultaneously minimizing the KL objective \eqref{eq:KL_decomposition} and learning features relevant to the approximation of $f_k$, but to our knowledge optimization guarantees for such models are unavailable and much more difficult to achieve.} %

\begin{remark}
If the objective function $\widehat{\mathcal{L}}_k$ is the linear least-squares loss, then Algorithm~\ref{alg:learn-map} corresponds to orthogonal matching pursuit for sparse linear regression with normalized basis functions. An alternative heuristic for selecting $\balpha_t^\ast$ that requires second-order information from the objective is $\balpha_{t}^\ast \in \argmax_{\balpha \in \Lambda_t^{\text{RM}}} |\nabla_{\balpha} \widehat{\mathcal{L}}(f_t)|^2/|\nabla^2_{\balpha} \widehat{\mathcal{L}}(f_t)|$. We found this criterion to perform similarly to~\eqref{eq:newAlpha} for the polynomial basis, but it led to faster convergence of the objective when working with the wavelet basis.
\end{remark}

\begin{remark}
A potential drawback of Algorithm~\ref{alg:learn-map} is that the greedy enrichment procedure cannot ``see'' behind the reduced margin. For instance, if a relevant multi-index is located far beyond the reduced margin and the gradient $\nabla_{{\balpha}} \widehat{\mathcal{L}}_{k}(f_{t})$ vanishes for all $\balpha\in\Lambda_t^{RM}$, then the algorithm will be stuck. \cite{migliorati2019adaptive} suggested a safeguard mechanism to avoid this behaviour: arbitrarily activate the most ancient index from the reduced margin every $t_{\textrm{sg}} \in \mathbb{N}$ iterations. This modification, however, was not needed in our numerical experiments.
\end{remark}

\begin{remark}
As pointed out in \cite{cohen2018multivariate,migliorati2015adaptive,migliorati2019adaptive}, adding multiple multi-indices at each greedy iteration could also yield better performance compared to adding only one multi-index at a time. 
The so-called ``bulk-chasing'' procedure identifies a subset $\lambda_t^\ast \subset \Lambda_{t}^{\text{RM}}$ of multi-indices that capture a fraction of the $L_2$-norm of the gradient $\nabla_{\balpha} \widehat{\mathcal{L}}_k(f_t)$ along the reduced margin. %
\end{remark}

To determine the maximal cardinality of $\Lambda_t$ (i.e., when to stop adaptation) we use $\nu$-fold cross-validation as in~\cite{wasserman2013all,bigoni2021nonlinear}. For each fold, we run Algorithm~\ref{alg:learn-map} with $\nu-1$ folds of the training data for $m = n \frac{\nu-1}{\nu}$ iterations, and evaluate the objective function in~\eqref{eq:empirical_objective} on the held-out test set. We then
select the number of terms $m^\ast \leq n$ in the expansion \eqref{eq:linear_exp} that minimizes the test error averaged over the $\nu$ folds. The full sample is then used to run  Algorithm~\ref{alg:learn-map} for $m^\ast$ iterations.
The complete procedure for learning each map component $S_k$, with cross-validation, is called the Adaptive Transport Map (\ALG) algorithm. This procedure is detailed in Algorithm~\ref{alg:learn-map-withcv}. In practice, we also stop the training on each fold early if the log-likelihood of the test set does not continue to increase for more than $20$ iterations.

The cross-validation procedure produces an approximation for $f$ with an expansion whose cardinality is \emph{adapted} to the size of the training sample $n$. With a larger sample, we observe that the cross-validation procedure reliably adds more parameters, when needed, to reduce the bias in the approximation while controlling the variance of the estimated map parameters. Thus, we consider \ALG a \emph{semi-parametric} approach for approximating monotone triangular transport maps. Adaptation of the map complexity to the sample size $n$ will be demonstrated in the numerical results to follow.

\begin{algorithm}[!htb]
   \caption{\ALG algorithm for learning map component $S_k$} %
   \label{alg:learn-map-withcv}
\begin{algorithmic}[1]
   \STATE {\bfseries Input}: Training sample $\bchi = \{\bsx_{1:k}^{i}\}_{i=1}^{n}$, number of folds $\nu$
   \STATE Partition data into $\nu$ folds of equal size
   \FOR{$j = 1,\dots,\nu$}
   \STATE Partition $\bchi$: $\bchi^{\text{test}}_{j}$ is the $j$th subset of $\bchi$ and $\bchi^{\text{train}}_{j} = \bchi \setminus \bchi^{\text{test}}_{j}$
   \STATE Estimate $f_{t}$ using Algorithm~\ref{alg:learn-map} for $m = |\bchi^{\text{train}}_{j}|$ iterations with sample $\bchi^{\text{train}}_{j}$ 
   \STATE Store iterates $\widehat{S}_{k}^{j,1} \coloneqq \Rectifier_k(f_{1}),\dots,\widehat{S}_{k}^{j,m} \coloneqq \Rectifier_k(f_{m})$
   \STATE Evaluate log-likelihood $\log(\widehat{S}^{j,t}_{k})^\sharp\eta$ for iterations $t=1,\dots,m$ with sample $\bchi^{\text{test}}_{j}$ 
   \ENDFOR
   \STATE Define $m^\ast \in \{1, \ldots, m\}$ as the minimizer of negative log-likelihood $t \mapsto \sum_{j=1}^{\nu} -\log(\widehat{S}^{j,t}_{k})^\sharp\eta(\bchi^{\text{test}}_{j})$
   \STATE Estimate map $f^\ast$ using Algorithm~\ref{alg:learn-map} for $m^\ast$ iterations with  sample $\bchi$
   \STATE {\bfseries Output}: $\widehat{S}_{k} = \mathcal{R}_{k}(f^\ast)$
\end{algorithmic}
\end{algorithm}

\section{Numerical experiments} \label{sec:experiments}
In this section, we evaluate the performance of the \ALG algorithm for learning monotone triangular transport maps associated with a variety of target distributions. %
Sections~\ref{subsec:onedimensional} and \ref{subsec:twodim_data} evaluate and visualize the approximation power of the proposed method for strongly non-Gaussian targets.
Sections~\ref{subsec:adaptivity} illustrates the benefits of adaptivity over a range of sample sizes, and
Section~\ref{subsec:stoc_volatility} demonstrates how the estimated maps reveal and exploit conditional independence structure in the target distribution. %
Section \ref{sec:tabular} presents joint and conditional density estimation results for a suite of UCI datasets. %
Code for reproducing all of these numerical results is available online.\footnote{\url{https://github.com/baptistar/ATM}}

In all of the numerical experiments, we pre-process the data by standardizing each marginal, i.e., subtracting the empirical mean and dividing each component of $\bsx$ by its empirical standard deviation. %
Within the rectifier $\Rectifier_k$, we employ the modified soft-plus function $g(\xi) = \log(1 + 2^\xi)/\log(2)$, which satisfies the conditions of Propositions~\ref{prop:finite_objective}, \ref{prop:continuity_Rinv}, and \ref{prop:Convexity_ImR}. This function also satisfies $g(0) = 1$, so that $f(\bx_{1:k}) = 0$ is transformed into $\Rectifier_k(f)(\bx_{1:k}) = x_k$.
We evaluate the integral in~\eqref{eq:bijective_transformation} numerically %
using an adaptive quadrature method based on Clenshaw-Curtis points with a relative error of $10^{-3}$. At each iteration of the \ALG algorithm, we optimize over the coefficients $c_{\balpha}$ for $\balpha \in \Lambda_{t}$ using a BFGS (quasi-Newton) method~\cite{nocedal2006}.

\subsection{One-dimensional examples} \label{subsec:onedimensional}

We first consider a one-dimensional mixture of Gaussians with  density $\pi(x) = 0.5\mathcal{N}(x;-2,0.5) + 0.5\mathcal{N}(x;2,2)$. To estimate $\pi$, we generate $n = 10^4$ training samples $\{\sx^i\}_{i=1}^{n}$ and use them to estimate the map $S_{\textrm{KR}}$ that pushes forward $\pi$ to $\eta$. In one dimension, the KR rearrangement (which coincides with the optimal transport map~\cite{santambrogio2015optimal}) is simply the increasing function $S_{\textrm{KR}} = F_\eta^{-1} \circ F_\pi$, where $F_\eta$ and $F_\pi$ denote the cumulative distribution functions of the reference and target distributions, respectively. We can thus compare our estimated maps to this exact expression. 

We approximate $S$ as the transformation $\Rectifier(f)$ of a smooth non-monotone function $f$, seeking $f$ in the finite-dimensional space $V_1^p \subset V_{1}$ spanned by polynomials of degree at most $p$. 
Figure~\ref{fig:mog-PDFapprox} plots the estimated pullback densities $\widehat\pi = \widehat{S}^\sharp\eta$ 
for increasing polynomial degree $p$, or equivalently an increasing number of terms $\#\Lambda$ in $f$ \eqref{eq:linear_exp}. Figure~\ref{fig:mog-KL} shows convergence of this
approximation in KL divergence, again for increasing $p$. We also observe convergence of the estimated map $\widehat{S}$ towards the true KR rearrangement in the $L^2_\pi$ norm, as expected from the upper bound in~\eqref{eq:KLboundsL2norm}. %
Figure~\ref{fig:mogA_map_approximation} illustrates the approximation to the map $S$ and the associated non-monotone function $f$ for different choices of basis $\{\psi_\alpha\}_\alpha$. Here we observe that approximating $f$ using the modified Hermite polynomials \eqref{eq:linearixed_poly} very closely tracks the KR rearrangement $S_{\KR}$ and the function $f_{\KR} = \Rectifier^{-1}(S_{\KR})$ as $x \to \pm \infty$, as compared to standard Hermite polynomials or Hermite functions~\cite{boyd1984asymptotic}. 
Enforcing linear asymptotic behavior %
of the transport map ensures that the approximate distribution has Gaussian tails in regions where there are few samples.

\begin{figure}[!ht]
	\centering
	\begin{subfigure}[b]{0.45\linewidth}
	    \centering
		\includegraphics[width=\textwidth]{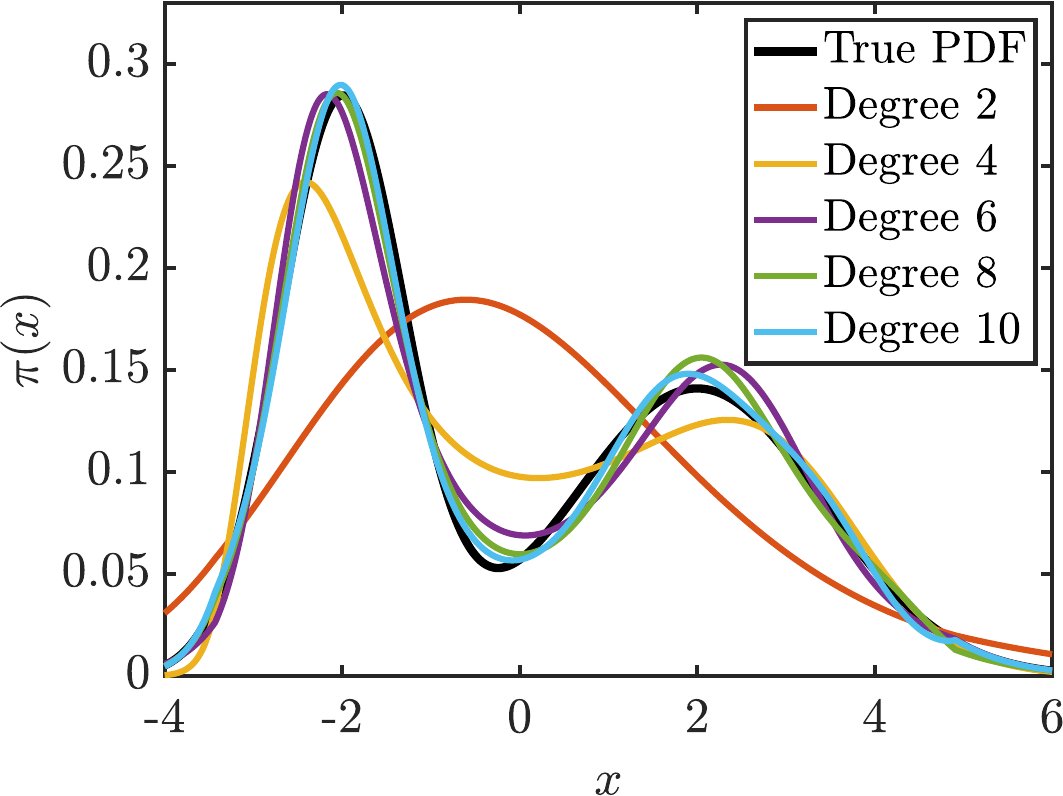}
		\caption{\label{fig:mog-PDFapprox}}
		\vspace{-0.2cm}
	\end{subfigure}
	\hspace{1cm}
	\begin{subfigure}[b]{0.45\linewidth}
	    \centering
		\includegraphics[width=0.96\textwidth]{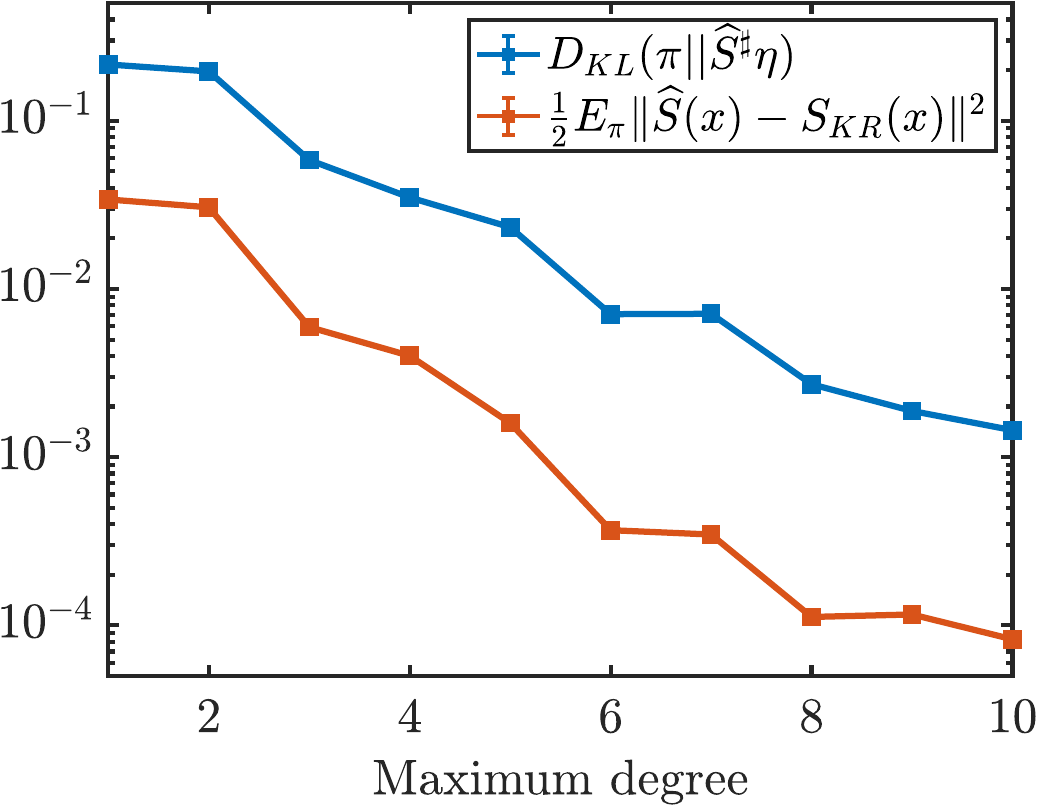}
		\caption{\label{fig:mog-KL}}
		\vspace{-0.2cm}
	\end{subfigure}
	\caption{(a) The pullback density $\widehat{S}^\sharp\eta$ approaches the target Gaussian mixture density $\pi$ %
	when increasing the maximum polynomial degree $p$ of the space $V_1^p \subset V_1$. (b) With increasing $p$, the pullback density converges to $\pi$ in KL divergence, and the estimated map converges to $S_{\textrm{KR}}$ in $L^2_\pi$.
	\label{fig:mog_bimodal}}
\end{figure}

\begin{figure}[!ht]
	\centering
	\begin{subfigure}{0.45\linewidth}
	    \centering
		\includegraphics[width=\textwidth]{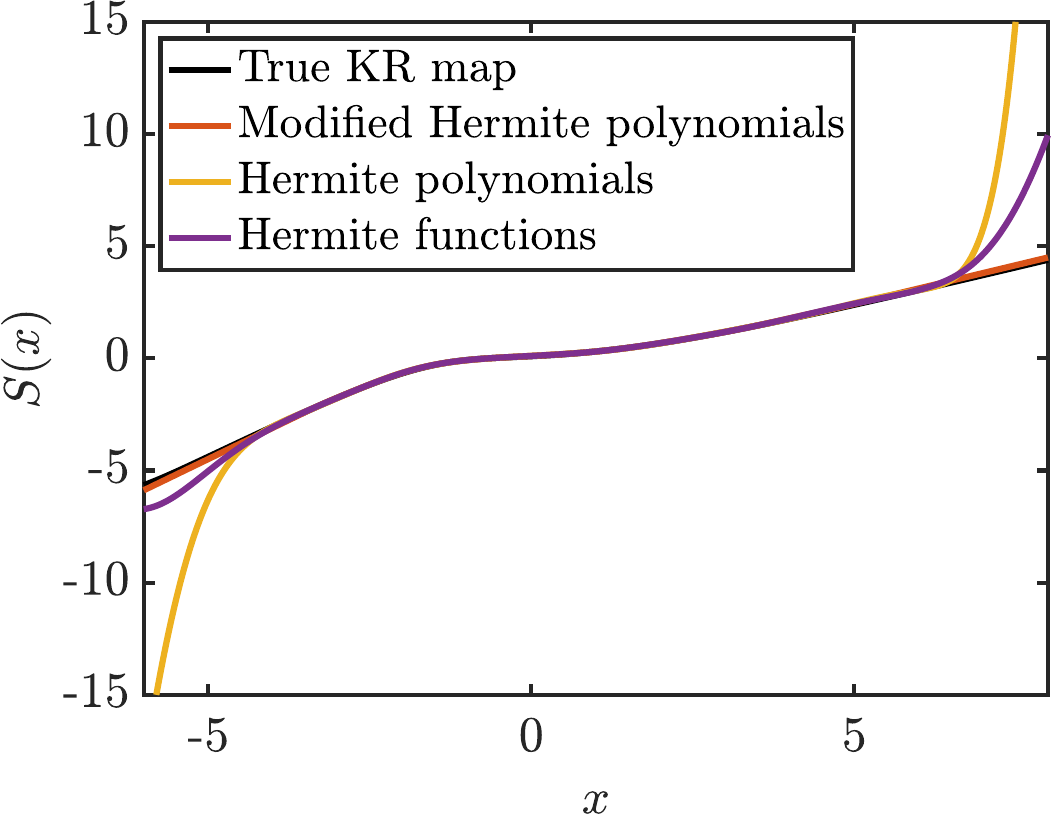}
		\caption{}
		\vspace{-0.2cm}
	\end{subfigure}
	\hspace{1cm}
	\begin{subfigure}{0.45\linewidth}
	    \centering
		\includegraphics[width=\textwidth]{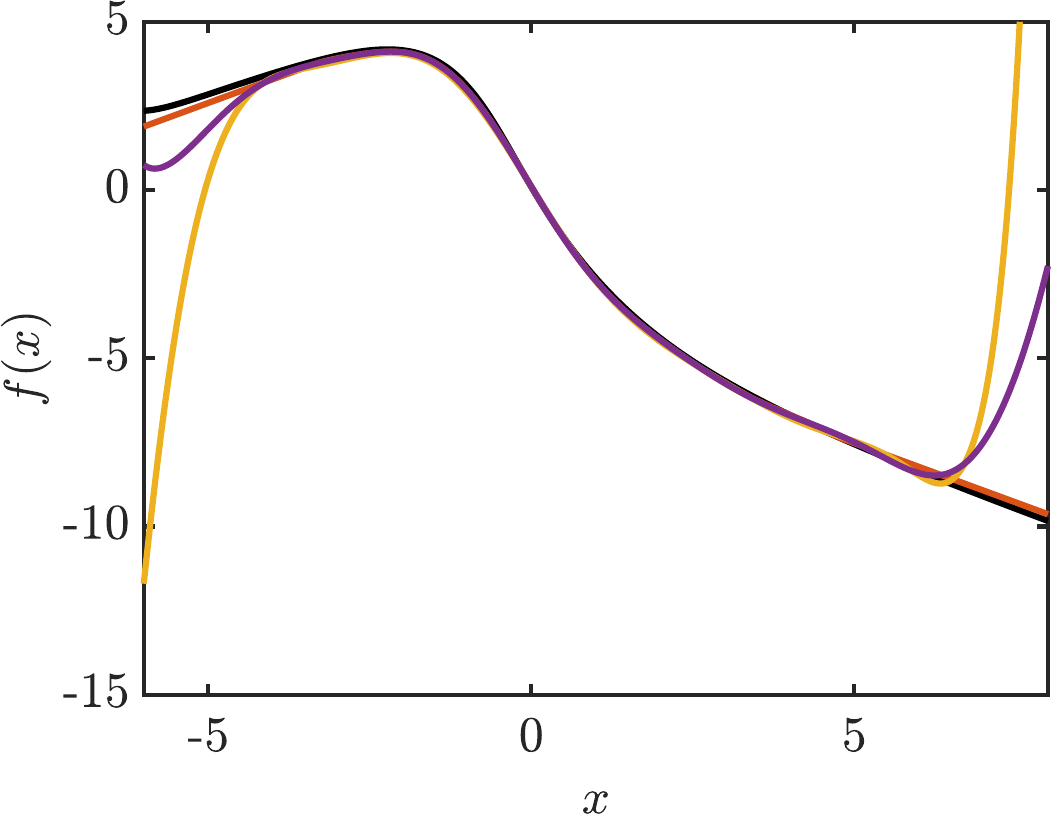}
		\caption{}
		\vspace{-0.2cm}
	\end{subfigure}
	\caption{(a) The approximate transport maps $\widehat{S}$ compared to $S_{\KR}$ (black), and (b) the corresponding non-monotone functions $f$ compared to $f_{\KR} \coloneqq \Rectifier^{-1}(S_{\KR})$. Both subfigures illustrate different choices of basis $\{\psi_\alpha\}_\alpha$, for the Gaussian mixture target of Figure~\ref{fig:mog_bimodal}. The modified Hermite polynomial basis provides the closest approximation to $S_{\KR}$ and $f_{\KR}$. \label{fig:mogA_map_approximation}}
\end{figure}

Next we consider a Gaussian mixture with density $\pi(x) = 0.5\mathcal{N}(x;0,1) + 0.5\mathcal{N}(x;0,0.025)$.%
This mixture is a common test case for sampling and density estimation methods, as it evaluates their ability to capture densities with multiple scales~\cite{sisson2007sequential}. Given $n = 10^4$ samples, we compute the approximate map using either modified Hermite polynomials (Section~\ref{subsec:polynomials}) or Mexican hat wavelets (Section~\ref{subsec:wavelets}). Figure~\ref{fig:mog_multiple_scales}(a) plots approximations to the target density using an $m = 15$ term expansion \eqref{eq:linear_exp} for $f$, while Figure~\ref{fig:mog_multiple_scales}(b) shows convergence in KL divergence for both the polynomial and wavelet bases. We observe that the polynomial approximation suffers from oscillations, while the wavelets better capture localized features. This results in a much faster convergence of the KL divergence for wavelets than for polynomials.

\begin{figure}[!ht]
	\centering
	\begin{subfigure}{0.45\linewidth}
	    \centering
		\includegraphics[width=\textwidth]{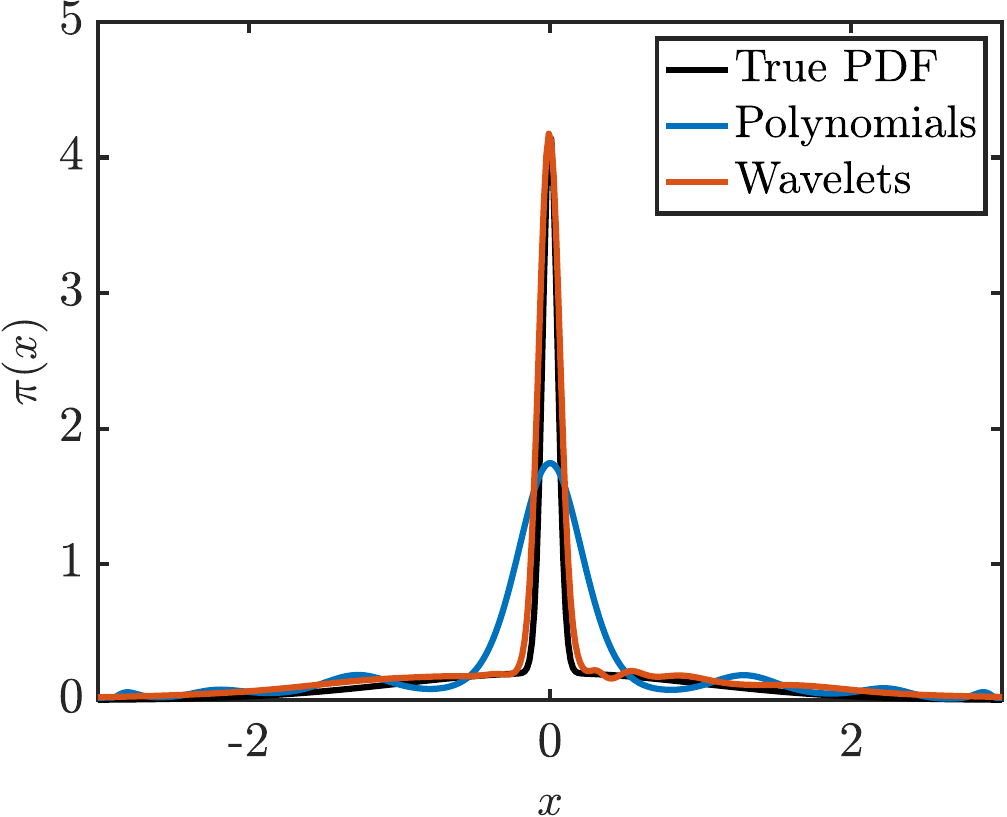}
		\caption{}
		\vspace{-0.2cm}
	\end{subfigure}
	\hspace{1cm}
	\begin{subfigure}{0.45\linewidth}
	    \centering
		\includegraphics[width=\textwidth]{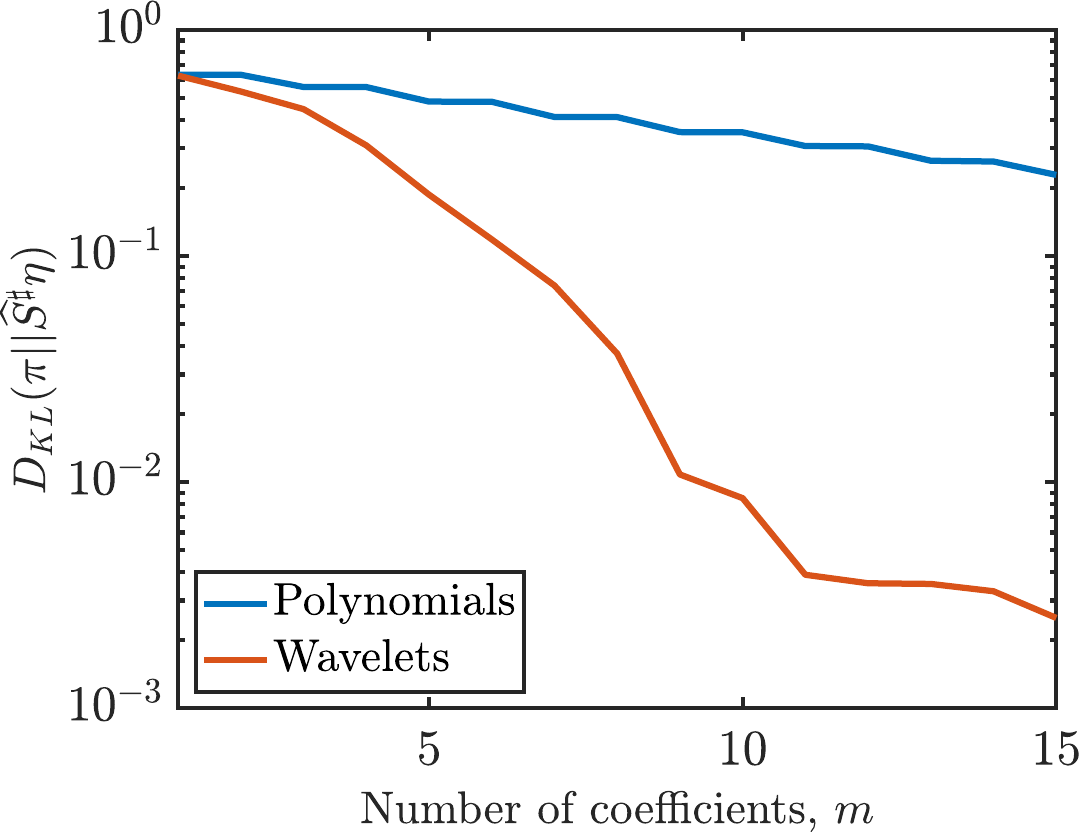}
		\caption{}
		\vspace{-0.2cm}
	\end{subfigure}
	\caption{Approximation of a scale mixture of Gaussians, using Hermite polynomials or Ricker wavelet expansions. (a) A wavelet expansion with $m = 15$ coefficients approximates the target density better than a polynomial expansion of the same size, and (b) yields significantly lower KL divergence for this example. 	\label{fig:mog_multiple_scales}}
\end{figure}

\subsection{Two-dimensional datasets} \label{subsec:twodim_data}

To demonstrate the expressiveness of the modified %
Hermite polynomial basis \eqref{eq:linearixed_poly}, we use the \ALG algorithm to approximate the Knothe--Rosenblatt rearrangement for several two-dimensional distributions with strongly non-Gaussian geometries: the ``banana,'' ``funnel,'' ``cosine,'' ``mixture of Gaussians (MoG),'' and ``ring'' distributions $\pi$ considered in~\cite{wenliang2019, jaini2019}. For each distribution, we generate an i.i.d.\ sample of size $n= 10^{4}$ from $\pi$ and apply a random rotation to the data using an angle uniformly distributed in $[0,\pi/2]$. %
Figure~\ref{fig:toy_problems} plots the true densities $\pi$ and the approximate densities $\widehat{\pi} \coloneqq \widehat{S}^{\sharp}\eta$ found using \ALG. We use $5$-fold cross-validation to identify the optimal number of elements in the multi-index set for each map component: $S_1$ and $S_2$. Figure~\ref{fig:toy_problems} indicates the total number of coefficients $\#\Lambda_{m^\ast}$ across both map components, underscoring the parsimony of the approximation. 

\begin{figure}[!ht]
	\captionsetup[subfigure]{labelformat=empty}
	\centering
	\subcaptionbox{}{\vspace{-10pt}\rotatebox[origin=t]{90}{Truth}}
	\begin{subfigure}{0.15\linewidth}
		\caption{Banana}
		\vskip -5pt
		\includegraphics[width=\textwidth]{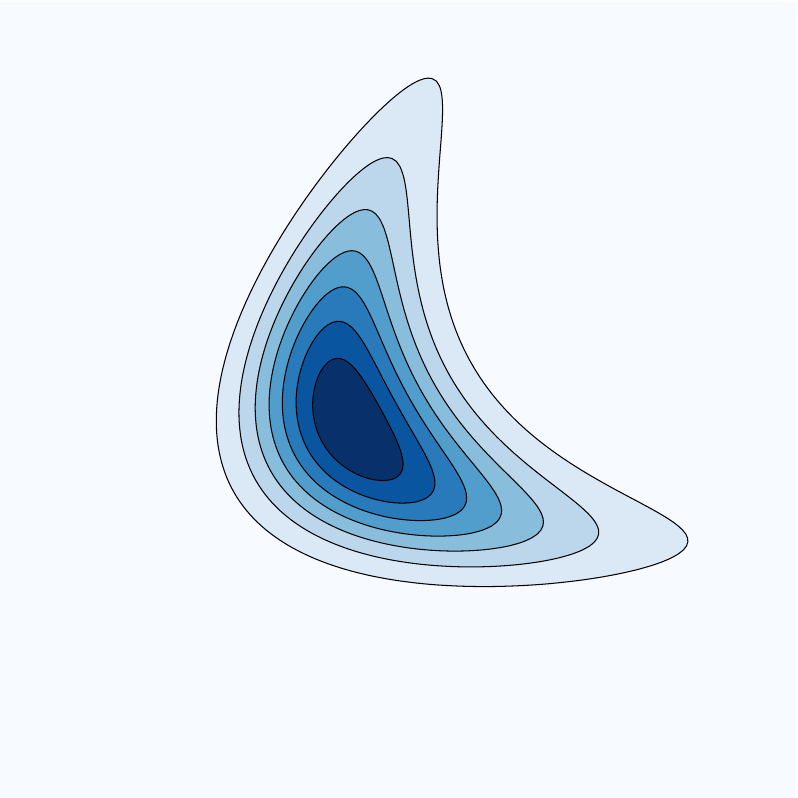}
	\end{subfigure}
	\begin{subfigure}{0.15\linewidth}
		\caption{Funnel}
		\vskip -5pt
		\includegraphics[width=\textwidth]{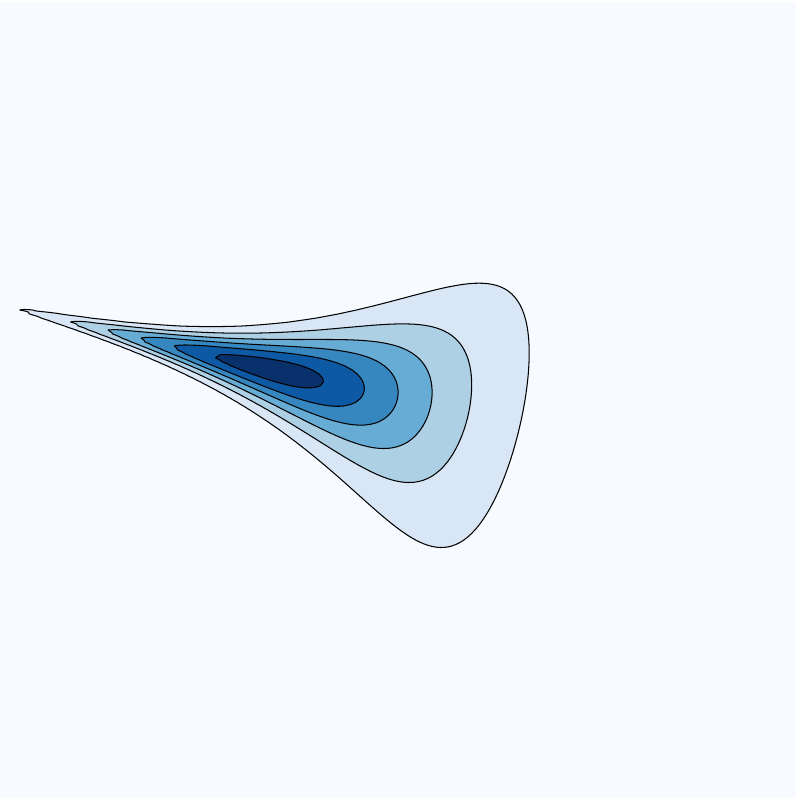}
	\end{subfigure}
	\begin{subfigure}{0.15\linewidth}
		\caption{Cosine}
		\vskip -5pt
		\includegraphics[width=\textwidth]{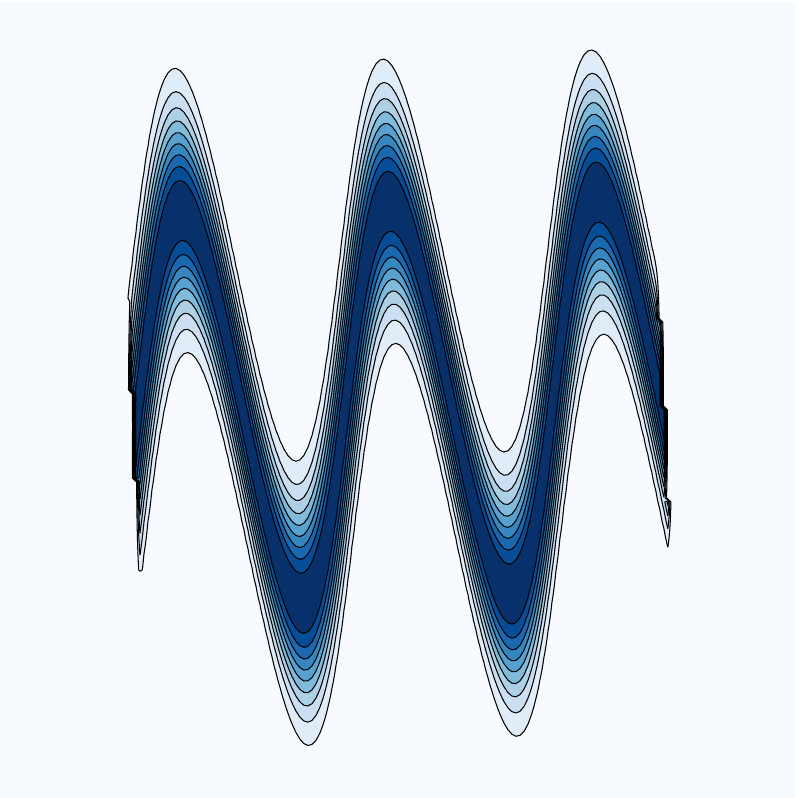}
	\end{subfigure}
	\begin{subfigure}{0.15\linewidth}
		\caption{MoG}
		\vskip -5pt
		\includegraphics[width=\textwidth]{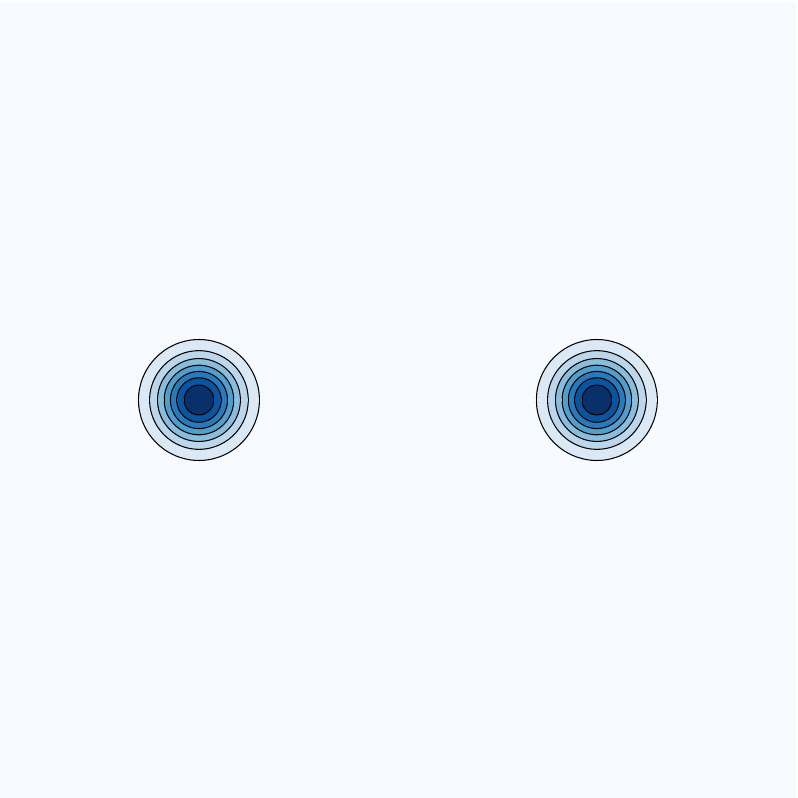}
	\end{subfigure}
	\begin{subfigure}{0.15\linewidth}
		\caption{Ring}
		\vskip -5pt
		\includegraphics[width=\textwidth]{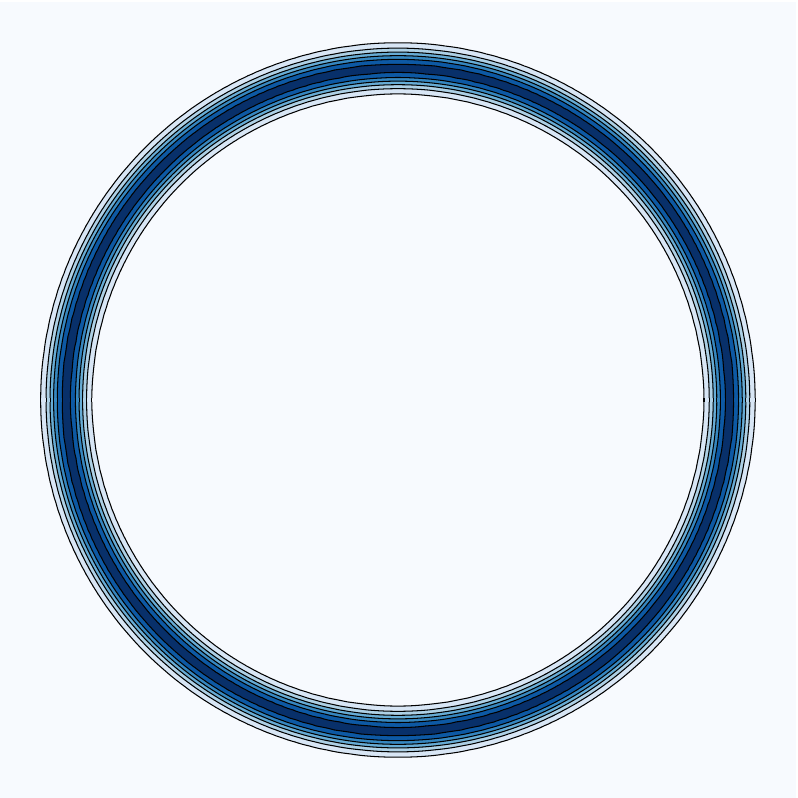}
	\end{subfigure}\\
	\subcaptionbox{}{\vspace{-5pt}\rotatebox[origin=t]{90}{\ALG}}
	\begin{subfigure}{0.15\linewidth}
	    \includegraphics[width=\textwidth]{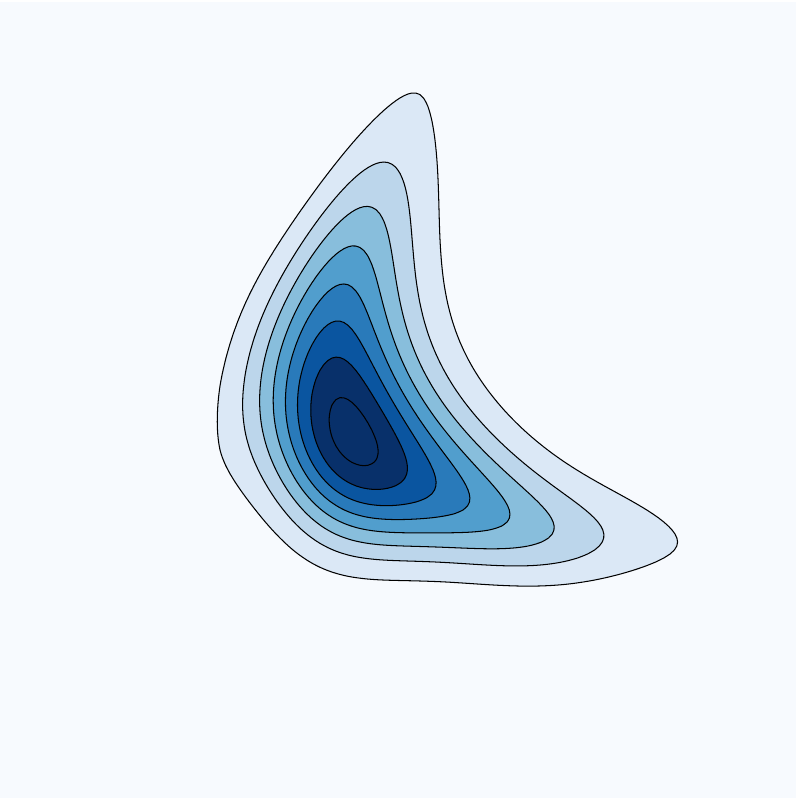}
		\caption{$\#\Lambda_{m^\ast} = 44$}
	\end{subfigure}
	\begin{subfigure}{0.15\linewidth}
	    \includegraphics[width=\textwidth]{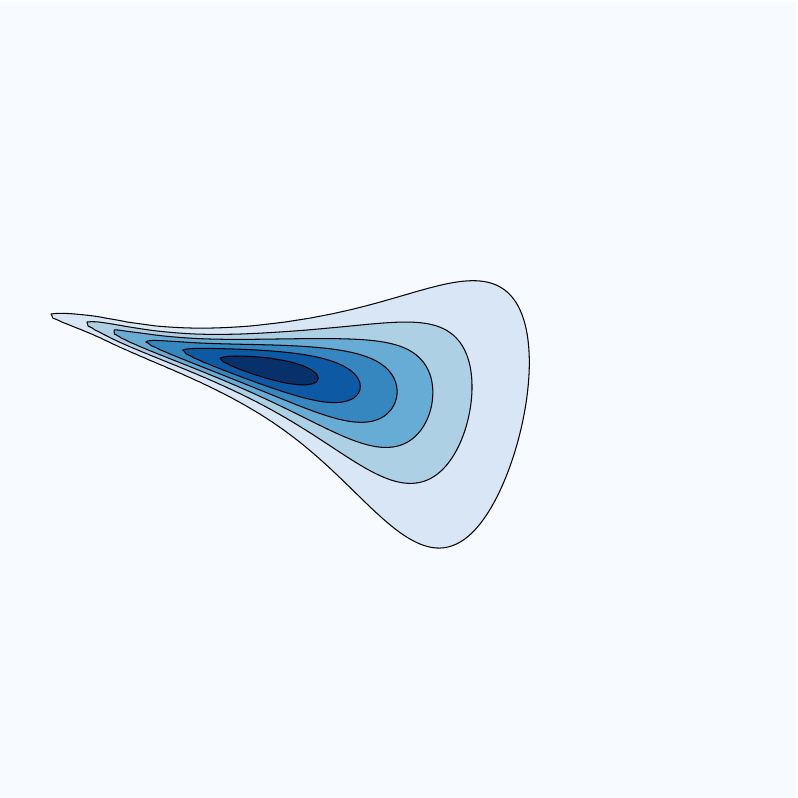}
		\caption{$\#\Lambda_{m^\ast} = 31$}
	\end{subfigure}
	\begin{subfigure}{0.15\linewidth}
	    \includegraphics[width=\textwidth]{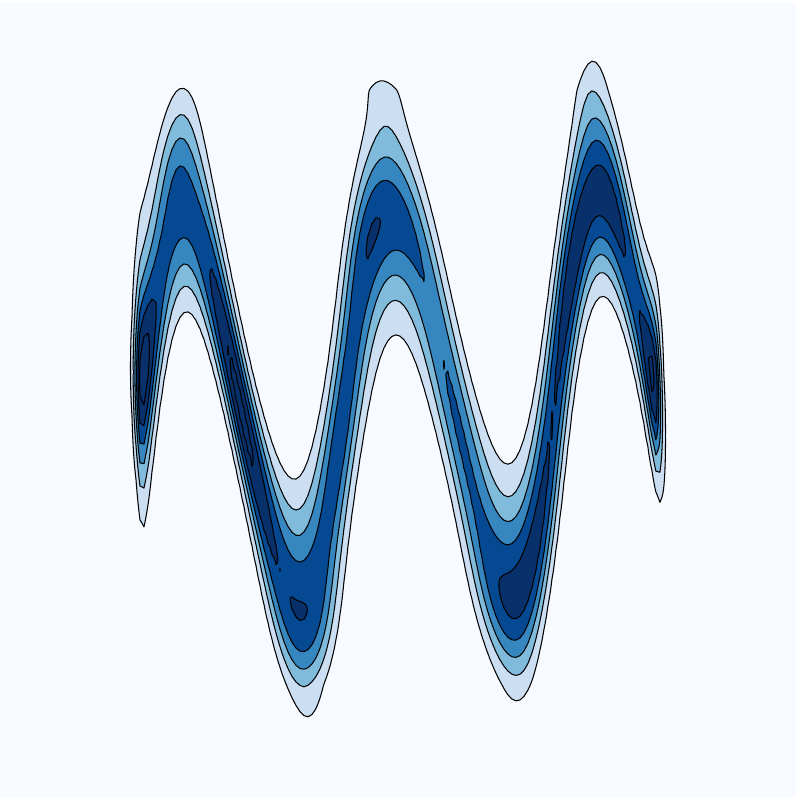}
		\caption{$\#\Lambda_{m^\ast} = 72$}
	\end{subfigure}
	\begin{subfigure}{0.15\linewidth}
		\includegraphics[width=\textwidth]{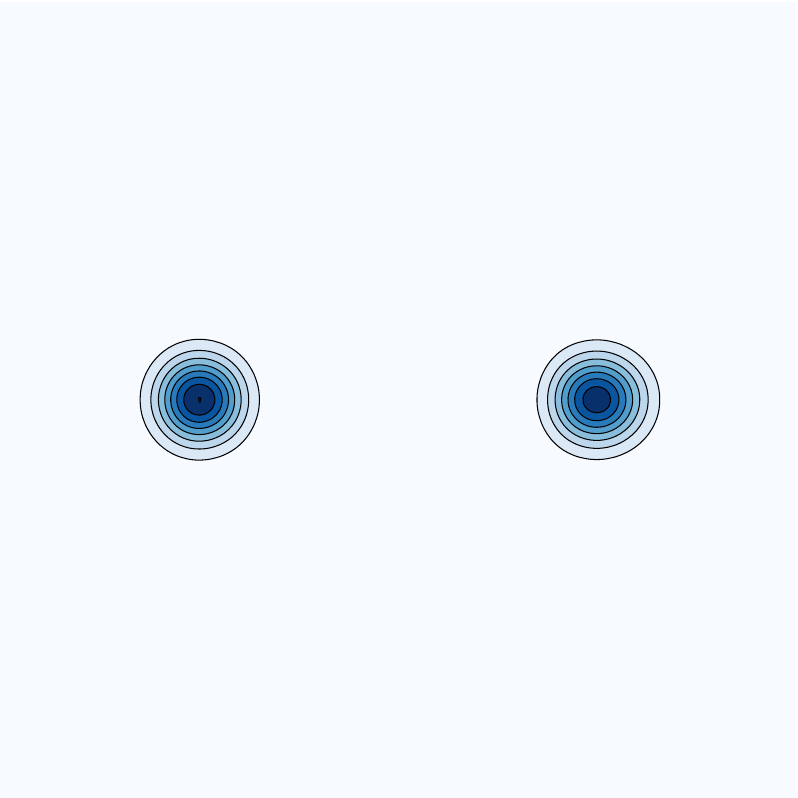}
		\caption{$\#\Lambda_{m^\ast} = 10$}
	\end{subfigure}
	\begin{subfigure}{0.15\linewidth}
	    \includegraphics[width=\textwidth]{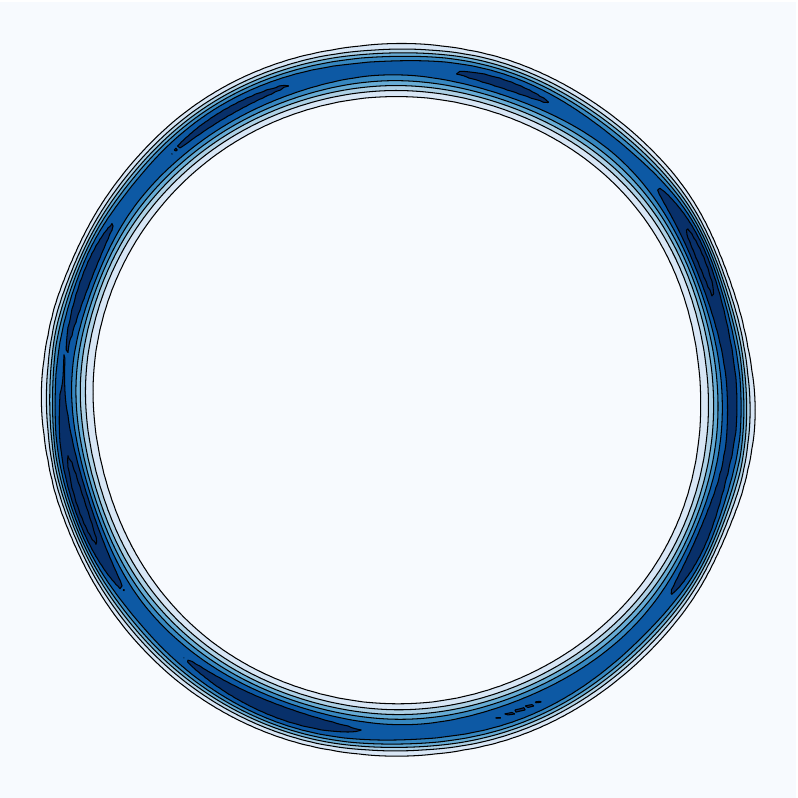}
		\caption{$\#\Lambda_{m^\ast} = 39$}
	\end{subfigure}\\
\caption{Five different  densities $\pi$ (top row) and their approximations $\widehat{\pi}$ (bottom row) built using the \ALG algorithm with a Hermite polynomial basis for $V_k$, given a sample of size $n = 10^4$ from $\pi$. The total number of coefficients $\#\Lambda_{m^\ast}$ in both map components is indicated below each density. \label{fig:toy_problems}} %
\end{figure}

\subsection{Random mixture of Gaussians} \label{subsec:adaptivity}

In this example, we evaluate the stability and accuracy of the \ALG algorithm for learning transport maps in the small sample regime. We consider a three-dimensional distribution $\pi$ defined as a mixture of Gaussians centered at the eight vertices of the hypercube $[-4,4]^3$, each with covariance matrix $I_{d}$. The weights of the mixture components are randomly sampled from a uniform distribution $\mathcal{U}([0,1])$ and then normalized so that $\int_{\R^3}\d\pi=1$. %
To estimate $\pi$, we generate a training sample $\{\bX^{i}\}_{i=1}^{n} \sim \pi$ and use the \ALG algorithm to estimate the KR rearrangement that pushes forward $\pi$ to $\eta$ using $5$-fold cross validation.  %

Figure~\ref{fig:adaptMoG_KL} plots the KL divergence of $\widehat{\pi}$ from $\pi$ averaged over 10 experiments. Here, the training sets used to build $\widehat{\pi}=\widehat{S}^\sharp\eta$ are of varying size $n \in [10^1, 10^4]$ and the reported KL divergence is computed on a common test set of $10^{4}$ samples.
Figure~\ref{fig:adaptMoG_ncoeffs} plots the total number of coefficients $\#\Lambda_{m^*}$ %
identified by the \ALG algorithm.
The performance of \ALG is compared to a non-adaptive method where $\Lambda = \Lambda(p) \coloneqq \{\balpha \in \mathbb{N}_{0}^{k}, \|\balpha\|_{1} \leq p\}$ is arbitrarily fixed, for $p=1,3,5$. Note that $\Lambda(p)$ corresponds to polynomial $f$ with total degree $p$.
For each sample size $n$, \ALG consistently finds a better estimator of $\pi$ (in the sense of KL divergence) than a non-adaptive method with a fixed degree $p$, by adaptively identifying the basis $\{\psi_{\balpha}\}_{\balpha\in\Lambda_t}$ to represent the map components. In addition, the \ALG estimator achieves smaller KL divergence with fewer total map coefficients, as seen in Figure~\ref{fig:adaptMoG_ncoeffs}. %

\begin{figure}[!bht]
    \centering
    \begin{subfigure}{.45\linewidth}
        \centering
        \includegraphics[width=\textwidth]{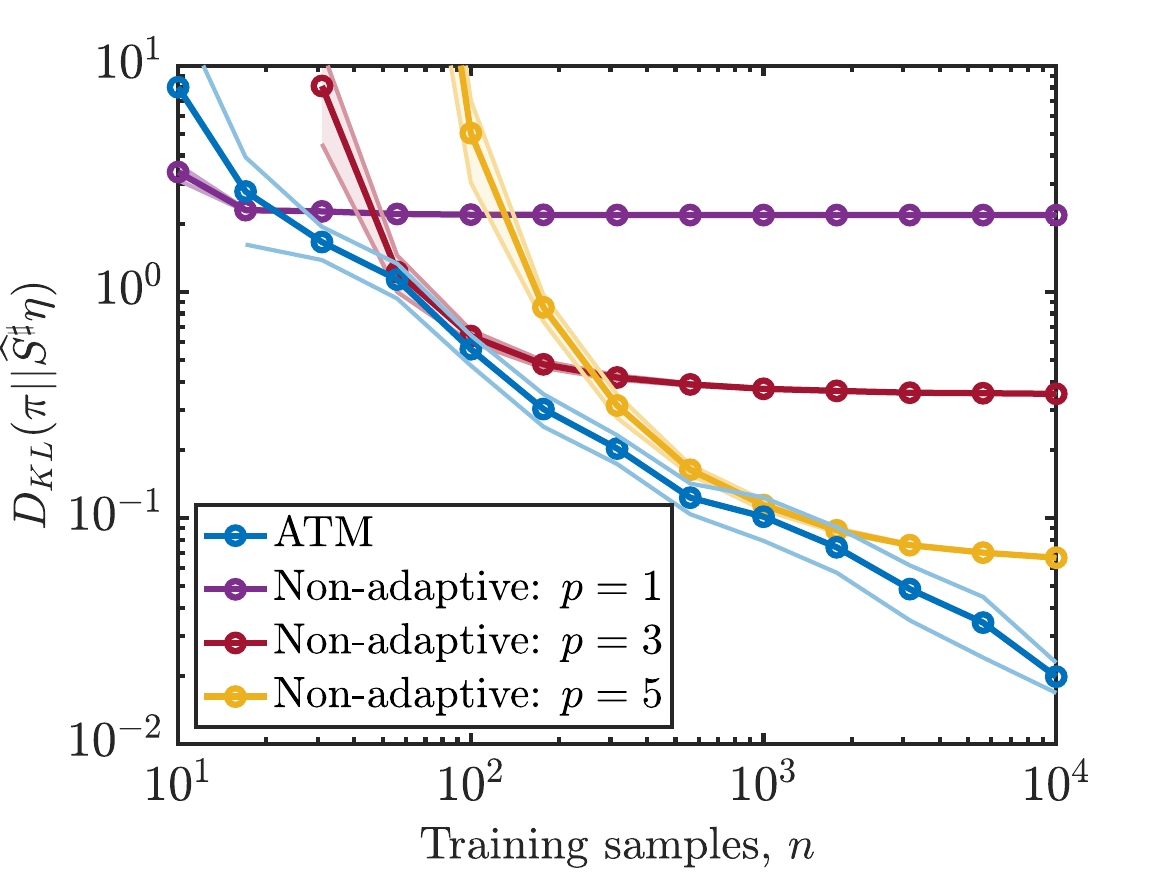}
        \caption{\label{fig:adaptMoG_KL}}
	    \vspace{-0.2cm}
    \end{subfigure}
	\hspace{1cm}
    \begin{subfigure}{.45\linewidth}
        \centering
        \includegraphics[width=\textwidth]{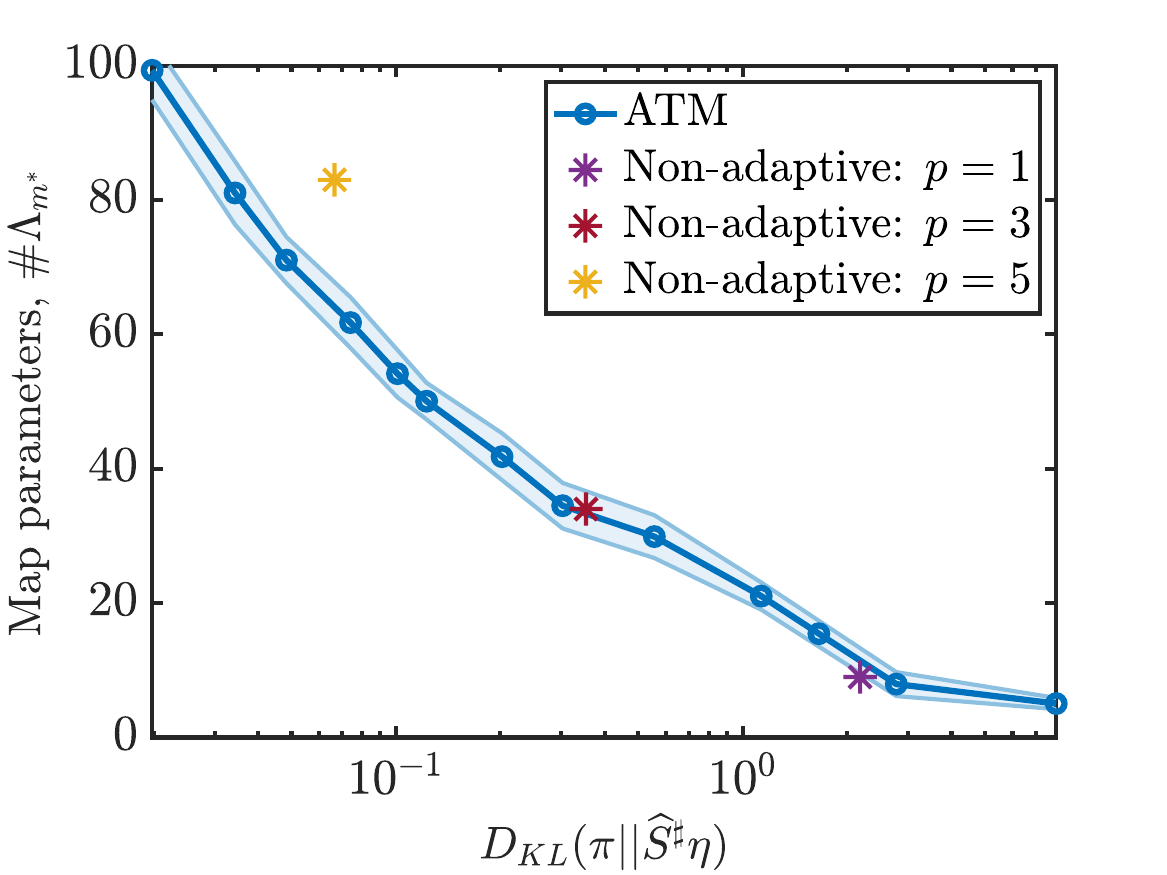}
        \caption{\label{fig:adaptMoG_ncoeffs}}
        \vspace{-0.2cm}
    \end{subfigure}
    \caption{(a) KL divergence over 10 sets of training samples is lower for \ALG than for non-adapted expansions. (b) Trade-off between the approximation quality and number of coefficients using the adaptive algorithm for different $n$. For any given number of map coefficients, the \ALG algorithm finds a representation with similar or lower KL divergence than the lowest achievable error of each non-adaptive approximation (indicated with stars in plot (b)).} %
\end{figure}

\subsection{Stochastic volatility} \label{subsec:stoc_volatility}

Next we consider data from a Markov process that describes the volatility of the return on a financial asset over time. The model has two hyperparameters $\mu$ and $\phi$, with a state $({Z}_k)_k$ that represents the log-volatility at times $k=1, \ldots, T$. The two hyperparameters are drawn from the distributions $\mu \sim \mathcal{N}(0,1)$ and $\phi = 2\exp(\phi^*)/(1 + \exp(\phi^*))$ for $\phi^* \sim \mathcal{N}(3,1)$. The log-volatility obeys the order-one autoregressive process $Z_{k+1} = \mu + \phi(Z_{k} - \mu) + \epsilon_{k}$ for all $k > 1$, where $\epsilon_{k} \sim \mathcal{N}(0,1)$ is independent of all other variables and $Z_0|\mu,\phi \sim \mathcal{N}(\mu,\frac{1}{1 - \phi^2})$. While the states are Gaussian conditioned on the hyperparameters, the joint distribution of 
$$
 \bX=(\mu,\phi, Z_1, \ldots, Z_T),
$$
is non-Gaussian. In this example, the dimension of $\bX$ is made arbitrarily large by increasing $T$. %

A key property of triangular transport maps is that they inherit sparsity from the conditional independence structure of $\pi$. \cite{spantini2018inference} showed that the Markov structure of $\pi$ yields a lower bound on the sparsity of the map $S$ (i.e., the functional dependence of each component $S^{k}$ on the input variables $\bx_{1:k-1}$). This sparsity was exploited to learn the structure of undirected probabilistic graph models from data in~\cite{morrison2017beyond,baptista2021learning}. %
From the conditional independence properties of the stochastic volatility model described above, we know that the KR rearrangement $S_{\KR}$ between the joint distribution of $\bX$ and a standard Gaussian reference $\eta$ is sparse. Moreover, the exact sparsity of $S_{\KR}$ can be derived from Theorem 5.1 in~\cite{spantini2018inference}. 
We now show that the \ALG algorithm is capable of detecting and exploiting this structure---without knowing in advance that it exists.

Figure~\ref{fig:SV_sparsity} compares the variable dependence of the true KR rearrangement $S_{\KR}$ and the map $\widehat{S}$ learned by the \ALG algorithm for a distribution with $T = 40$  %
using a sample of size $n = 10^3$ from $\pi$; a non-filled entry $(j,k)$ entry of the plot indicates that $k$-th map component does not depend on variable $x_j$. The dependence of component $S_k$ on $(x_k,x_{k-1})$ shows that each state $Z_k$ strongly depends on the previous state in time. Most of the map components also show dependence on the hyperparameters $(\mu,\phi)$. The estimated sparsity
closely matches the exact %
sparsity of the KR rearrangement. Furthermore, the sparse variable dependence of the $k$th map component $\widehat{S}_k$ on parent nodes $\text{Pa}(k) \subseteq \{1,\dots,k-1\}$ produces an approximation to the $k$th marginal conditional density given by %
$$\widehat\pi(x_{k}|\bx_{<k}) = \widehat\pi(x_k|\bx_{\text{Pa}(k)}).$$
By identifying the parent nodes $\text{Pa}(k)$ of each variable $k$, we also learn a sparse Bayesian network or directed acyclic graphical (DAG) model representing the target distribution~\cite{koller2009probabilistic}. As a result, we can see the \ALG algorithm as a technique for learning DAGs from samples, given a prescribed variable ordering.

Next, we consider the approximation of $S_{\KR}$ for Markov models of \emph{increasing} state dimension $T$ and hence increasing map dimension $d = T+2$; these experiments use a fixed sample size $n=10^3$ as before. Figure~\ref{fig:SV_negloglik} plots the KL divergence %
from the \ALG approximation to the joint density of 
$\bX \in \R^d$, for increasing dimensions $d$. 
We compare \ALG with a variant of \ALG where the exact sparsity pattern of the the KR rearrangement is provided to the algorithm in advance. This variant, labeled ``sparsity-aware \ALG'' %
in Figure \ref{fig:SV_negloglik}, differs from the \ALG algorithm in that it only activates multi-indices $\balpha^\ast_t$ which match the sparsity of $S_\text{KR}$ (meaning of the form $\balpha = (\alpha_1,\alpha_2,0,\hdots,0,\alpha_{k-1},\alpha_k)$).
We also compare \ALG with non-adaptive maps of degree $p=1$ and $p=2$; these maps do not exploit the conditional independence structure of $\pi$, and hence depend on all input variables. For low dimensions, we see that degree $p = 2$ maps can better capture the non-Gaussian target distributions than $p = 1$ linear maps. As the dimension $d$ increases, however, the growing number of coefficients in the $p=2$ maps results in higher-variance estimators, outweighing their smaller bias, and hence we see a rapidly increasing KL divergence as the dimension $d$ increases (with crossover around $d=21$). In contrast, \ALG outperforms the non-adaptive maps for all $d$ and, more importantly, achieves a KL divergence that grows linearly with $d$; %
indeed, it performs similarly to the best case ``oracle sparsity'' of sparsity-aware \ALG.

\begin{figure}[!bht]
    \centering
    \begin{subfigure}{.4\linewidth}
        \centering
        \includegraphics[width=\textwidth]{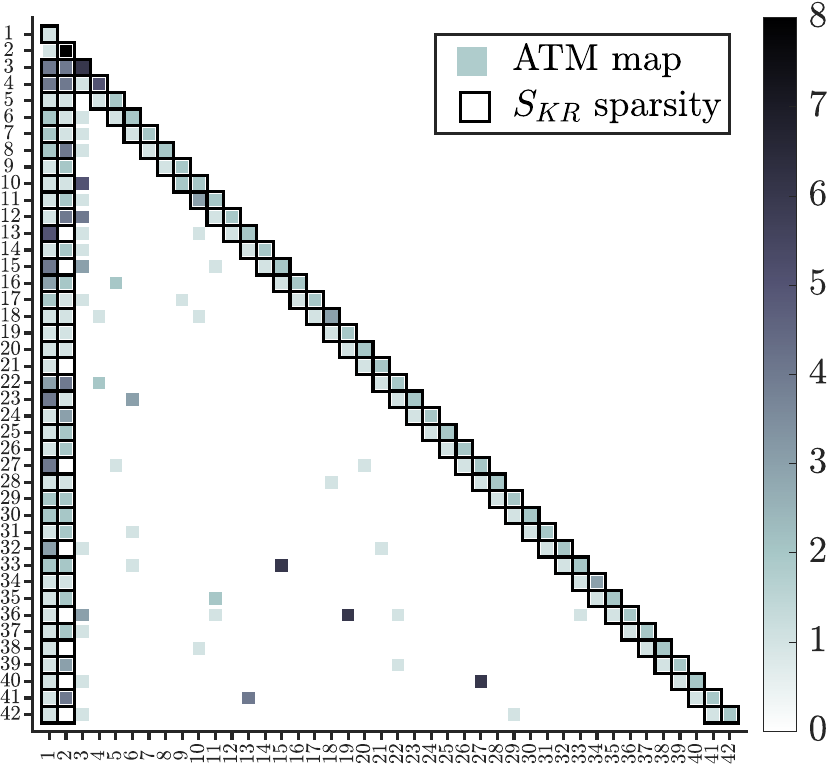}
        \caption{\label{fig:SV_sparsity}}
	    \vspace{-0.2cm}
    \end{subfigure}
	\hspace{1cm}
    \begin{subfigure}{.45\linewidth}
        \centering
        \includegraphics[width=\textwidth]{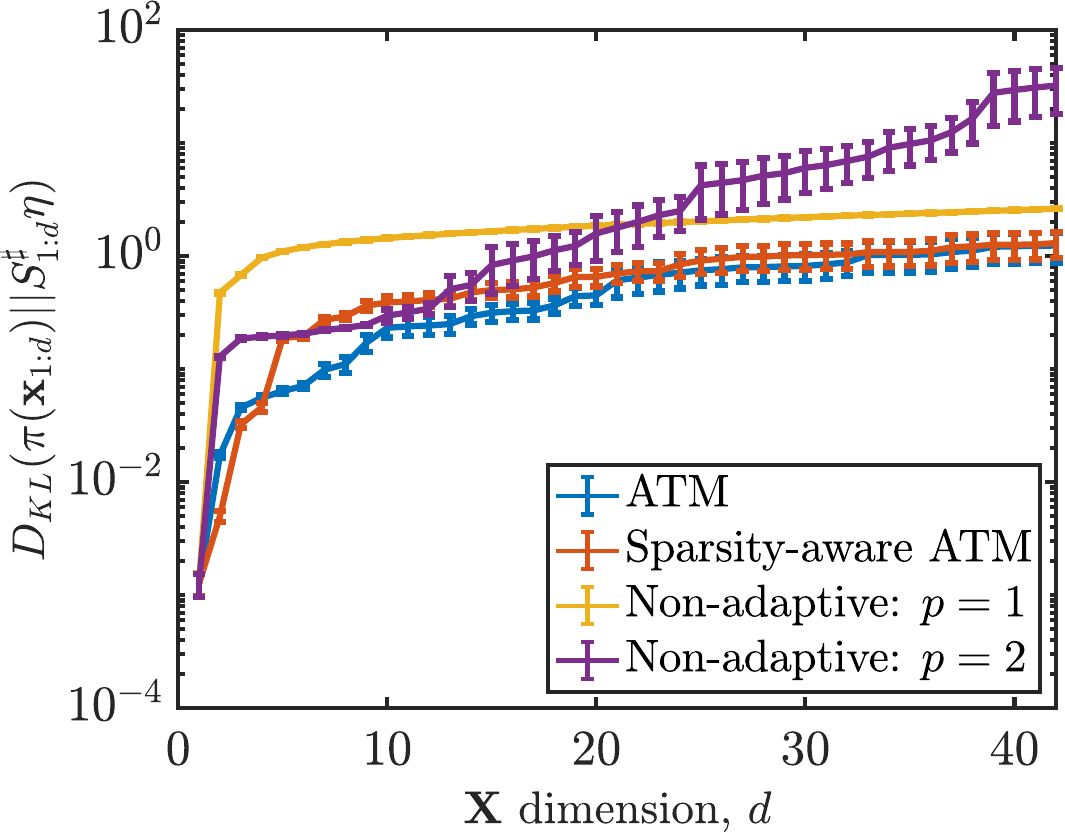}
        \caption{\label{fig:SV_negloglik}}
        \vspace{-0.2cm}
    \end{subfigure}
    \caption{(a) The sparsity of an \ALG map $\widehat{S}$ and of the KR rearrangement $S_\KR$ for the stochastic volatility model with $T = 40$ time steps. (b) KL divergence using \ALG with increasing dimension $d$, compared to adaptive map estimators with known sparsity and non-adaptive maps estimators.}
\end{figure}

\subsection{Tabular datasets} \label{sec:tabular}

Lastly, we evaluate \ALG's performance for density estimation on a suite of UCI datasets~\cite{UCIdataset} that were also considered in~\cite{uria2013rnade}. %
These datasets have dimensionalities between $d=10$ and $15$ and sample sizes between $n=506$ and $5875$.
We pre-process each dataset to eliminate discrete-valued variables and one variable in every pair that has Pearson correlation coefficient greater than 0.98, following the procedure in~\cite{uria2013rnade}.  
We consider 10 splits of the data. For each split, we use one fold (i.e., 10\% of the data) as a {test set} and the remaining 9 folds (i.e., 90\% of the data) as the {training set} to build $\widehat{S}$. To assess the quality of our estimated map $\widehat{S}$, we evaluate the negative log-likelihood of the pullback density $\widehat{\pi} = \widehat{S}^\sharp \eta$ on the test set. %
The negative log-likelihood %
is an empirical estimator of $-\mathbb{E}_{\pi}[\log \widehat{S}^{\sharp}\eta] = \mathcal{D}_\text{KL}(\pi || \widehat{S}^{\sharp}\eta) - \int\log\pi \d\pi$ and is, up to the unknown constant $-\int\log\pi \d\pi$, the KL divergence from $\widehat{S}^{\sharp}\eta$ to $\pi$. 
Table~\ref{tab:uci_results} presents the mean of the negative log-likelihoods over the $10$ splits and a 95\% confidence interval for the mean. For all datasets, we observe an improvement using \ALG over non-adaptive $p=2$ maps and multivariate Gaussian approximations (i.e., $p = 1$ maps), with a lower number of total coefficients.

{ \setlength{\tabcolsep}{4pt}
\begin{table}[!bht]
\caption{Mean negative log-likelihood for UCI datasets over 10 sets of training samples.\\The estimator with best performance (lowest negative log-likelihood) is  highlighted in bold. \label{tab:uci_results}}
\centering
\begin{tabular}{l|c|c|cc|cc}
\toprule
Dataset & $(d,n)$ & Gaussian & \ALG & \# coefficients & $p=2$ maps &  \# coefficients \\ %
\midrule
Boston & $(10,506)$ & $11.3 \pm 0.5$ & $\mathbf{3.1 \pm 0.6}$  & $228 \pm 7$  & $6.5 \pm 0.4$  & $285$ \\
Red Wine & $(11,1599)$ & $13.2 \pm 0.3$  & $\mathbf{9.8 \pm 0.4}$ & $289 \pm 9$ & $10.5 \pm 0.2$  & $363$ \\
White Wine & $(11,4898)$ & $13.2 \pm 0.5$  & $\mathbf{11.0 \pm 0.2}$ & $342 \pm 26$ & $12.0 \pm 1.0$  & $363$ \\
Parkinsons & $(15,5875)$ & $10.8 \pm 0.4$  & $\mathbf{2.8 \pm 0.4}$ & $783 \pm 17$ & $5.1 \pm 0.4$  & $815$ \\
\bottomrule
\end{tabular}
\end{table}
}

Lastly, we evaluate \ALG's performance for \emph{conditional} density estimation on a different suite of UCI datasets. We follow a similar procedure as above %
to pre-process each dataset. Each dataset has a one-dimensional predictor variable $X_d$ and covariates $\bX_{<d}$ of varying dimension. To approximate the conditional density $\pi_d(x_d|\bx_{<d})$, we estimate one map component $S_{d}$ as a function of the predictor and the covariates using joint samples $\{(\bX_{<k}^i,X_k^i)\}_{i=1}^{n}$. %
Table~\ref{tab:uci_conditional_results} presents the mean negative conditional log-likelihoods over $10$ splits of the training data. The negative conditional log-likelihood is an empirical estimator of $-\mathbb{E}_\pi[\log\widehat{S}_d^\sharp\eta] = \mathbb{E}_{\pi(\bx_{<d})}[\mathcal{D}_{\textrm{KL}}(\pi_{d}(x_d|\bx_{<d})||\widehat{S}_d^\sharp\eta)] -\int \log \pi_{d}(x_d|\bx_{<d}) \textrm{d}\pi_{d}$ and is, up to the unknown constant $-\int \log \pi_{d}(x_d|\bx_{<d}) \textrm{d}\pi_{d}$, the expected KL divergence from $\widehat{S}_d^{\sharp}\eta$ to the marginal conditional $\pi_d(x_d|\bx_{<d})$.   %

In Table~\ref{tab:uci_conditional_results}, we compare \ALG
to conditional kernel density estimation (\CKDE)~\cite{silverman1982estimation}, $\epsilon$-neighborhood kernel density estimation (\NKDE), and kernel mixture networks (\KMN)~\cite{ambrogioni2017kernel} using the implementation provided by~\cite{rothfuss2019conditional}. We also include results for high-capacity parametric conditional density estimation methods that rely on neural networks: mixture density networks (\MDN)~\cite{bishop1994mixture} and conditional normalizing flows (\NF)~\cite{trippe2018conditional}; implementation details of these methods are provided in Appendix~\ref{app:architectures}. For all datasets, we observe that \ALG has performance comparable to the neural-network based methods and improved performance over the nonparametric approaches (e.g., on the Concrete and Yacht datasets). 

In Table~\ref{tab:uci_conditional_timing}, we compare the number of coefficients in the \ALG and \MDN approximations and the computational time required to learn each of these conditional density representations. The runtime of the greedy procedure in Algorithm~\ref{alg:learn-map} (sequentially, for all $d$ map components) is generally less than that of \MDN, achieving similar approximation performance with about two orders of magnitude fewer coefficients. It is more realistic, however, to consider \ALG with the cross-validation procedure of Algorithm~\ref{alg:learn-map-withcv} and to compare this with the hyperparameter tuning required to achieve the reported performance of \MDN. \ALG only requires performing cross-validation for a single hyper-parameter, i.e., the total number of coefficients $m^\ast$ (see Algorithm~\ref{alg:learn-map-withcv}). In comparison, tuning several hyperparameters in the neural-network based methods (e.g., the learning rate, and the structure of the networks) by performing cross-validation over a tensor product grid increases runtimes by more than an order of magnitude relative to \ALG, as presented below. %

{ \setlength{\tabcolsep}{4pt}
\begin{table}[!bht]
\caption{Mean negative conditional log-likelihood for UCI datasets over 10 sets of training samples.\\ The method with best performance from both categories is highlighted in bold. \label{tab:uci_conditional_results}}
\centering
\begin{tabular}{l|c|ccc|ccc}
\toprule
Dataset & $(d,n)$ & \ALG & \CKDE & \NKDE & \MDN  & \KMN & \NF \\
\midrule
Boston & (12,506) & $\mathbf{2.6 \pm 0.2}$ & $\mathbf{2.6 \pm 0.2}$  & $3.1 \pm 0.2$  & $\mathbf{2.4 \pm 0.2}$  & $2.7 \pm 0.2$  & $\mathbf{2.4 \pm 0.1}$ \\
Concrete & (9,1030) & $\mathbf{3.1 \pm 0.1}$  & $3.2 \pm 0.1$  & $3.9 \pm 0.1$  & $\mathbf{2.9 \pm 0.1}$  & $3.5 \pm 0.1$  & $3.2 \pm 0.2$ \\ 
Energy & (10,768) & $1.5 \pm 0.1$  & $\mathbf{1.0 \pm 0.1}$  & $2.1 \pm 0.2$ & $\mathbf{1.2 \pm 0.1}$  & $1.7 \pm 0.1$  & $1.7 \pm 0.3$ \\
Yacht & (7,308) & $\mathbf{0.5 \pm 0.2}$  & $1.1 \pm 0.3$  & $3.8 \pm 0.2$ & $\mathbf{0.7 \pm 0.2}$  & $1.8 \pm 0.2$  & $1.3 \pm 0.5$  \\
\bottomrule
\end{tabular} 
\end{table}
}

\begin{table}[!bht]
\caption{The number of coefficients in the \ALG and \MDN approximations, as well as the runtimes (in seconds) to identify coefficients via optimization and to perform cross-validation of hyperparameters. \label{tab:uci_conditional_timing}}
\centering
\footnotesize
\begin{tabular}{l|ccc|ccc}
\toprule
& \multicolumn{3}{c|}{\ALG} & \multicolumn{3}{c}{\MDN} \\
Dataset & \# coefficients & runtime (s) & runtime w/CV & \# coefficients  & runtime (s) & runtime w/CV \\
\midrule
Boston & $31 \pm 3$ & $\;\;6 \pm 1$ & $\;\;88 \pm 14$ & $(5 \pm 2) \times 10^3$ &  $18 \pm 8$ & $1584 \pm 13$ \\
Concrete & $42 \pm 5$ & $14 \pm 3$ & $230 \pm 18$ & $(2.6 \pm 0.3) \times 10^3$ & $11 \pm 1$ & $1846 \pm 10$ \\ 
Energy & $41 \pm 7$ & $12 \pm 5$ & $193 \pm 29$ & $(8 \pm 3) \times 10^3$  & $\;\,32 \pm 12$ & $1758 \pm 17$ \\
Yacht & $28 \pm 4$ & $\;\;7 \pm 2$ & $\;\;99 \pm 20$ & $(6 \pm 2) \times 10^3$ &  $23 \pm 6$ & $1453 \pm 15$\\
\bottomrule
\end{tabular}
\normalsize
\end{table}

\section{Conclusions} \label{sec:conclusions}

This paper has presented and analyzed a functional framework for learning monotone triangular transport maps. Our approach represents monotone component functions of a triangular map through the action of an invertible \emph{rectification} operator $\mathcal{R}_k$ on smooth, generally non-monotone, functions. Imposing appropriate structure on this operator and the function space $V_k$ on which it acts, along with conditions on the target density $\pi$, yields an unconstrained optimization problem for learning the map components that has many desirable and useful properties. First, under certain assumptions on $\mathcal{R}_k$ and $\pi$, we show that the optimization objective is bounded, continuous, and differentiable for all $f \in V_k$. This permits the use of deterministic gradient-based optimization methods to find the minimizers. Next, under certain conditions on $\mathcal{R}_k$, we show that the optimization problem has \emph{no spurious local minima} on $V_k$. In practice, this yields important robustness to the initial guess and other parameters of the optimization algorithms. Finally, under the same (cumulative) conditions on $\mathcal{R}_k$ and some additional assumptions on the target density, we show that the optimization problem has a unique global minimizer on $V_k$ that corresponds to the canonical KR rearrangement.

Our functional framework also enables the construction of novel transport map estimators, based on maximum likelihood, given a finite sample from the target. The procedure we develop here is semi-parametric, making use of different hierarchical bases for $V_k$. In particular, we propose a greedy, adaptive algorithm that identifies a sparse set of basis functions, automatically tailored to the target density and to the sample size. Sparsity in the basis selection reflects conditional independence structure in the target. More generally, the maps we build can capture the structure of complex probability distributions with sufficient data, but also are robust in settings with few observations. This is crucial to deploying these algorithms within large-scale applications, such as data assimilation, where triangular transport maps must be learned online~\cite{spantini2019coupling}.
We demonstrate good computational performance of our algorithm---and the parsimonious representations it produces---in a variety of joint and conditional density estimation problems. 
We outline some interesting future research directions below.

\paragraph{KR recovery for broader classes of target distributions} %
While our result guaranteeing no spurious local minima (Theorem~\ref{prop:globalMinima}) makes no explicit assumptions on $\pi$, our subsequent results on recovery of the KR rearrangement (see Corollary~\ref{cor:KR_recovery}) make particular use of Gaussian tail assumptions, relating the target $\pi$ to the Gaussian reference $\eta$. In principle, however, the space $V_k$ contains functions $\Rectifier_k^{-1}(S_{\KR,k})$ corresponding to a broader class of target densities; $V_k$ includes functions that grow up to sub-exponentially fast, as a result of its Gaussian weighting. Hence, it includes the inverse images of KR rearrangements for lighter-tailed $\pi$, whose component functions grow faster than linear as $x_k \to \pm \infty$. \rev{For instance, \cite[Sec.\ 2.1]{cui2022prior} notes that the increasing rearrangement from the univariate exponential power distribution $\pi(x)\propto e^{-|x|^p}$ with $p > 2$ to a Gaussian $\eta$ grows as $\mathcal{O}(x^{p/2})$ when $x\rightarrow\infty$; see \cite{jaini2019} for the asymptotic behavior of monotone maps for other light-tailed densities.} %
It may be possible to obtain unique recovery results analogous to Corollary~\ref{cor:KR_recovery} for such densities, although some modifications to our analysis would be needed.

\paragraph{Approximation theory and statistical guarantees} An open research topic, to the best of our knowledge, is to analyze the finite-dimensional approximation of triangular transport maps on unbounded domains. (See~\cite{zech2022sparse,zech2021sparse} for an approximation theory of triangular transports between analytic densities $\pi,\eta$ on bounded domains.) These results would show how approximate maps converge to the KR rearrangement with different parameterizations (e.g., polynomial spaces or neural networks of increasing size), and we expect that our rectification framework could be a useful ingredient of such analyses.
A parallel line of work is to develop non-asymptotic statistical convergence results for triangular transports (e.g., in the context of density estimation), to understand how the quality of density estimates and of map estimates depends on the sample size $n$. \rev{The analyses in~\cite{irons2022triangular,wang2022minimax} address this question for certain classes of distributions on \emph{bounded} domains.} %
Combining these lines of work could provide lower bounds on the number of samples required to learn maps of a given complexity, and also some analytical guidance for how to navigate the bias-variance tradeoff of a map estimator given finite samples. %

\paragraph{Other sources of low-dimensional structure} The \ALG algorithm in this paper identifies maps with sparse variable dependence by exploiting conditional independence structure in the target distribution~\cite{spantini2018inference}. In future work it will be interesting to investigate other notions of low-dimensional structure and how they could facilitate the learning of maps from small samples. One promising notion is when the target density $\pi$ departs from the reference $\eta$ only along a low-dimensional subspace. In this case, the triangular map $S$ pushing forward $\pi$ to $\eta$ can be written as a low-dimensional perturbation of the identity map, after a variable rotation. See~\cite{bigoni2019greedy} for a recent contribution in this direction applied to variational inference. 

\paragraph{Map ordering} One disadvantage of triangular maps is that the approximation depends on the choice of variable ordering. In particular, each variable ordering yields a different factorization of the target density and a different Knothe--Rosenblatt rearrangement $S_{\KR}$. Thus, it is of interest to develop variable ordering algorithms that minimize the finite-sample error of the estimated transport map $\widehat{S}$, the estimated pullback density $\widehat{\pi} = \widehat{S}^\sharp \eta$, or other goal-oriented metrics. For target distributions that induce sparse variable dependence in the map, one approach is to find the permutation that maximizes the sparsity of $\widehat{S}$. This is equivalent to finding the the sparsest Bayesian network or directed acyclic graphical (DAG) model for a distribution from samples. While this problem is in general NP-complete, effective algorithms \cite{raskutti2018learning} have been proposed. This algorithm reduces to finding a sparse maximum likelihood estimator for the Cholesky factor of the inverse covariance matrix of $\pi$ in the Gaussian setting. Since a linear transport map is precisely this Cholesky factor for Gaussian $\pi$ and standard normal $\eta$ (see~\cite[Section 3]{baptista2021learning}), we expect that such sparse permutation algorithms could be generalized to the nonlinear transport map setting.

\paragraph{Nonparametric methods} Instead of finding  a particular finite-dimensional basis for the map components $S_k = \Rectifier_k(f)$, nonparametric methods do not limit the functional form of $f$ (e.g., as linear combinations of multivariate polynomials). A broadly useful nonparametric approach involves seeking $f$ in a reproducing kernel Hilbert space (RKHS). Thanks to the representer theorem~\cite{scholkopf2001generalized}, the optimal $f$ in the RKHS can still be identified by solving a finite-dimensional optimization problem. Choosing the kernel so that the RKHS is a suitable weighted Sobolev space (see, e.g., \cite{novak2018reproducing}) will yield map representations that fall within the framework of Section~\ref{subsec:choice_Vk}. 
A related approach uses Gaussian process representations for the map components~\cite{katzfuss2021scalable}. It will be interesting to compare the finite-sample performance of such methods to the semi-parametric procedure proposed in this work.

\section*{Acknowledgments}
RB, YM, and OZ gratefully acknowledge support from the INRIA associate team Unquestionable. RB and YM are also grateful for support from the AFOSR Computational Mathematics program (MURI award FA9550-15-1-0038) and the US Department of Energy AEOLUS center. RB acknowledges support from an NSERC PGSD-D fellowship.
OZ also acknowledges support from the ANR JCJC project MODENA (ANR-21-CE46-0006-01).

\appendix

\section{Proofs and theoretical details} \label{app:optimization}
\subsection{Proof of Proposition~\ref{prop:KLboundsL2norm}}
\label{proof:KLboundsL2norm}

\begin{proof}
Recall that the KR rearrangement $S_{\mathrm{KR}}$ is a transport map that satisfies $S_{\mathrm{KR}}^\sharp\eta=\pi$, where $\eta$ is the density of the standard Gaussian measure on $\R^d$ and $\pi$ is the target density. 
Corollary 3.10 in \cite{bogachev2005triangular} states that for any PDF $\varrho$ on $\R^d$ of the form $\varrho(\bx):=f(\bx)\eta(\bx)$ with $f\log f\in L^1_\eta$, the inequality
\begin{equation}\label{eq:tmp435t6}
 \int \|\bx-T(\bx)\|^2 \eta(\bx)\d\bx \leq 2 \int f(\bx)\log f(\bx) \eta(\bx)\d\bx,
\end{equation}
holds, where $T$ is the KR rearrangement such that $T_\sharp \eta=\varrho$.
Let $S$ be an increasing lower triangular map as in \eqref{eq:increasingMaps} and let $\varrho = S_\sharp \pi$.
Thus we have $T = S\circ S_{\mathrm{KR}}^{-1}$ and so the left-hand side of \eqref{eq:tmp435t6} becomes
$$
 \int \|\bx-T(\bx)\|^2 \eta(\bx)\d\bx 
 = \int \|\bx-S\circ S_{\mathrm{KR}}^{-1}(\bx)\|^2 \eta(\bx)\d\bx 
 = \int \|S_{\mathrm{KR}}(\bx)-S(\bx)\|^2 \pi(\bx)\d\bx ,
$$
and the right-hand side becomes
$$
 2 \int f(\bx)\log f(\bx) \eta(\bx)\d\bx = 2 \mathcal{D}_\text{KL}( \varrho||\eta ) = 2\mathcal{D}_\text{KL}( \pi|| S^\sharp \eta ),
$$
which yields \eqref{eq:KLboundsL2norm}.
\end{proof}

\subsection{Convexity of $s\mapsto \mathcal{J}_k(s)$}\label{proof:ConvexityOfJ}

\begin{lemma} \label{lem:convexity}
The optimization problem $\min_{\{s\colon \partial_k s > 0\}} \mathcal{J}_{k}(s)$ is strictly convex.
\end{lemma}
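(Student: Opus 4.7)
The plan is to verify strict convexity by writing $\mathcal{J}_k$ as a sum of two convex functionals and showing that at least one is strictly convex on the admissible set. The admissible set $\mathcal{A}_k = \{s : \partial_k s > 0\}$ is itself convex: the operator $\partial_k$ is linear, so for $s_0, s_1 \in \mathcal{A}_k$ and $\lambda \in (0,1)$ we have $\partial_k((1-\lambda)s_0 + \lambda s_1) = (1-\lambda)\partial_k s_0 + \lambda \partial_k s_1 > 0$ pointwise. Thus convex combinations stay in $\mathcal{A}_k$ and it makes sense to talk about convexity of $\mathcal{J}_k$ on it.

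Next I would split $\mathcal{J}_k(s) = \tfrac{1}{2}\mathbb{E}_\pi[s(\bX_{1:k})^2] + \mathbb{E}_\pi[-\log\partial_k s(\bX_{1:k})]$ and analyze each term pointwise before taking expectations. For the first term, the map $t \mapsto \tfrac{1}{2}t^2$ is strictly convex on $\mathbb{R}$, so for every $\bx$,
\begin{equation*}
 \tfrac{1}{2}\big((1-\lambda)s_0(\bx_{1:k}) + \lambda s_1(\bx_{1:k})\big)^2 \le (1-\lambda)\tfrac{1}{2}s_0(\bx_{1:k})^2 + \lambda\tfrac{1}{2}s_1(\bx_{1:k})^2,
\end{equation*}
with strict inequality wherever $s_0(\bx_{1:k}) \neq s_1(\bx_{1:k})$. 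For the second term, $u \mapsto -\log u$ is strictly convex on $(0,\infty)$ and $\partial_k$ is linear, so pointwise
\begin{equation*}
 -\log\bigl((1-\lambda)\partial_k s_0 + \lambda\partial_k s_1\bigr) \le -(1-\lambda)\log\partial_k s_0 - \lambda\log\partial_k s_1,
\end{equation*}
with strict inequality wherever $\partial_k s_0 \neq \partial_k s_1$. Taking expectations under $\pi$ preserves the (weak) inequalities, giving convexity of $\mathcal{J}_k$.

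To upgrade to strict convexity, I would argue as follows. Suppose $s_0, s_1 \in \mathcal{A}_k$ represent distinct functions in the underlying function space (functions identified $\pi$-a.e.). Then the set $E = \{\bx : s_0(\bx_{1:k}) \neq s_1(\bx_{1:k})\}$ has positive $\pi$-measure, and on $E$ the pointwise inequality for the quadratic term is strict. Integrating against the positive density $\pi$ yields strict inequality for the first expectation, and since the second expectation contributes at least a weak inequality, the sum $\mathcal{J}_k$ satisfies a strict Jensen inequality at $s_0, s_1$. This establishes strict convexity.

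There is no real obstacle beyond being precise about the function space: the statement should be read modulo $\pi$-a.e.\ equivalence, so that ``$s_0 \ne s_1$'' genuinely means differ on a set of positive $\pi$-measure and hence yields strict inequality from the quadratic term. Everything else is standard convexity of $t \mapsto t^2$ and $u \mapsto -\log u$ combined with linearity of both $\partial_k$ and of the expectation operator.
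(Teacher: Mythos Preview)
Your proof is correct and follows essentially the same approach as the paper: verify convexity of the feasible set $\{s:\partial_k s>0\}$, then use pointwise strict convexity of $t\mapsto \tfrac12 t^2$ and $u\mapsto -\log u$ together with linearity of $\partial_k$ and the expectation. If anything, you are more careful than the paper about the passage from pointwise strict convexity to strict convexity of the expectation, correctly noting that one should interpret $s_0\neq s_1$ modulo $\pi$-a.e.\ equivalence so that the set where the quadratic inequality is strict has positive measure.
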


\begin{proof}
 \red{Let $s_{1}, s_{2}\colon  \mathbb{R}^k \rightarrow \mathbb{R}$ be two functions such that $\partial_{k} s_{1}(\bx_{\leq k}) > 0$ and $\partial_{k} s_{2}(\bx_{\leq k}) > 0$. Let $ s_t = t s_1 + (1-t)s_2$ for $0<t<1$. Then $s_t$ also satisfies $\partial_{k} s_{t}(\bx_{\leq k}) > 0$}.
 Finally, because both $\xi\mapsto \frac{1}{2}\xi^2$ and $\xi\mapsto -\log(\xi)$ are strictly convex functions, we have
 \begin{align*}
     \mathcal{J}_{k}(s_t) 
     &\overset{\eqref{eq:opt_map_component}}{=} \int \left( \frac{1}{2}s_t(\bx_{\leq k})^2-\log\partial_k s_t(\bx_{\leq k}) \right)\pi(\bx)\d\bx \\
     &\;<\; \int  \Big( t \frac{1}{2}s_1(\bx_{\leq k})^2 + (1-t) \frac{1}{2}s_2(\bx_{\leq k})^2 \Big)-\Big( t \log\partial_k s_1(\bx_{\leq k}) + (1-t) \log\partial_k s_2(\bx_{\leq k}) \Big)  \pi(\bx)\d\bx \\
     &\;=\; t \mathcal{J}_{k}(s_1) + (1-t)\mathcal{J}_{k}(s_2) ,
 \end{align*}
 which shows that $\mathcal{J}_{k}$ is strictly convex.
\end{proof}

\subsection{Proof of Proposition~\ref{thm:trace}}\label{proof:trace}

To prove Proposition~\ref{thm:trace} we need the following lemma.
\begin{lemma}\label{thm:RKHS}
 Let
 $$
  H^1([0,1])=\left\{f\colon[0,1]\rightarrow\R \text{ such that } \|f\|^2_{H^1([0,1])}\coloneqq \int_0^1 f(t)^2+f'(t)^2 \, \d t\right\} .
 $$
 Then
 \begin{equation}\label{eq:RKHS}
     |f(0)|\leq \sqrt{2} \|f\|_{H^1([0,1])},
 \end{equation}
 holds for any $f\in H^1([0,1])$.
 
\end{lemma}

\begin{proof}
 Because $\mathcal{C}^{\infty}([0,1])$ is dense in $H^1([0,1])$, it suffices to show \eqref{eq:RKHS} for any $f\in \mathcal{C}^{\infty}([0,1])$.
 By the mean value theorem, there exists $0\leq z \leq 1$ such that 
 $$
  f(z) = \frac{1}{1-0}\int_{0}^{1} f(t) \d t.
 $$
 Thus we can write
 \begin{align*}
  |f(0)|^2
  &\leq 2|f(z)-f(0)|^2 + 2|f(z)|^2 \\
  &= 2\left| \int_{0}^{z}  f'(t)\d t\right|^2 +  2\left| \int_{0}^{1} f(t) \d t \right|^2  \\
  &\leq 2\int_{0}^{1} \left| f'(t) \right|^2 \d t +   2\int_{0}^{1} \left| f(t) \right|^2 \d t .
 \end{align*}
 This concludes the proof.
\end{proof}

We now prove Proposition~\ref{thm:trace}.
\begin{proof}
  For any $f\in V_k$, Lemma \ref{thm:RKHS} permits us to write
  \begin{align*}
      \int |f(\bx_{<k},0)|^2 \eta_{<k}(\bx_{<k})\d \bx_{<k}
      &\overset{\eqref{eq:RKHS}}{\leq} \int \left(2\int_0^1 |f(\bx_{<k},t)|^2 + |\partial_k f(\bx_{<k},t)|^2 \d t \right) \eta_{<k}(\bx_{<k})\d \bx_{<k} \\
      &\;\leq\;  C_T\int \int_0^1 \Big( |f(\bx_{<k},t)|^2 + |\partial_k f(\bx_{<k},t)|^2    \Big)  \eta_{<k}(\bx_{<k})\eta_1(t) \d \bx_{<k}\d t \\
      &\;\leq\;  C_T\int \int_{-\infty}^{+\infty} \Big( |f(\bx_{<k},t)|^2 + |\partial_k f(\bx_{<k},t)|^2    \Big)  \eta_{\leq k}(\bx_{<k},t) \d \bx_{<k}\d t \\
      &\;=\; C_T\|f\|_{V_k}^2 ,
  \end{align*}
 where $C_T=2 \sup_{0\leq t\leq 1} \eta_1(t)^{-1}$.
\end{proof}

\subsection{Proof of Proposition~\ref{prop:RectifierIsLip}} \label{proof:RectifierContinuity}

The proof relies on Proposition~\ref{thm:trace} and on the following generalized integral Hardy inequality, see~\cite{muckenhoupt1972hardy}.

\begin{lemma} \label{lemma:hardy_application}
Let $\eta_{\leq k}$ be the standard Gaussian density on $\mathbb{R}^k$. Then there exists a constant $C_{H}$ such that for any $v \in L^2_{\eta}(\mathbb{R}^{k})$, 
\begin{equation} \label{eq:Hardy}
 \int \left(\int_{0}^{x_{k}} v(\bx_{<k},t)\mathrm{d}t\right)^2\eta_{\leq k}(\bx)\mathrm{d}\bx
 \leq C_{H} \int v(\bx)^2\eta_{\leq k}(\bx)\mathrm{d}\bx.
 \end{equation}
\end{lemma}

\begin{proof}[Proof of Lemma~\ref{lemma:hardy_application}] 
Let us recall the integral Hardy inequality \cite{muckenhoupt1972hardy}.

\begin{theorem}[from \cite{muckenhoupt1972hardy}] \label{thm:hardy}
For weight $\rho\colon \mathbb{R}_{+} \rightarrow \mathbb{R}_{+}$ and $u \in L^2_{\rho}(\mathbb{R})$, there exists a constant $C_{H} < \infty$ such that
\begin{equation}\label{eq:Hardy_onedimensional}
 \int_{0}^{+\infty} \left(\int_{0}^{x} u(t)\mathrm{d}t\right)^2 \rho(x) \mathrm{d}x \leq C_{H} \int_{0}^{+\infty} u(x)^2\rho(x)\mathrm{d}x
\end{equation}
if and only if 
\begin{equation} \label{eq:rho_tail}
\sup_{x > 0} \, \left(\int_{x}^{+\infty}\rho(t)\mathrm{d}t \right)^{1/2} \left(\int_{0}^{x} \rho(t)^{-1} \mathrm{d}t \right)^{1/2} < +\infty.
\end{equation}
\end{theorem}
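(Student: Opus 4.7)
The statement is the classical Muckenhoupt characterization of the weighted one-dimensional Hardy inequality, and I would prove the two implications separately by standard techniques. Throughout I set $\omega(x) := \int_0^x \rho(t)^{-1}\,\mathrm{d}t$ and $\Phi(x) := \int_x^{+\infty} \rho(t)\,\mathrm{d}t$, so that \eqref{eq:rho_tail} becomes $\sup_{x>0}\sqrt{\omega(x)\Phi(x)} \le A$ for some finite constant $A$, and \eqref{eq:Hardy_onedimensional} becomes the statement that the operator $u \mapsto \int_0^{(\cdot)} u$ is bounded on $L^2_\rho(\mathbb{R}_+)$.

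For the necessity direction ($\eqref{eq:Hardy_onedimensional}\Rightarrow\eqref{eq:rho_tail}$), I would use a one-parameter family of explicit test functions. Fix $x_0>0$ with $\omega(x_0)<\infty$ (the other case being degenerate, as the sup condition fails trivially) and test with $u_{x_0}(t) := \rho(t)^{-1}\mathbf{1}_{[0,x_0]}(t)$. A direct calculation yields $\int_0^x u_{x_0}(t)\,\mathrm{d}t = \omega(\min(x,x_0))$; restricting the outer integral in \eqref{eq:Hardy_onedimensional} to $x\ge x_0$ gives a lower bound of $\omega(x_0)^2\Phi(x_0)$, while the right-hand side evaluates exactly to $C_h\,\omega(x_0)$. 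Dividing by $\omega(x_0)$ and taking a square root yields $\sqrt{\omega(x_0)\Phi(x_0)}\le\sqrt{C_h}$ for every admissible $x_0$, which is \eqref{eq:rho_tail}.

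For the sufficiency direction ($\eqref{eq:rho_tail}\Rightarrow\eqref{eq:Hardy_onedimensional}$), the key step is a weighted Cauchy--Schwarz inequality with the auxiliary weight $h(t):=\omega(t)^{-1/2}\rho(t)^{-1}=\omega(t)^{-1/2}\omega'(t)$, chosen so that $\int_0^x h(t)\,\mathrm{d}t = 2\,\omega(x)^{1/2}$. Writing $u = (u/\sqrt{h})\cdot\sqrt{h}$ and applying Cauchy--Schwarz gives
\begin{equation*}
\left(\int_0^x u(t)\,\mathrm{d}t\right)^{\!2} \;\le\; 2\,\omega(x)^{1/2}\int_0^x u(t)^2\,\omega(t)^{1/2}\rho(t)\,\mathrm{d}t.
\end{equation*}
Multiplying by $\rho(x)$, integrating in $x$, and applying Fubini reduces \eqref{eq:Hardy_onedimensional} to the pointwise bound $\omega(t)^{1/2}\int_t^{+\infty}\omega(x)^{1/2}\rho(x)\,\mathrm{d}x \le C$ (uniformly in $t$). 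I would verify this by integrating the inner integral by parts against $\rho=-\Phi'$: this yields a boundary term $\omega(t)^{1/2}\Phi(t)$ plus a contribution of the form $\tfrac{1}{2}\int_t^{+\infty}\Phi(x)\omega(x)^{-1/2}\omega'(x)\,\mathrm{d}x$. The hypothesis $\omega\Phi\le A^2$ controls the first term by $A^2\omega(t)^{-1/2}$ and reduces the second to $\tfrac{A^2}{2}\int_t^{+\infty}\omega^{-3/2}\omega'\,\mathrm{d}x = A^2\omega(t)^{-1/2}$, giving $C \le 2A^2$ and hence \eqref{eq:Hardy_onedimensional} with $C_h\le 4A^2$.

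The main obstacle is the sufficiency direction: the auxiliary weight $h$ must be tuned so that the subsequent Fubini-plus-integration-by-parts argument is convergent at both endpoints. Naive choices such as $h\equiv\rho^{-1}$ or $h\equiv 1$ fail, either because the inner integral diverges at the origin or because the resulting double integral is not controlled at infinity; the specific scaling $h\sim\omega^{-1/2}\rho^{-1}$ is what makes the integration-by-parts boundary terms match the power of $\omega$ appearing in the remaining integral. Minor technicalities include justifying vanishing of boundary terms at $x=+\infty$ (immediate when $\omega(+\infty)=+\infty$ from the hypothesis $\omega\Phi\le A^2$, and reducing to a direct elementary bound when $\omega(+\infty)<+\infty$), and handling degenerate cases where $\omega\equiv+\infty$ or $\Phi\equiv 0$, in which both sides of the equivalence collapse trivially.
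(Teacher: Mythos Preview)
Your proof proposal is correct and follows the classical Muckenhoupt argument: explicit test functions $u_{x_0}=\rho^{-1}\mathbf{1}_{[0,x_0]}$ for necessity, and the weighted Cauchy--Schwarz with auxiliary weight $h=\omega^{-1/2}\omega'$ followed by Fubini and integration by parts for sufficiency, yielding $C_h\le 4A^2$. The computations check out, including the bookkeeping of the boundary terms.

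However, there is nothing to compare your argument against: the paper does not prove this theorem. It is stated purely as a recalled result, with a citation to Muckenhoupt's 1972 paper, and is then applied (in the subsequent lemma) to the specific case of the one-dimensional standard Gaussian weight. So your proposal is not ``the same as'' or ``different from'' the paper's proof---it simply supplies a proof where the paper gives none. What you have written is essentially the original Muckenhoupt proof, and would be an appropriate self-contained replacement for the bare citation.
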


We apply Theorem \ref{thm:hardy} with the one-dimensional standard Gaussian density $\rho = \eta$ for $x > 0$. In order to check condition~\eqref{eq:rho_tail}, we need to show that
$$
D(x) \coloneqq \left(\int_{x}^{+\infty}\rho(t)\mathrm{d}t \right)^{1/2} \left(\int_{0}^{x} \rho(t)^{-1}\mathrm{d}t \right)^{1/2} = \left(\int_{x}^{+\infty} e^{-t^2/2}\mathrm{d}t \right)^{1/2} \left(\int_{0}^{x} e^{t^2/2}\mathrm{d}t \right)^{1/2} ,
$$
is bounded. Since $x\mapsto D(x)$ is a continuous function with a finite limit as $x \rightarrow 0$, it is sufficient to show that $D(x)$ has a finite limit when $x \rightarrow \infty$. 
For $x > 1$, $\int_{x}^{+\infty} e^{-t^2/2} \mathrm{d}t \leq e^{-x^2/2}$ and $D(x)^2 \leq e^{-x^2/2} \int_{0}^{x} e^{t^2/2} \mathrm{d}t$. Furthermore, using integration-by-parts we have $\int_{0}^{x} e^{t^2/2} \mathrm{d}t = \int_{0}^{1} e^{t^2/2}\mathrm{d}t + e^{x^2/2}/x - \sqrt{e} + \int_{1}^{x} e^{t^2/2}/t^2\mathrm{d}t$. As $x \rightarrow \infty$ the dominating term in the sum is $e^{x^2/2}/x$. Thus, $e^{-x^2/2} \int_{0}^{x} e^{t^2/2}\mathrm{d}t$ behaves asymptotically as $\mathcal{O}(\frac{1}{x})$, so that $D(x)\rightarrow 0$ when $x\rightarrow \infty$. Thus, condition~\eqref{eq:rho_tail} is satisfied.

Then, by the Hardy inequality in~\eqref{eq:Hardy_onedimensional} for $u \in L_{\eta}^2(\mathbb{R})$ we have
\begin{equation} \label{eq:hardy_Gaussian_right_tail}
 \int_{0}^{+\infty} \left(\int_{0}^{x_{k}} u(t)\mathrm{d}t\right)^2 \eta(x_{k}) \mathrm{d}x_{k} \leq C_{H} \int_{0}^{+\infty} u(x_{k})^2\eta(x_{k}) \mathrm{d}x_{k}.
\end{equation}
For the symmetric density $\eta(x_{k}) = \eta(-x_{k})$ %
we also have
\begin{equation} \label{eq:hardy_Gaussian_left_tail}
 \int_{-\infty}^{0} \left(\int_{0}^{x_{k}} u(t)\mathrm{d}t\right)^2 \eta(x_{k}) \mathrm{d}x_{k} \leq C_{H} \int_{-\infty}^{0} u(x_{k})^2 \eta(x_{k}) \mathrm{d}x_{k}.
\end{equation}
Combining the results in~\eqref{eq:hardy_Gaussian_right_tail} and~\eqref{eq:hardy_Gaussian_left_tail} we have %
\begin{equation*}%
 \int_{-\infty}^{+\infty} \left(\int_{0}^{x_{k}} u(t)\mathrm{d}t\right)^2 \eta(x_{k}) \mathrm{d}x_{k} \leq C_{H} \int_{-\infty}^{+\infty} u(x_{k})^2 \eta(x_{k}) \mathrm{d}x_{k}.
\end{equation*}
Setting $u(t) = v(\bx_{<k},t)$ and integrating both sides over $\bx_{<k} \in \mathbb{R}^{k-1}$ with the standard Gaussian weight function $\eta(\bx_{<k})$ gives the result.
\end{proof}

We now prove Proposition~\ref{prop:RectifierIsLip}. 
\begin{proof} By Proposition~\ref{thm:trace}, Lemma \ref{lemma:hardy_application} and by the Lipschitz property of $g$, we can write
\begin{align}
\| \mathcal{R}_k(f_1) - \mathcal{R}_k(f_2)\|_{L^2_{\eta_{\leq k}}}^2 
&\leq 2 \int \Big( f_1(\bx_{<k},0)-f_2(\bx_{<k},0) \Big)^2 \eta_{\leq k}(\bx)\d\bx \nonumber\\
&~ + 2 \int \Big( \int_0^{x_k} g\big(\partial_k f_1(\bx_{<k},t)\big) - g\big(\partial_k f_2(\bx_{<k},t)\big)  \mathrm{d}t \Big)^2\eta_{\leq k}(\bx)\d\bx \nonumber \\
&\leq 2C_T \|f_1-f_2 \|_{V_k}^2 + 2 C_H \| g(\partial_k f_1)-g(\partial_k f_2) \|_{L^2_{\eta_{\leq k}}}^2 \nonumber\\
&\leq 2C_T \|f_1-f_2 \|_{V_k}^2 + 2 C_H L^2 \| \partial_k f_1-\partial_k f_2 \|_{L^2_{\eta_{\leq k}}}^2 \nonumber\\
&\leq 2(C_T+C_H L^2) \|f_1-f_2 \|_{V_k}^2 , \label{eq:tmp58317}
\end{align}
for any $f_1,f_2\in V_k$. Furthermore, using the Lipschitz property of $g$ we have 
\begin{align}
\| \partial_k \mathcal{R}_k(f_1) -  \partial_k \mathcal{R}_k(f_2)\|_{L^2_{\eta_{\leq k}}}^2 
&= \int \Big( g(\partial_k f_1(\bx_{<k},t)) - g(\partial_k f_2(\bx_{<k},t)) \Big)^2 \eta_{\leq k}(\bx)\d\bx \nonumber\\
&\leq L^2 \int \Big( \partial_k f_1(\bx_{<k},t) - \partial_k f_2(\bx_{<k},t) \Big)^2 \eta_{\leq k}(\bx)\d\bx \nonumber\\
&\leq L^2 \|f_1-f_2 \|_{V_k}^2.
\label{eq:tmp58318}
\end{align}
Combining \eqref{eq:tmp58317} with \eqref{eq:tmp58318}, we obtain 
\eqref{eq:RectifierIsLip} with $C = \sqrt{2(C_T+C_H L^2)+L^2}$. 

It remains to show that $\| \mathcal{R}_k(f) \|_{V_k}<\infty$ for any $f\in V_k$. Letting with $f_1=f$ and $f_2=0$ in \eqref{eq:RectifierIsLip}, the triangle inequality yields
$$
\| \mathcal{R}_k(f) \|_{V_k} \leq \| \mathcal{R}_k(0) \|_{V_k} + C \|f\|_{V_k}
$$
Because $\mathcal{R}_k(0)$ is the affine function $\bx\mapsto g(0)x_k$, we have that $\|\mathcal{R}_k(0) \|_{L^2_{\eta_{\leq k}}}^2= g(0)^2\int x_k^2 \eta(\bx)\d\bx$ and $\| \partial_k \Rectifier_k(0) \|_{L^2_{\eta_{\leq k}}}^2 = g(0)^2$ are finite and so is $\| \mathcal{R}_k(0) \|_{V_k}$. Thus, $\mathcal{R}_k(f)\in V_k$ for all $f\in V_k$.
\end{proof}

\subsection{Proof of Proposition~\ref{prop:finite_objective}}\label{proof:finite_objective}

\begin{proof}
For any $f\in V_k$ we have
\begin{align*}
 \left| \mathcal{L}_{k}(f) \right| 
 &\;=\; \left|\int \left(\frac{1}{2}\mathcal{R}_k(f)^2 - \log(\partial_{k} \mathcal{R}_k(f)) \right)\d\pi\right|  \\
 &\overset{\eqref{eq:AssumptionPiBounded}}{\leq} \frac{C_\pi}{2} \|\mathcal{R}_k(f)\|_{L^2_{\eta_{\leq k}}}^2 + C_\pi \int \left|\log(g(\partial_{k} f)) \right| \d\eta_{\leq k} \\
 &\;\leq\; \frac{C_\pi}{2} \|\mathcal{R}_k(f)\|_{L^2_{\eta_{\leq k}}}^2 + C_\pi \int |\log(g(0))| + \left|\log(g(\partial_{k} f)) -\log(g(0)) \right| \d\eta_{\leq k} \\
 &\overset{\eqref{eq:lipschitz_logg}}{\leq} \frac{C_\pi}{2} \|\mathcal{R}_k(f)\|_{L^2_{\eta_{\leq k}}}^2 + C_\pi  |\log(g(0))| + C_\pi L \int \left|\partial_{k} f -0 \right| \d\eta_{\leq k} \\
 &\;\leq\; \frac{C_\pi}{2} \|\mathcal{R}_k(f)\|_{L^2_{\eta_{\leq k}}}^2 + C_\pi  |\log(g(0))| + C_\pi L \|f\|_{V_k}^2 .
\end{align*}
Because Proposition \ref{prop:RectifierIsLip} ensures $\mathcal{R}_k(f)\in V_k\subset L^2_{\eta_{\leq k}}$, we have that $\mathcal{L}_{k}(f)$ is finite for any $f\in V_k$.
Now, for any $f_{1},f_2\in V_k$, we can write
\begin{align*} %
\left| \mathcal{L}_{k}(f_1) - \mathcal{L}_{k}(f_2) \right| 
&\;=\; \left|\int \left(\frac{1}{2}\mathcal{R}_k(f_{1})^2 - \frac{1}{2}\mathcal{R}_k(f_{2})^2 - \log(\partial_{k} \mathcal{R}_k(f_{1})) + \log(\partial_{k} \mathcal{R}_k(f_{2})) \right)\d\pi\right|\\
&\overset{\eqref{eq:AssumptionPiBounded}}{\leq}C_\pi \int \frac{1}{2}\Big|\mathcal{R}_k(f_{1})^2 - \mathcal{R}_k(f_{2})^2 \Big| + \Big|\log(g(\partial_{k} f_{1})) - \log(g(\partial_{k}f_{2}))\Big| \d\eta\\
&\overset{\eqref{eq:lipschitz_logg}}{\leq}  \frac{C_\pi}{2} \|\mathcal{R}_k(f_1)+\mathcal{R}_k(f_2)\|_{L^2_{\eta_{\leq k}}}\|\mathcal{R}_k(f_1)-\mathcal{R}_k(f_2)\|_{L^2_{\eta_{\leq k}}} + C_\pi L \|\partial_{k} f_{1} - \partial_{k}f_{2}\|_{L^2_{\eta_{\leq k}}} \\
&\overset{\eqref{eq:RectifierIsLip}}{\leq } C_\pi \frac{\|\mathcal{R}_k(f_1)\|_{L^2_{\eta_{\leq k}}} + \|\mathcal{R}_k(f_2)\|_{L^2_{\eta_{\leq k}}}}{2} C\|f_1-f_2\|_{V_k} + C_\pi L \|f_{1} - f_{2}\|_{V_k}.
\end{align*}
This shows that $\mathcal{L}_{k}\colon V_k\rightarrow \R$ is continuous.
To show that $\mathcal{L}_{k}$ is differentiable, we let $f,\varepsilon\in V_k$ so that
\begin{align*}
 \mathcal{L}_{k}(f+\varepsilon)
 &= \int \left(\frac{1}{2}\mathcal{R}_k(f+\varepsilon)^2 - \log(\partial_{k} \mathcal{R}_k(f+\varepsilon)) \right)\d\pi \\
 &= \int \left(\frac{1}{2}\left(f(\bx_{<k},0) + \varepsilon(\bx_{<k},0) + \int_0^{x_k} g\big(\partial_k (f+\varepsilon)(\bx_{<k},t)\big)\mathrm{d}t\right)^2 - \log\circ g(\partial_{k}(f+\varepsilon)(\bx)) \right)\pi(\bx)\d\bx \\
 &= \int \left(\frac{1}{2}\mathcal{R}_k(f)(\bx)^2 + \mathcal{R}_k(f)(\bx)\left(\varepsilon(\bx_{<k},0) + \int_0^{x_k} g'\big(\partial_k f(\bx_{<k},t)\big)\partial_k \varepsilon(\bx_{<k},t)\mathrm{d}t\right) \right)\pi(\bx)\d\bx \\
 &~~- \int \log\circ g(\partial_{k}f) +( \log\circ g)'(\partial_{k}f)\partial_{k}\varepsilon \d\pi + \mathcal{O}(\|\varepsilon\|_{V_k}^2) \\
 &= \mathcal{L}_{k}(f) + \ell(\varepsilon) + \mathcal{O}(\|\varepsilon\|_{V_k}^2)
\end{align*}
where $\ell\colon V_k\rightarrow\R$ is the linear form defined by
$$
 \ell(\varepsilon) = \int \mathcal{R}_k(f)(\bx)\left(\varepsilon(\bx_{<k},0) + \int_0^{x_k} g'\big(\partial_k f(\bx_{<k},t)\big)\partial_k \varepsilon(\bx_{<k},t)\mathrm{d}t\right) - (\log\circ g)'(\partial_{k}f(\bx))\partial_{k}\varepsilon(\bx) ~\pi(\bx)\d\bx.
$$
If $\ell$ is continuous, meaning if there exists a constant $C_\ell$ such that $|\ell(\varepsilon)|\leq C_\ell \|\varepsilon\|_{V_k}$ for any $\varepsilon\in V_k$, then the Riesz representation theorem states that there exists a vector $\nabla\mathcal{L}_k(f)\in V_k$ such that $\ell(\varepsilon)=\langle \nabla\mathcal{L}_k(f),\varepsilon\rangle_{V_k}$. This proves $\mathcal{L}_k$ is differentiable everywhere.

To show that $\ell$ is continuous, we write
\begin{align*}
 |\ell(\varepsilon)|
 &\overset{\eqref{eq:AssumptionPiBounded}}{\leq } C_\pi\int \Big|\mathcal{R}_k(f)(\bx)\left(\varepsilon(\bx_{<k},0) + \int_0^{x_k} g'\big(\partial_k f(\bx_{<k},t)\big)\partial_k \varepsilon(\bx_{<k},t)\mathrm{d}t\right) \Big| \eta_{\leq k}(\bx)\d\bx \\
 &~~+ C_\pi\int\Big|(\log\circ g)'(\partial_{k}f(\bx))\partial_{k}\varepsilon(\bx) \Big|\eta_{\leq k}(\bx)\d\bx \\
 &\overset{\eqref{eq:lipschitz_logg}}{\leq} C_\pi \|\mathcal{R}_k(f)\|_{L^2_{\eta_{\leq k}}} \sqrt{\int \Big|\varepsilon(\bx_{<k},0) + \int_0^{x_k} g'\big(\partial_k f(\bx_{<k},t)\big)\partial_k \varepsilon(\bx_{<k},t)\mathrm{d}t \Big|^2 \eta_{\leq k}(\bx)\d\bx }+ C_\pi L\|\partial_{k}\varepsilon \|_{L_{\eta_{\leq k}}^2} \\
 &\overset{\eqref{eq:lipschitz_g}}{\leq} C_\pi \|\mathcal{R}_k(f)\|_{L^2_{\eta_{\leq k}}} \sqrt{2 C_T\|\varepsilon\|_{V_k}^2 +2C_H L^2\int \big| \partial_k \varepsilon(\bx)) \big|^2 \eta_{\leq k}(\bx)\d\bx  } + C_\pi L\|\partial_{k}\varepsilon \|_{L_{\eta_{\leq k}}^2} \\
 &\;\leq\; C_\pi \Big( \|\mathcal{R}_k(f)\|_{L^2_{\eta_{\leq k}}} \sqrt{2 C_T + 2C_H L^2} + L \Big) \|\varepsilon\|_{V_k},
\end{align*}
where the second last inequality also uses Proposition~\ref{thm:trace} and Lemma~\ref{lemma:hardy_application}. This concludes the proof.
\end{proof}

\subsection{Proof of the local Lipschitz regularity \eqref{eq:Lipschitz_gradient}} \label{proof:Lipschitz_gradient}

\begin{proposition}
In addition to the assumptions of Theorem~\ref{prop:finite_objective}, we further assume there exists a constant $L<\infty$ such that for all $\xi,\xi'\in \R$ we have
\begin{align}
    |g'(\xi)-g'(\xi')| &\leq L |\xi-\xi'| \label{eq:gPrimeLip}\\
    |(\log\circ g)'(\xi)-(\log\circ g)'(\xi')| &\leq L |\xi-\xi'|. \label{eq:loggPrimeLip}
\end{align} %
Then there exists $M<\infty$ such that
$$
 \|\nabla \mathcal{L}_k(f_1)-\nabla\mathcal{L}_k(f_2) \|_{V_k} \leq M(1+\| \Rectifier_k(f_2)\|_{V_k}) \| f_1 - f_2 \|_{\overline{V}_k}  ,
$$
for any $f_1,f_2 \in \overline{V}_k$, where $\overline{V}_k = \{f \in V_k, \partial_k f \in L^\infty\}$ is the space endowed with the norm $\| f \|_{\overline{V}_k} = \|f\|_{V_k} + \|\partial_k f \|_{L^\infty}$.

\end{proposition}
\begin{proof}
    Recall the definition \eqref{eq:gradL} of $\nabla \mathcal{L}_k(f)$ 
    \begin{align*}
         \langle \nabla\mathcal{L}_k(f) , \varepsilon \rangle_{V_k} 
         &= \int \mathcal{R}_k(f)(\bx)\left(\varepsilon(\bx_{<k},0) + \int_0^{x_k} g'\big(\partial_k f(\bx_{<k},t)\big)\partial_k \varepsilon(\bx_{<k},t)\mathrm{d}t\right) \pi(\bx)\d\bx \\
         &- \int(\log\circ g)'(\partial_{k}f(\bx))\partial_{k}\varepsilon(\bx) \pi(\bx)\d\bx.
    \end{align*}
    Then for any $f_1,f_2\in \overline{V}_k$. we can write
    $$
    \langle  \nabla \mathcal{L}_k(f_1)-\nabla\mathcal{L}_k(f_2) , \varepsilon \rangle_{V_k} 
    = A+B+C+D ,
    $$
    where
    \begin{align*}
        A&= \int  \Big(\Rectifier_k(f_1)(\bx) - \Rectifier_k(f_2)(\bx) \Big)\varepsilon(\bx_{<k},0) \pi(\bx) \d\bx \\
        B&=\int  \Big(\Rectifier_k(f_1)(\bx) - \Rectifier_k(f_2)(\bx)\Big)\left(\int_0^{x_k} g'(\partial_k f_1(\bx_{<k},t)) \partial_k \varepsilon(\bx_{<k},t) \d t \right)\pi(\bx) \d\bx  \\
        C&= \int  \Rectifier_k(f_2)(\bx)\left(\int_0^{x_k} \Big(g'(\partial_k f_1(\bx_{<k},t)) - g'(\partial_k f_2(\bx_{<k},t)) \Big) \partial_k \varepsilon(\bx_{<k},t) \d t  \right)\pi(\bx) \d\bx\\
        D&= \int  \Big( (\log \circ g)'(\partial_k f_1(\bx)) - (\log \circ g)'(\partial_k f_2(\bx)) \Big)\partial_k \varepsilon(\bx)   \pi(\bx) \d\bx. 
    \end{align*}
    For the first term $A$ we write
    \begin{align*}
        |A|
        &\overset{\eqref{eq:AssumptionPiBounded}}{\leq } C_\pi \int  \Big|\Rectifier_k(f_1)(\bx) - \Rectifier_k(f_2)(\bx) \Big| |\varepsilon(\bx_{<k},0)| \eta(\bx) \d\bx \\
        &\;\leq\; C_\pi \|\Rectifier_k(f_1) - \Rectifier_k(f_2)\|_{V_k} \left( \int |\varepsilon(\bx_{<k},0)|^2 \eta_{<k}(\bx)\d\bx \right)^{1/2} \\
        &\overset{\eqref{eq:trace_theorem}}{\leq } C_\pi \sqrt{C_T} \|\Rectifier_k(f_1) - \Rectifier_k(f_2)\|_{V_k} \|\varepsilon \|_{V_k}  \\
        &\overset{\eqref{eq:RectifierIsLip}}{\leq } C_\pi \sqrt{C_T} C \| f_1 - f_2 \|_{V_k} \|\varepsilon \|_{V_k}.
    \end{align*}
    For the second term $B$ we write
    \begin{align*}
        |B|
        &\overset{\eqref{eq:AssumptionPiBounded}}{\leq } C_\pi \|\Rectifier_k(f_1) - \Rectifier_k(f_2)\|_{V_k}
        \left(\int  \left(\int_0^{x_k} g'(\partial_k f_1(\bx_{<k},t)) \partial_k \varepsilon(\bx_{<k},t) \d t \right)^2\eta(\bx) \d\bx \right)^{1/2}\\
        &\overset{\substack{\eqref{eq:Hardy} \\ \eqref{eq:RectifierIsLip}}}{\leq } C_\pi \sqrt{C_H} C \| f_1-f_2\|_{V_k}
        \Big(\int  \left(g'(\partial_k f_1(\bx_{\leq k})) \partial_k \varepsilon(\bx_{\leq k}) \Big)^2\eta(\bx) \d\bx \right)^{1/2} \\
        &\overset{\eqref{eq:lipschitz_g}}{\leq } C_\pi \sqrt{C_H}  C L \| f_1-f_2\|_{V_k}
        \Big(\int  \left( \partial_k \varepsilon(\bx_{\leq k}) \Big)^2\eta(\bx) \d\bx \right)^{1/2}\\
        &\;\leq\;  C_\pi \sqrt{C_H}  C L\| f_1-f_2\|_{V_k} \|\varepsilon\|_{V_k} .
    \end{align*}
    For the third term $C$ we write
    \begin{align*}
        |C|
        &\overset{\eqref{eq:AssumptionPiBounded}}{\leq } C_\pi \| \Rectifier_k(f_2)\|_{V_k}
        \left(\int  \left(\int_0^{x_k} \Big(g'(\partial_k f_1(\bx_{<k},t)) - g'(\partial_k f_2(\bx_{<k},t)) \Big) \partial_k \varepsilon(\bx_{<k},t) \d t  \right)^2\eta(\bx) \d\bx \right)^{1/2} \\
        &\overset{\eqref{eq:Hardy}}{\leq } C_\pi \sqrt{C_H} \| \Rectifier_k(f_2)\|_{V_k}
        \left(\int  \left(  \Big(g'(\partial_k f_1(\bx_{\leq k})) - g'(\partial_k f_2(\bx_{\leq k})) \Big) \partial_k \varepsilon(\bx_{\leq k}) \right)^2\eta(\bx) \d\bx \right)^{1/2} \\
        &\;\leq\; C_\pi \sqrt{C_H} \| \Rectifier_k(f_2)\|_{V_k}
        \left(\esssup \Big| g'\circ \partial_k f_1 - g'\circ \partial_k f_2 \Big| \right)
        \left(\int  \left(   \partial_k \varepsilon(\bx_{\leq k}) \right)^2\eta(\bx) \d\bx \right)^{1/2} \\
        &\overset{\eqref{eq:gPrimeLip}}{\leq } C_\pi \sqrt{C_H} L \| \Rectifier_k(f_2)\|_{V_k}
        \left(\esssup \Big| \partial_k f_1 - \partial_k f_2 \Big| \right) \|\varepsilon\|_{V_k} \\
        &\;\leq\; C_\pi \sqrt{C_H} L \| \Rectifier_k(f_2)\|_{V_k}
        \| f_1 - f_2 \|_{\overline{V}_k}  \|\varepsilon\|_{V_k}.
    \end{align*}
    For the last term $D$ we write
    \begin{align*}
        |D| 
        &\overset{\eqref{eq:AssumptionPiBounded}}{\leq } C_\pi \left( \int  \Big( (\log \circ g)'(\partial_k f_1(\bx)) - (\log \circ g)'(\partial_k f_2(\bx)) \Big)^2  \eta(\bx) \d\bx \right)^{1/2} \|\varepsilon\|_{V_k} \\
        &\overset{\eqref{eq:loggPrimeLip}}{\leq } C_\pi L \left( \int  \Big( \partial_k f_1(\bx) -  \partial_k f_2(\bx) \Big)^2   \eta(\bx) \d\bx \right)^{1/2} \|\varepsilon\|_{V_k} \\
        &\;\leq\; C_\pi L \|f_1-f_2\|_{V_k} \|\varepsilon\|_{V_k} .
    \end{align*}
    Thus, because $\|f_1-f_2\|_{V_k}  \leq \|f_1-f_2\|_{\overline{V}_k}  $ we obtain 
    \begin{align*}
     \frac{|\langle  \nabla \mathcal{L}_k(f_1)-\nabla\mathcal{L}_k(f_2) , \varepsilon \rangle_{V_k} |}{\|\varepsilon\|_{V_k}} 
     &\leq 
     C_\pi\Big( \sqrt{C_T} C + \sqrt{C_H}  C L +  \sqrt{C_H} L \| \Rectifier_k(f_2)\|_{V_k}+ L\Big)\| f_1 - f_2 \|_{\overline{V}_k} 
    \\
    &\leq M(1+\| \Rectifier_k(f_2)\|_{V_k}) \| f_1 - f_2 \|_{\overline{V}_k} ,
    \end{align*}
    where
    $$
     M = C_\pi \max\{  \sqrt{C_T} C + \sqrt{C_H}  C L + L;  \sqrt{C_H} L \}.
    $$
    This concludes the proof.
\end{proof}

\subsection{Proof of Proposition~\ref{prop:Convexity_ImR}} \label{proof:ConvexityImageR}
\begin{proof}
To show that $\Rectifier_k(V_k)=\{\Rectifier(f):f\in V_k\}$ is convex, let $f_1,f_2\in V_k$ and $0\leq \alpha\leq 1$. We need to show that there exists $f_\alpha\in V_k$ such that $\Rectifier(f_\alpha) = S_\alpha$ where
$$
 S_\alpha \coloneqq \alpha \Rectifier_k(f_1) + (1 - \alpha)\Rectifier_k(f_2).
$$
Let 
\begin{align}
f_\alpha(\bx_{\leq k}) \coloneqq \Rectifier^{-1}(S_\alpha)(\bx_{\leq k}) = S_\alpha(\bx_{<k},0) + \int_0^{x_k} g^{-1}(\partial_k S_\alpha(\bx_{<k},t)) \textrm{d}t. \nonumber 
\end{align}
It remains to show that $f_\alpha\in V_k$, meaning that $f_\alpha\in L^2_{\eta_{\leq k}}$ and $\partial_k f_\alpha\in L^2_{\eta_{\leq k}}$.
By convexity of $\xi\mapsto g^{-1}(\xi)^2$, we have
\begin{align}
    \|\partial_k f_\alpha\|_{L^2_{\eta_{\leq k}}}^2 
    &= \int g^{-1}\big(\alpha \partial_k \mathcal{R}_k(f_1) + (1 - \alpha) \partial_k \mathcal{R}_k(f_2)\big)^2 \textrm{d}\eta_{\leq k} \nonumber \\
    &= \int g^{-1}\big(\alpha g(\partial_k f_1) + (1 - \alpha) g(\partial_k f_2)\big)^2 \textrm{d}\eta_{\leq k} \nonumber \\
    &\leq \int \alpha g^{-1}(g(\partial_k f_1))^2 + (1 - \alpha) g^{-1}(g(\partial_k f_2))^2 \textrm{d}\eta_{\leq k} \nonumber \\
    &= \alpha \|\partial_k f_1 \|^2_{L^2_{\eta_{\leq k}}} + (1 - \alpha) \| \partial_k f_2 \|^2_{L^2_{\eta_{\leq k}}} . \label{eq:L2_partial_falpha}
\end{align}
Thus $\partial_k f_\alpha \in L^2_{\eta_{\leq k}}$. 
Furthermore we have
\begin{align}
    \|f_\alpha\|_{L^2_{\eta_{\leq k}}}^2
    &= \int \left(S_\alpha(\bx_{<k},0) + \int_0^{x_k} g^{-1}(\partial_k S_\alpha(\bx_{<k},t)) \textrm{d}t \right)^2 \eta_{\leq k}(\bx) \textrm{d}\bx \nonumber \\
    &\leq 2 \int  S_{\alpha}(\bx_{<k},0)^2 \eta_{<k}(\bx_{<k}) \textrm{d}\bx + 2 \int \left(\int_0^{x_k} g^{-1}(\partial_k S_\alpha(\bx_{<k},t)) \textrm{d}t \right)^2  \eta_{\leq k}(\bx) \textrm{d}\bx. \nonumber
\end{align}
To show that the above quantity is finite, Proposition~\ref{thm:trace} permits us to write
\begin{align*}
    \int  S_{\alpha}(\bx_{<k},0)^2 \eta_{<k}(\bx_{<k}) \textrm{d}\bx
    &= \int  \Big( \alpha f_1(\bx_{<k},0) + (1 - \alpha) f_2(\bx_{<k},0) \Big)^2 \eta_{<k}(\bx_{<k}) \textrm{d}\bx \\
    &\leq C_T \| \alpha f_1  + (1 - \alpha) f_2  \|^2_{V_k} ,
\end{align*}
which is finite.
Finally, because $g^{-1}(\partial_k S_\alpha) =\partial_k f_\alpha  \in L^2_{\eta_{\leq k}}$ by~\eqref{eq:L2_partial_falpha}, Lemma \ref{lemma:hardy_application} yields
\begin{align*}
 \int \left(\int_0^{x_k} g^{-1}(\partial_k S_\alpha(\bx_{<k},t)) \textrm{d}t \right)^2  \eta_{\leq k}(\bx) \d\bx 
 &\leq C_{H} \int g^{-1}(\partial_k S_\alpha(\bx_{\leq k}))^2 \eta_{\leq k}(\bx) \d\bx \\
 &= C_{H} \| \partial_k f_\alpha \|_{L^2_{\eta_{\leq k}}}^2,
\end{align*}
which is finite. We deduce that $f_\alpha \in L^2_{\eta_{\leq k}}$ and therefore that $f_\alpha\in V_k$.
\end{proof}

\subsection{Proof of Proposition~\ref{prop:continuity_Rinv}} \label{proof:continuity_Rinv}
\begin{proof}  %
Let $s_1,s_2 \in V_{k}$ be strictly increasing functions with respect to $x_k$ that satisfy $\partial_k s_i(\bx_{\leq k}) \geq c$ for $i=1,2$ and all $\bx_{\leq k} \in \R^{k}$. By the Lipschitz property of $g^{-1}$ on the domain $[c,\infty)$ with constant $L_c$, we can write
\begin{align}
    \|\partial_k \Rectifier_k^{-1}(s_1) - \partial_k \Rectifier_k^{-1}(s_2) \|_{L^2_{\eta_{\leq k}}}^2 &= \int (g^{-1}(\partial_k s_1(\bx_{\leq k})) - g^{-1}(\partial_k s_2(\bx_{\leq k})))^2 \eta_{\leq k}(\bx) \textrm{d}\bx  \nonumber \\
    &\leq L_{c}^2 \int (\partial_k s_1(\bx_{\leq k}) - \partial_k s_2(\bx_{\leq k}))^2 \eta_{\leq k}(\bx) \textrm{d}\bx \nonumber \\
    &\leq L_{c}^2 \|s_1 - s_2 \|_{V_k}^2. \label{eq:continuity_derivative_inverseR}
\end{align}

Applying Proposition~\ref{thm:trace} to $s_1,s_2 \in V_k$ and Lemma~\ref{lemma:hardy_application} to $\partial_k \Rectifier_k^{-1}(s_i) = g^{-1}(\partial_k s_i) \in L_{\eta_{\leq k}}^2$ for $i=1,2$ we have
\begin{align}
\| \Rectifier_k^{-1}(s_1) - \Rectifier_k^{-1}(s_2) \|_{L^2_{\eta_{\leq k}}}^2 &\leq 2 \int \left(s_1(\bx_{<k},0) - s_2(\bx_{<k},0) \right)^2 \eta_{\leq k}(\bx)d\bx \nonumber \\
&\quad+ 2\int \left(\int_0^{x_k} g^{-1}(\partial_k s_1) - g^{-1}(\partial_k s_2) \textrm{d}t \right)^2 \eta_{\leq k}(\bx) \textrm{d}\bx \nonumber \\
&\leq 2 C_T \|s_1 - s_2 \|_{V_k}^2 \nonumber \\
&\quad+ 2 C_H \int (g^{-1}(\partial_k s_1(\bx_{\leq k})) - g^{-1}(\partial_k s_2(\bx_{\leq k})))^2 \eta_{\leq k}(\bx) \textrm{d}\bx \nonumber \\ %
&\leq (2C_T + 2C_H L_{c}^2)\|s_1 - s_2 \|_{V_k}^2, \label{eq:continuity_inverseR}
\end{align}
where the last inequality follows from~\eqref{eq:continuity_derivative_inverseR}. %

It remains to show that $\|\Rectifier_k^{-1}(s)\|_{V_k} < \infty$ for any $s \in V_{k}$ such that $\essinf \partial_k s > 0$. Letting $s_1 = s$ and $s_2 = g(0)x_k$, the triangle inequality combined with~\eqref{eq:Lipschitz_Rinv} yields
\begin{equation*}
    \| \Rectifier_k^{-1}(s) \|_{V_k} \leq  \| \Rectifier_k^{-1}(g(0)x_k) \|_{V_k} + C_{c} \|s - g(0)x_k \|_{V_k}.
\end{equation*}
The function $\Rectifier_k^{-1}(g(0)x_k)$ is zero. Therefore, $\| \Rectifier_k^{-1}(s) \|_{V_k} \leq C_c\|s - g(0)x_k \|_{V_k} \leq C_c(\| s \|_{V_k} + \|g(0)x_k \|_{V_{k}})$. For a linear function, $\|g(0)x_k \|_{V_k}^2 = \|g(0)x_k \|^2_{L^2_{\eta_{\leq k}}} + \|g(0) \|^2_{L^2_{\eta_{\leq k}}} = 2g(0)^2$ is finite, and so $\| \Rectifier_k^{-1}(s) \|_{V_k} < \infty$ for $s \in V_k$. Furthermore, if $\partial_k s \geq c > 0$, then $\partial_k \Rectifier_k^{-1}(s) = g^{-1}(\partial_k s) \geq g^{-1}(c) > -\infty$ and so $\essinf \Rectifier_k^{-1}(s) > -\infty$. %
\end{proof}

\subsection{Proof for the KR rearrangement} 
\label{proof:KRrearrangementTail}

\begin{proof} 
Let $S_{\KR,k}$ be the $k$th component of the KR rearrangement, %
given by composing the inverse CDF of the standard Gaussian marginal $F_{\eta,k}(x_k)$ with the CDF of the target's $k$th marginal conditional $F_{\pi_k}(x_k|\bx_{<k})$.
That is, %
\begin{equation}\label{eq:tmp3850789274}
    S_{\KR,k}(\bx_{\leq k}) = F_{\eta_k}^{-1} \circ F_{\pi_k}(x_k|\bx_{<k}). 
\end{equation}
The goal is to show $S_{\KR,k} \in V_k$, that is, $S_{\KR,k} \in L^2_{\eta_{\leq k}}$ and $\partial_k S_{\KR,k} \in L^2_{\eta_{\leq k}}$. 

First we show $S_{\KR,k} \in L^2_{\eta_{\leq k}}$. From condition~\eqref{eq:margConditional_bothTailsBounded}, we have $F_{\eta_k}^{-1}(C_1 F_{\eta_k}(x_k)) \leq S_{\KR,k}(x_k|\bx_{<k}) \leq F_{\eta_k}^{-1}(C_2 F_{\eta_k}(x_k))$ for some constants $C_1,C_2 > 0$ so that
\begin{equation}\label{eq:tmp468923}
    S_{\KR,k}(x_k|\bx_{<k})^2 \leq \max\{ F_{\eta_k}^{-1}(C_1 F_{\eta_k}(x_k))^2 ; F_{\eta_k}^{-1}(C_2 F_{\eta_k}(x_k))^2 \},
\end{equation}
for all $\bx_{<k} \in \R^{k-1}$. To show that $S_{\KR,k} \in L^2_{\eta_{\leq k}}$, it is sufficient to prove that any function of the form $x_k\mapsto F_{\eta_k}^{-1}( C F_{\eta_k}(x_k))$ is in $ L^2_{\eta_{\leq k}}$ for any $C>0$.
From Theorems 1 and 2 in~\cite{chang2011chernoff}, there exists strictly positive constants $\alpha_i,\beta_i > 0$ for $i=1,2$ such that 
\begin{equation} \label{eq:bounds_Gaussian_cdf}
    1 - \alpha_1 \exp(-\beta_1 x_k^2) \leq F_{\eta_k}(x_k) \leq 1 - \alpha_2 \exp(-\beta_2 x_k^2),
\end{equation}
for $x_k > 0$. With a change of variable $u=F_{\eta_k}(x_k)$ we obtain $F^{-1}_{\eta_k}(u)^2 \leq 1/\beta_{2} \log(\alpha_2/(1-u))$ for all $u > F_{\eta_k}(0) = 1/2$. Letting $u=C F_{\eta_k}(x_k)$ yields
\begin{align*} 
  F^{-1}_{\eta_k}( C F_{\eta_k}(x_k) )^2 
  &\;\leq\; \frac{1}{\beta_{2}} \log\left(\frac{\alpha_2}{C F_{\eta_k}(x_k)}\right),\nonumber \\
  &\overset{\eqref{eq:bounds_Gaussian_cdf}}{\leq}  \frac{1}{\beta_{2}} \log\left(\frac{\alpha_2}{C \alpha_2 \exp(-\beta_2 x_k^2)}\right) \nonumber\\ \label{eq:bounds_inverse_Gaussian_cdf}
  &\;=\;  \frac{1}{\beta_{2}} \log\left(\frac{1}{C}\right) + x_k^2
\end{align*} 
for all $x_k > \max\{ F_{\eta_k}^{-1}(1/(2C)) , 0\}$.
Using the same argument, we obtain a similar bound on $F^{-1}_{\eta_k}( C F_{\eta_k}(x_k) )^2$ for all $x_k$ smaller than a certain value. Together with the continuity of $x_k\mapsto F^{-1}_{\eta_k}( C F_{\eta_k}(x_k) )^2$ these bounds ensure that $x_k\mapsto F^{-1}_{\eta_k}( C F_{\eta_k}(x_k) )$ is in $L^2_{\eta_{\leq k}}$ for any $C$. Then $S_{\KR,k} \in L^2_{\eta_{\leq k}}$. Furthermore, we have  $S_{\KR,k}(\bx_{\leq k}) = \mathcal{O}(x_k)$ as $|x_k| \rightarrow \infty$.

Now we show that $\partial_k S_{\KR,k} \in L^2_{\eta_{\leq k}}$ by showing $\partial_k S_{\KR,k}$ is a continuous and bounded function. From the absolute continuity of $\bmu$ and $\bnu$, we have that  %
\begin{equation} \label{eq:KRmapderivative}
    \partial_k S_{\KR,k}(\bx_{\leq k}) =  \frac{\pi_k(x_k|\bx_{<k})}{\eta_k(S_{\KR,k}(\bx_{\leq k}))} =  \frac{\pi_k(F_{\pi_k}^{-1}(F_{\pi_k}(x_k|\bx_{<k})|\bx_{<k})|\bx_{<k})}{\eta_k(F_{\eta_k}^{-1} \circ F_{\pi_k}(x_k|\bx_{< k}))},  %
\end{equation}
is continuous, where $F_{\pi_k}^{-1}(\cdot|\bx_{<k})$ denotes the inverse of the map $x_k \mapsto F_{\pi_k}(x_k|\bx_{<k})$ for each $\bx_{<k} \in \R^{k-1}$. %
Hence, it is sufficient to show that $\partial_k S_{\KR,k}$ goes to a finite limit as $|x_k| \rightarrow \infty$. %
For the right-hand limit, we can write
\begin{align} \label{eq:limit_left_tail}
    \lim_{x_k \rightarrow \infty} \partial_k S_{\KR,k}(\bx_{\leq k}) &= \lim_{u \rightarrow 1^{-}} \frac{\pi_{k}(F_{\pi_k}^{-1}(u|\bx_{<k})|\bx_{<k})}{\eta_k(F_{\eta_k}^{-1}(u))} \nonumber \\
    &= \lim_{u \rightarrow 1^{-}} \frac{(F_{\eta,k}^{-1})'(u)}{(F_{\pi_k}^{-1})'(u|\bx_{<k})} \nonumber \\
    &= \lim_{u \rightarrow 1^{-}} \frac{F_{\eta_k}^{-1}(u)}{F_{\pi_k}^{-1}(u|\bx_{<k})},
\end{align}
where in the second equality we used the inverse function theorem and the third equality follows from l'H\^{o}pital's rule. 
To analyze the ratio $F_{\eta_k}^{-1}(u)/F_{\pi_k}^{-1}(u|\bx_{<k})$, we combine the lower bound in~\eqref{eq:margConditional_bothTailsBounded} and the bounds in~\eqref{eq:bounds_Gaussian_cdf} to get
\begin{equation*}
    \frac{F_{\eta_k}^{-1}(u)}{F_{\pi_k}^{-1}(u|\bx_{<k})} \leq \frac{F_{\eta_k}^{-1}(u)}{F_{\eta_k}^{-1}((u-1)/C_1 + 1)} \leq  \sqrt{\frac{\beta_1(\log\alpha_2 - \log(1-u))}{\beta_2(\log\alpha_1 - \log((1-u)/C_1))}}.
\end{equation*}
Similarly, from the upper bound in~\eqref{eq:margConditional_bothTailsBounded} and the bounds in~\eqref{eq:bounds_Gaussian_cdf}, we have
\begin{equation*}
    \frac{F_{\eta_k}^{-1}(u)}{F_{\pi_k}^{-1}(u|\bx_{<k})} \geq \frac{F_{\eta_k}^{-1}(u)}{F_{\eta_k}^{-1}((u-1)/C_2 + 1)} \geq \sqrt{\frac{\beta_2(\log\alpha_1 - \log(1-u))}{\beta_1(\log\alpha_2 - \log((1-u)/C_2))}}.
\end{equation*}
Thus, $\partial_k S_{\KR,k}(\bx_{\leq k}) = \mathcal{O}(1)$ as $x_k \rightarrow \infty$, 
and we have $\partial_k S_{\KR,k} \in L^2_{\eta_k}$.

Lastly, taking the limit in~\eqref{eq:limit_left_tail} we have %
$\lim_{x_k \rightarrow \infty} \partial_k S_{\KR,k}(\bx_{\leq k}) \geq \sqrt{\beta_2/\beta_1}$. For a target distribution $\pi$ with full support, all marginal conditional densities satisfy $\pi_k(x_k|\bx_{<k}) > 0$ for each $\bx_{\leq k} \in \R^{k}$. Given that the $\partial_k S_{\KR,k}$ does not approach zero as $|x_k| \rightarrow \infty$, we can find %
a strictly positive constant $c_k > 0$ such that $\partial_k S_{\KR,k}(\bx_{\leq k}) \geq c_k$ for all $\bx_{\leq k} \in \mathbb{R}^k$. 
This shows that $\essinf \partial_k S_{\KR,k} > 0$. 
\end{proof}

\section{Multi-index refinement for the wavelet basis} \label{subsec:multi_index_wavelet}
In this section we show how to greedily enrich the index set $\Lambda_t$ for a one-dimensional wavelet basis parameterized by the tuple of indices $(l,q)$ representing the level $l$ and translation $q$ of each wavelet $\psi_{(l,q)}$. To define the allowable indices, we construct a binary tree where each node is indexed by $(l,q)$ and has two children with indices $(l+1,2q)$ and $(l+1,2q+1)$. The root of the tree has index $(0,0)$ and corresponds to the mother wavelet $\psi$. Analogously to the downward closed property for polynomial indices, we only add nodes to the tree (i.e., indices in $\Lambda_t$) if its parents have already been added. Given any set $\Lambda_t$, we define its reduced margin as %
$$\Lambda_{t}^{\text{RM}} = \left\{\alpha=(l,q) \not\in \Lambda_t \text{ such that } \begin{array}{ll} (l-1,q/2) \in \Lambda_t & \text{ for odd } q \\ (l-1,(q-1)/2) \in \Lambda_t & \text{ for even } q \end{array} \right\}.$$
Then, the \ALG algorithm with a wavelet basis follows from  Algorithm~\ref{alg:learn-map} with this construction for the reduced margin at each iteration.

\section{Architecture details of alternative methods} \label{app:architectures}
In this section we present the details of the alternative methods to \ALG that we consider in Section~\ref{sec:experiments}. %

For each normalizing flow model, we use the recommended stochastic gradient descent optimizer with a learning rate of $10^{-3}$. We partition 10\% of the samples in each training set to be validation samples and use the remaining samples for training the model. We select the optimal hyper-parameters for each dataset by fitting the density with the training data and choosing the parameters that minimize the negative log-likelihood of the approximate density on the validation samples. We also use the validation samples to set the termination criteria during the optimization.

We follow the implementation of~\cite{rothfuss2019conditional} to define the architectures of these models. The hyper-parameters we consider for the neural networks in the \MDN and \NF models are: $2$ hidden layers, $32$ hidden units in each layer, $\{5,10,20,50,100\}$ centers or flows, weight normalization, and a dropout probability of $\{0,0.2\}$ for regularizing the neural networks during training. For \CKDE and \NKDE we select the bandwidth of the kernel estimators using $5$-fold cross-validation.

\def\bibfont{\small}
\bibliographystyle{imsart-number}
\bibliography{references}  

\end{document}